\documentclass[hidelinks,onefignum,onetabnum]{siamart251216}
\usepackage{mathtools}
\usepackage{enumitem}
\usepackage{subcaption}
\usepackage{tikz}
\usetikzlibrary{arrows,matrix,fit,positioning}

\usepackage{lipsum}
\usepackage{amsfonts}
\usepackage{graphicx}
\usepackage{epstopdf}
\usepackage{algorithmic}
\ifpdf
  \DeclareGraphicsExtensions{.eps,.pdf,.png,.jpg}
\else
  \DeclareGraphicsExtensions{.eps}
\fi

\newsiamremark{remark}{Remark}
\newsiamremark{hypothesis}{Hypothesis}
\crefname{hypothesis}{Hypothesis}{Hypotheses}
\newsiamthm{claim}{Claim}
\newsiamremark{fact}{Fact}
\crefname{fact}{Fact}{Facts}

\headers{Structure-Preserving Neural Surrogates with Tractable UQ}{Zhang et al.}

\title{Structure-Preserving Neural Surrogates with Tractable Uncertainty Quantification \thanks{
\funding{This material is based upon work supported by the U.S. Department of Energy, Office of Science, Office of Advanced Scientific Computing Research, under award numbers DE-SC0024563 and DE-SC0023163.}
}}

\author{
Handi Zhang\thanks{Applied Mathematics and Computational Science, University of Pennsylvania, Philadelphia, PA, USA
  (\email{handi@sas.upenn.edu}).}
\and Adrienne M. Propp\thanks{Institute for Computational and Mathematical Engineering, Stanford University, Stanford, CA, USA 
  (\email{propp@stanford.edu}).}
\and Brooks Kinch\thanks{Mechanical Engineering and Applied Mechanics, University of Pennsylvania, Philadelphia, PA, USA
  (\email{kinch@seas.upenn.edu}).}
\and Houman Owhadi\thanks{Department of Computing and Mathematical Sciences, California Institute of Technology, Pasadena, CA, USA
  (\email{owhadi@caltech.edu}).}
\and Nathaniel Trask\thanks{Mechanical Engineering and Applied Mechanics, University of Pennsylvania, Philadelphia, PA, USA
  (\email{ntrask@seas.upenn.edu}).}
}

\usepackage{amsopn}

\begin{document}

\maketitle

\begin{abstract}
Recent advances in scientific machine learning provide a means of near-real-time solution to partial differential equations (PDEs), but lack the theoretical underpinnings of conventional simulators that support contemporary verification and validation. In this work, we construct data-driven reduced-order models that serve as structure-preserving, real-time surrogates. Remarkably, the exterior calculus that imposes physical conservation structure also exposes topological structure that we use to build a Gaussian process (GP) representation of uncertainty in state-flux relationships, ultimately yielding a Dirichlet-to-Neumann map for quantities of interest with closed-form expressions for posterior uncertainty. We specifically propose structure-preserving $H(\mathrm{div})$--$L^2$ subspaces of conventional Raviart--Thomas and $dgP_0$ elements prescribed by a lightweight transformer. Reduced-order dynamics consistent with this subspace are learned by posing a conservation law in which a GP describes the fluxes between volumes. This work hinges on a novel interface between mixed FEM spaces and GP regression; when training is posed as the optimal recovery problem (ORP), the resulting GP regression can be written as an optimization problem with equality constraints that impose a conservation structure, amenable to a fast Schur-complement training strategy. The trained model can then be solved in real time with closed-form estimators for boundary fluxes driven by prescribed Dirichlet data. The paper includes RKHS posterior error bounds for linear functionals to support uncertainty quantification, as well as numerical experiments demonstrating the accuracy of the posterior distribution as a surrogate for error estimation.
\end{abstract}

\begin{keywords}
Scientific machine learning, Whitney forms, finite element exterior calculus, optimal recovery, uncertainty quantification
\end{keywords}

\begin{MSCcodes}
65N30, 81Q30, 68T07, 60G15, 90C70
\end{MSCcodes}

\section{Problem overview and relation to the literature}\label{sec:intro}
Neural operators and other machine learning surrogates are emerging as practical alternatives to classical simulation due to their computational efficiency and ability to generalize across problem instances. However, the black-box nature of these methods remains a major impediment to adoption in scientific and engineering applications and creates challenges for uncertainty quantification (UQ), where errors can propagate through the learned solution operator and further affect downstream predictions without clear traceability~\cite{abdar2021review,NAP29212}. Moreover, for systems governed by partial differential equations (PDEs), the learned surrogate models should also respect underlying physical constraints, such as conservation laws and initial and boundary conditions.

Motivated by this gap, we propose a surrogate framework for PDE-governed systems that supports fast simulation while providing tractable posterior uncertainty. Consider the following abstract problem, which we later elaborate in \Cref{sec:method2}. Let $u \in Q$ be a field on $\Omega \subset \mathbb{R}^d$ (naturally, $Q \subset L^2(\Omega)$), assumed to satisfy an unknown conservation law $\nabla \cdot F(u) = f$, where $F \in V$ is a given flux function (naturally, $V \subset H(\mathrm{div};\Omega)$). We assume access to data consisting of sampled state and flux fields $u$ and $F$. Additionally, we allow for parametric dependence on a conditioning variable $Z$, which may represent constitutive relationships, geometry, or other factors, yielding the training set

\[ \mathcal D_N =\left\{\left(z^{(n)},u^{(n)}_{\text{data}},F^{(n)}_{\text{data}}\right)\right\}_{n=1}^N. \]

Our goal is to identify a functional form for $F$ that is consistent with the available data, supports tractable posterior estimation, and generalizes to boundary data not seen during training. This learning problem must be posed in an appropriate discrete setting so that the resulting surrogate preserves the structure of the underlying PDE. We assume that the flux can be decomposed in the form
\begin{equation}
\mathbf F= -\epsilon \nabla u +\mathcal{N}[u],\qquad \nabla\cdot \mathbf F=f,
\label{eq:flux-ansatz}
\end{equation} 
consisting of a diffusion term with $\epsilon > 0$ for numerical stability and a nonlinear correction $\mathcal{N}$ that identifies a state-to-flux map from data. In previous work we demonstrated how neural operators can serve as $\mathcal{N}$~\cite{kinch2025structure}; here we instead treat $\mathcal{N}$ as a Gaussian process (GP). Classical GPs are most tractable in small-data, low-dimensional settings, and a primary contribution of this work is that the reduced graph representation developed below brings the otherwise infinite-dimensional state-to-flux learning problem into precisely such a regime. 

\begin{figure}[htbp]
    \centering\includegraphics[width=\linewidth]{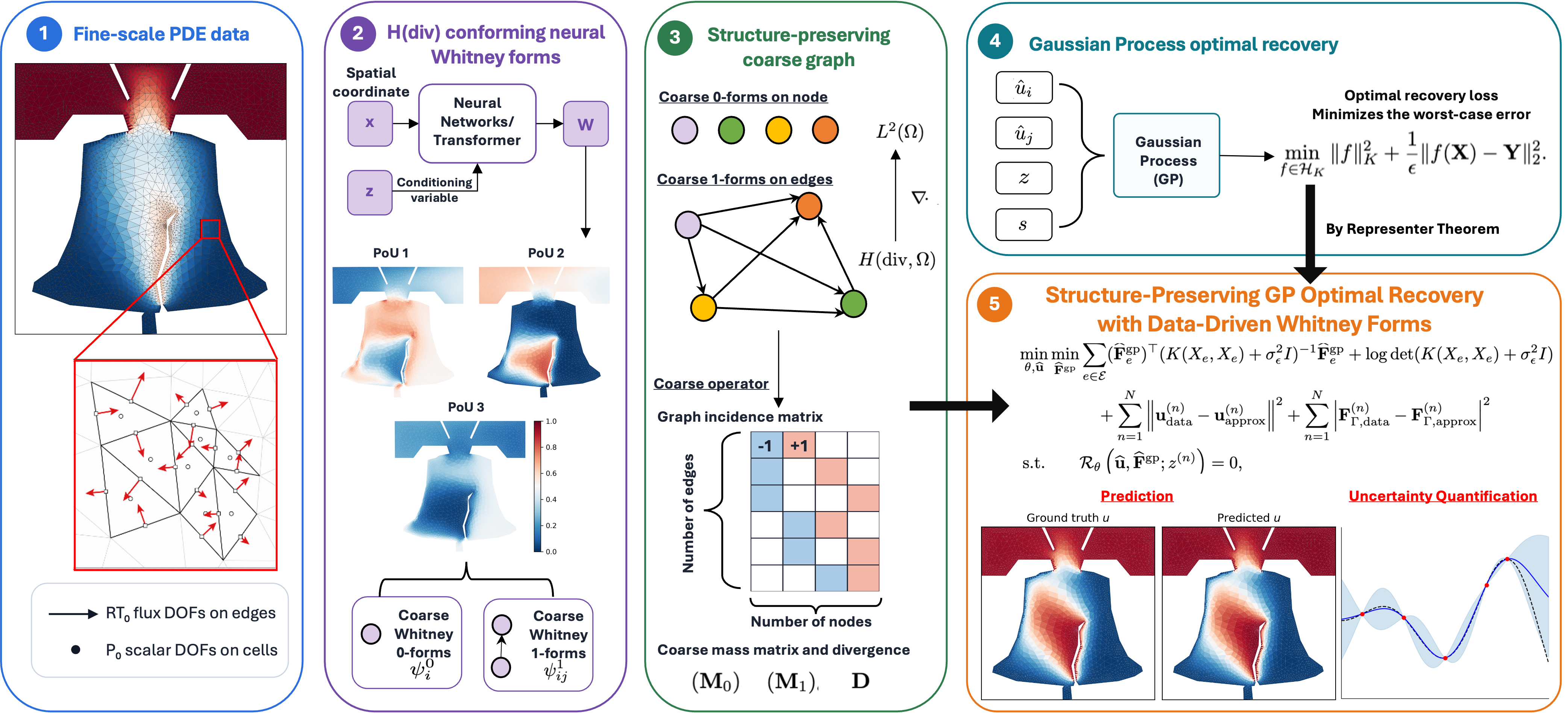}\caption{\textbf{Roadmap to structure-preserving surrogates with quantified uncertainty.} The transformer learns $H(\mathrm{div})$-conforming bases and constructs a conservative coarse graph adapted to the conditioning $Z$ (\textbf{Boxes}\textcircled{1}-\textcircled{3}). The graph edges carry GP models of the state-to-flux laws while conservation is imposed as an exact divergence constraint (\textbf{Box}\textcircled{4}) and the resulting reduced model serves as a Dirichlet-to-Neumann surrogate with posterior error estimates for boundary fluxes (\textbf{Box}\textcircled{5}).}\label{fig:roadmap}
\end{figure}

The proposed method separates the learning task into two parts: a data-driven model mapping the fine-scale space to a coarse-scale space (\textbf{P1}), and a stochastic model for flux correction that supports uncertainty quantification (\textbf{P2}). Because the full construction is technical, we first summarize the salient features of each.

\begin{figure}[htbp]
\centering
\begin{tikzpicture}[
  >=stealth,
  arr/.style={->, shorten >=2pt, shorten <=2pt},
  inc/.style={right hook->, shorten >=2pt, shorten <=2pt},
]
  \matrix (m) [matrix of math nodes, row sep=2.6em, column sep=4.2em,
               nodes={inner sep=2pt}] {
    H(grad) & H(curl) & H(div) & L^2 \\
    \Lambda_0 & \Lambda_1 & \Lambda_2 & \Lambda_3 \\
        &     & V_h^c & Q_h^c \\
  };
  \draw[arr] (m-1-1) -- node[above]{$d$} (m-1-2);
  \draw[arr] (m-1-2) -- node[above]{$d$} (m-1-3);
  \draw[arr] (m-1-3) -- node[above]{$d$} (m-1-4);
  \draw[arr] (m-2-1) -- node[above]{$\nabla$} (m-2-2);
  \draw[arr] (m-2-2) -- node[above]{$\nabla\times$} (m-2-3);
  \draw[arr] (m-2-3) -- node[above]{$\nabla\cdot$} (m-2-4);
  \draw[arr] (m-3-3) -- node[above]{$\nabla\cdot$} (m-3-4);
  \draw[inc] (m-2-1) -- (m-1-1);
  \draw[inc] (m-2-2) -- (m-1-2);
  \draw[inc] (m-2-3) -- (m-1-3);
  \draw[inc] (m-2-4) -- (m-1-4);
  \draw[arr] (m-2-3) -- node[left]{$r_V$} (m-3-3);
  \draw[arr] (m-2-4) -- node[right]{$r_Q$} (m-3-4);
  \node[draw=green!55!black, rounded corners, thick,
        fit=(m-1-3)(m-1-4)(m-3-3)(m-3-4), inner sep=12pt] (box) {};
  \node[black, anchor=north] at (box.south) {\small $H(\mathrm{div})$--$L^2$ subcomplex};
\end{tikzpicture}
\caption{\textbf{Construction of $H(\mathrm{div})$-conforming reduced subspace.} We design a transformer that outputs a subspace of the de Rham complex appropriate for strongly imposing conservation laws. While previous work builds ``bottom-up" dual $\Lambda_0/\Lambda_1$ reduced subcomplexes \cite{actor2024data}, this work provides a ``top-down" primal subcomplex on $\Lambda_d/\Lambda_{d-1}$. A transformer prescribes the restriction map $r_Q$ conditioned on Z, and we provide a compatible construction of $r_V$ that defines coarsened $RT0-dgP_0$ subspaces which preserve surjectivity of the divergence.}
\label{fig:commuting}
\end{figure}

In \textbf{P1}, we work with low-order Whitney forms~\cite{arnold2010finite} that provide conforming finite element spaces encoding the topological and cohomological properties tied to conservation structure. Here the Raviart--Thomas and discontinuous-piecewise-constant ($\mathrm{RT}_0$/$dgP_0$) pair is naturally conforming, with $\mathrm{RT}_0\subset H(\mathrm{div};\Omega)$ and $dgP_0\subset L^2(\Omega)$, and has the surjectivity property $\nabla\cdot:\mathrm{RT}_0\rightarrow dgP_0$. Because this space interpolates cell-based scalar degrees of freedom and facet-based flux degrees of freedom, it provides a discrete divergence theorem, allowing the discrete div/grad matrices to be interpreted as adjacency matrices between cells and facets. This dual interpretation of div/grad is the crucial ingredient for Bayesian analysis on fields: in previous work~\cite{owhadi2022computational,propp2026discovery} we showed how to use optimal recovery to learn circuit models with tractable posteriors, and this linkage lets us extend the analysis to finite element fields.

The goal is to identify reduced-order spaces $Q_h^c(z;\theta)\subset Q^f_h$ and $V_h^c(z;\theta)\subset V^f_h$, where the superscripts $\cdot^f$ and $\cdot^c$ denote fine and coarse spaces and the subscript $\cdot_h$ denotes discretization. To preserve exterior calculus structure in the reduced space, we require a construction in which the restrictions $r_Q:Q^f_h \rightarrow Q^c_h$ and $r_V:V^f_h \rightarrow V^c_h$ commute with the divergence operator, so that $\nabla\cdot V_h^c\subseteq Q_h^c$. To achieve this, we first design a transformer that evaluates $r_Q$ by outputting a coarsening matrix $W$, so that $Q_h^c = \operatorname{span}\left\{\sum_a W_{ia} \chi_a\right\}_i$, where the $\chi_a$ are the indicator functions over the cells spanning $Q_h^f$. We then design $V_h^c$ by identifying coarsened degrees of freedom associated with coarse facets shared between coarse cells. This novel construction represents a ``top-down'' coarsening of the $(\Lambda_d/\Lambda_{d-1})$ subcomplex, in contrast to the ``bottom-up'' coarsening of $(\Lambda_0/\Lambda_{1})$ developed previously~\cite{actor2024data}. \textbf{P1} therefore produces a reduced $H(\mathrm{div})$-conforming finite element space that can be conditioned on $Z$.

In \textbf{P2}, we pose the discrete representation of the physics as an \textit{optimal recovery problem} \cite{owhadi2022computational}. We identify $F \in V_h^c$ and $u \in Q_h^c$ by their basis coefficients $\widehat{F}$ and $\widehat{u}$, where each coarse flux degree of freedom $\hat{F}_{ij}$ is associated with the coarse oriented boundary shared by the coarse cells $\hat{u}_{i}$ and $\hat{u}_{j}$ in the graph induced by \textbf{P1}. We model the nonlinearity edgewise by a Gaussian process, $\mathcal{N} = \mathcal{GP}(u_{ij})$, where $u_{ij}$ concatenates the states $\hat{u}_i,\hat{u}_j$ with the additional metric features needed to generalize the flux law across the mesh.

In traditional GP regression, the posterior distribution is derived from the joint normal distribution via a Schur complement. Its mean may equivalently be obtained by solving the optimal recovery problem: given a kernel $K$ with reproducing kernel Hilbert space (RKHS) $\mathcal{H}_K$ and $N$ noisy data pairs $(\mathbf{X},\mathbf{Y})=\{(\mathbf{x}_i,y_i)\}_{i=1}^N$ with noise variance $\sigma_\varepsilon^2$, the minimizer of the RKHS norm penalized by data misfit
\begin{equation}
  \hat{f}
  \;=\;
  \operatorname*{arg\,min}_{f\in\mathcal{H}_K}\;
  \|f\|_{\mathcal{H}_K}^2
  \;+\;
  \frac{1}{\sigma_\varepsilon^2}\sum_{i=1}^{N}\bigl(f(\mathbf{x}_i)-y_i\bigr)^2 ,
\end{equation}
recovers the conventional GP estimator
$\hat{f}(\cdot)=K(\cdot,\mathbf{X})\bigl(K(\mathbf{X},\mathbf{X})+\sigma_\varepsilon^2 I\bigr)^{-1}\mathbf{Y}$.

Recasting the conventional GP problem in this manner allows us to incorporate equality constraints directly. In the graph interpretation of \textbf{P1}, fluxes over $V_h^c$ are edge currents between nodes associated with states in $Q_h^c$. On each edge we cast an optimal recovery problem mapping state to flux, subject to the equality constraint that the conservation law holds at each node. The connectivity of the coarse bases imposes sparsity on this dense graph through a metric weighting, and after training we are left with a discrete boundary value problem whose boundary fluxes are encoded by GPs. Because this is a constrained optimization problem, its KKT conditions expose a saddle-point structure that we exploit to derive a fast optimizer (\Cref{subsec:bilevel}).

In concert, \textbf{P1} and \textbf{P2} amount to a \textit{graph discovery problem}: by concurrently training the transformer and solving the optimal recovery problem, we interpret the identification of reduced div/grad finite element spaces as the identification of a dense graph, sparsified by the connectivity of the bases, that encodes the flow of conserved quantities through the system (\Cref{fig:roadmap}). While circuit analogies are commonplace in engineering (e.g., compact models for semiconductor devices~\cite{pmlr-v107-aadithya20a,fan2023two}, hydraulic circuits in hydrodynamics~\cite{9385620,vacca2021hydraulic}, or thermal circuits for heat transfer~\cite{wang2017microscale}), they are typically built from simplified analytic solutions and empirical curve-fitting through a slow iterative process. In contrast, our approach can be carried out autonomously to obtain rapid, uncertainty-quantified surrogates that preserve a conservative input/output relationship. We demonstrate this by considering two representative examples. Advection-diffusion on a triangular mesh of Philadelphia's Liberty Bell serves to illustrate how real-time surrogates with quantified uncertainty can be constructed on complex geometries; in this setting, we condition on the direction of advection as an example of how to construct parametric models. We finally construct a digital twin of a semiconductor device (specifically, a $p-n$ diode). Training data can be constructed by TCAD simulations solving the device drift-diffusion equations~\cite{musson2022charon}, providing examples of internal device transport and how it defines the voltage-current relationship governing the device. This problem specifically shows how the UQ developed here identifies the range of inputs over which the learned surrogate can be trustworthy.

\subsection{Relation to literature}
The construction above brings together three goals usually pursued separately: fast reduced-order simulation, exact enforcement of physical structure, and uncertainty quantification. We review each in turn. Classical discretizations such as finite element methods yield accurate, verifiable predictions but at substantial cost, motivating a large body of scientific machine learning (SciML) that trades rigor for speed. Physics-informed neural networks (PINNs) embed the governing PDEs in the training loss~\cite{karniadakis2021physics, yu2022gradient}; neural operators such as DeepONet~\cite{deeponetNatureML, pideeponet} and Fourier neural operators~\cite{li2020fourier} learn maps between function spaces; and data-driven reduced-order models (ROMs) accelerate parametric simulation by working in a low-dimensional subspace~\cite{fresca2022pod, jung2025accelerating, kapteyn2022data}. These surrogates can be fast, but typically forfeit the structural guarantees and error control of the solvers they replace. A related line of structure-preserving model-reduction methods addresses part of this gap by seeking reduced spaces that retain conservation, stability, passivity, or compatibility properties of the full-order discretization \cite{benner2015survey,carlberg2018conservative,quarteroni2015reduced}. Our construction is close in spirit to this line of work, but learns the compatible reduced $H(\mathrm{div})-L^2$ complex from data rather than selecting it through a fixed projection, and couples it to a GP optimal-recovery formulation for posterior uncertainty.

Enforcing physical structure reliably is the central difficulty, particularly under limited data and complex geometry. Most physics-informed approaches impose conservation through soft penalties on the PDE residual or boundary conditions~\cite{chen2024physics,jiao2024solving}; these do not guarantee exact conservation, and the resulting violations can accumulate, amplify errors, and destabilize training on stiff or multiscale problems~\cite{bonfanti2024challenges,krishnapriyan2021characterizing}. Structure-preserving methods instead build the constraints into the function spaces themselves. The mixed finite element theory underlying Raviart-Thomas and discontinuous Galerkin pairs provides the classical foundation for such $H(\mathrm{div})$-conforming flux approximations and compatible $L^2$ scalar spaces; see, for example, \cite{boffi2013mixed}. Finite element exterior calculus (FEEC) provides a canonical framework for this approach, choosing spaces and operators that form a discrete de Rham complex and so inherit the topological identities underlying conservation~\cite{arnold2010finite, Arnold_Falk_Winther_2006}. Data-driven exterior calculus extends this to learned, graph-based operators while preserving exact-sequence compatibility~\cite{trask2022enforcing}. Our coarsened $\mathrm{RT}_0$/$dgP_0$ complex is designed to inherit these guarantees by construction. 

Finally, trustworthy surrogates require calibrated uncertainty estimates that can be propagated to downstream quantities of interest~\cite{abdar2021review, psaros2023uncertainty, xu2021accurate}. UQ for neural PDE surrogates is often pursued through ensembles, Bayesian neural networks, dropout, latent-variable models, or post-hoc calibration; these approaches can be effective but usually do not yield the closed-form functional posterior bounds available in GP/RKHS optimal recovery \cite{song2026structure,yang2019adversarial}. Gaussian processes provide a natural alternative because conditioning a GP prior on data yields a posterior mean and covariance in closed-form. The same estimator arises from the optimal recovery problem, which characterizes the minimum-norm RKHS interpolant of noisy data~\cite{chen2021solving, owhadi2022computational, propp2026discovery} and, as we exploit below, accommodates hard linear constraints. We build most directly on the graph-based optimal recovery of Dirichlet-to-Neumann maps in~\cite{propp2026discovery}, extending it from fixed graphs to the learned, conditioned finite element complexes produced by \textbf{P1}.

\subsection{Main contributions}
\Cref{fig:roadmap} summarizes the resulting framework with structure preservation and uncertainty quantification. Our specific contributions are:
\begin{itemize}
    \item a \emph{top-down} coarsening of the $(\Lambda_d/\Lambda_{d-1})$ subcomplex, in which a lightweight transformer emits a coarsening matrix $W$ that defines a reduced, $H(\mathrm{div})$-conforming $\mathrm{RT}_0$/$dgP_0$ pair conditioned on the problem parameters $Z$, in contrast to the bottom-up $(\Lambda_0/\Lambda_1)$ coarsening of~\cite{actor2024data};
    \item a structure-preserving optimal recovery formulation in which an edgewise Gaussian process models the state-to-flux law and conservation is imposed exactly through linear equality constraints, yielding a saddle-point KKT system with a fast Schur-complement solve;
    \item a joint ``graph discovery'' training procedure that learns the reduced complex and the flux Gaussian process concurrently, producing a sparse circuit model of the conserved dynamics that generalizes across geometries and boundary data;
    \item a closed-form RKHS posterior error bound for linear functionals of the flux, in particular the boundary fluxes defining the Dirichlet-to-Neumann map, validated on complex geometries and semiconductor device equations.
\end{itemize}
The remainder of the paper is organized as follows. 
\textbf{\Cref{sec:math_prelim} }reviews the necessary mathematical background for the proposed framework. \Cref{subsec:feec} reviews the finite element exterior calculus preliminaries for the $H(\mathrm{div})$--$L^2$ mixed setting and \Cref{subsec:hdiv_rt0} specifies the finite element space with conforming $\mathrm{RT}_0$/$dgP_0$ discretizations. \Cref{subsec:orp_gp} explains the optimal recovery problem for Gaussian processes. \textbf{\Cref{sec:method1}} develops the first component (\textbf{P1}) by introducing the data-driven PoU Whitney construction (\Cref{subsec:pou_whitney}) and the induced coarse operators (\Cref{subsec:coarse_operator})  for structure preservation. We show that under this construction, properties such as surjectivity and flux balance still hold in the reduced space (\Cref{subsec:surjective}). In addition, we provide a graph interpretation of the learned reduced spaces (\Cref{subsec:graph_cal}), which supports the subsequent optimal recovery formulation. \textbf{\Cref{sec:method2}} develops the second component (\textbf{P2}). We cast the GP optimal recovery formulation on the coarse spaces subject to equality constraints that guarantee the exact enforcement of conservation laws (\Cref{subsec:opt_formulation}). In addition to a fast Schur-complement solve for the KKT system, we propose a bilevel training procedure for efficient training with nested coarse variables and models (\Cref{subsec:bilevel}) and incorporate geometric embedding (\Cref{subsec:geom_embedding}). We also provide the theoretical analysis for the RKHS posterior error bound in \Cref{subsec:error_bound}. \textbf{\Cref{sec:results}} reports numerical experiments on representative examples, a complex-geometry advection-diffusion problem, and a semiconductor diode example to validate the proposed framework.  

\section{Mathematical preliminaries}\label{sec:math_prelim}
In this section we gather the necessary background in both finite element exterior calculus (FEEC) and the optimal recovery formalisms. For further background, please see \cite{arnold2010finite, Arnold_Falk_Winther_2006,micchelli1977survey} and \cite{trask2022enforcing}.

\subsection{Finite element exterior calculus (FEEC)}\label{subsec:feec}
Exterior calculus generalizes classical calculus to differential forms of higher degree on differentiable manifolds and provides a robust mathematical framework for analyzing PDEs on different geometries. Finite element exterior calculus extends the exterior calculus to discrete finite element meshes using chain complexes and preserves crucial topological and geometric structures underlying the governing PDEs~\cite{kinch2025structure, trask2022enforcing}. 

One advantage of FEEC is that it unifies the main differential operators, i.e., grad, div, and curl, with the de Rham complex. Let $\Lambda(\Omega)$ denote the exterior algebra with exterior product $\wedge$ and
$\Omega\subset\mathbb{R}^d$. We then have the de Rham complex:
\begin{equation}
    0\rightarrow \Lambda^{0}(\Omega)\xrightarrow{d^0}\Lambda^{1}(\Omega)\xrightarrow{d^1}\cdots\xrightarrow{d^{k-1}}\Lambda^{k}(\Omega)\rightarrow 0,
\end{equation}
where $d:\Lambda^{k-1}(\Omega)\rightarrow\Lambda^k(\Omega)$ is the exterior derivative operator. For $\Omega\subset\mathbb{R}^3$, the de Rham complex becomes:
\begin{equation}
    0\rightarrow C^\infty(\Omega)\xrightarrow{\text{grad}}[C^\infty(\Omega)]^3\xrightarrow{\text{curl}}[C^\infty(\Omega)]^3\xrightarrow{\text{div}}C^\infty(\Omega)\rightarrow 0.
\end{equation}
If we further consider Sobolev spaces with homogeneous Dirichlet boundary conditions, we can derive the FEEC specialization primal and dual cochain complex:
\begin{equation}
0
\longrightarrow
H_0(\mathrm{grad},\Omega)
\xrightleftharpoons[\,-\mathrm{grad}\,]{\ \mathrm{grad}\ }
H_0(\mathrm{curl},\Omega)
\xrightleftharpoons[\,\mathrm{curl}\,]{\ \mathrm{curl}\ }
H_0(\mathrm{div},\Omega)
\xrightleftharpoons[\,-\mathrm{div}\,]{\ \mathrm{div}\ }
L^2(\Omega)
\longrightarrow
0.
\end{equation}
In this work, instead of working on ``bottom-up" subcomplexes, we work on the terminal map \(H(\mathrm{div},\Omega)\xrightarrow{\nabla\cdot}L^2(\Omega)\) because this is a natural algebraic mechanism through which flux balance is imposed. This is illustrated in the commuting diagram in \Cref{fig:commuting}.
\subsection{$H(\mathrm{div})$ space and lowest-order Raviart--Thomas element}\label{subsec:hdiv_rt0}
Since one core objective of this work is to guarantee the flux conservation law, instead of working on the entire de Rham complex, we focus on its tail, i.e., $H(\mathrm{div},\Omega)\xrightarrow{\text{div}} L^2(\Omega)$. This segment of the complex supports flux conservation in a natural way through the mixed formulation of elliptic problems. To make this connection explicit, consider the general second-order elliptic equation in a bounded domain $\Omega \subset \mathbb{R}^d$:  
\begin{equation}\label{eq:elliptic_strong}
-\nabla\cdot\bigl(\mathbf{K}(x)\nabla u\bigr)=f
\qquad \text{in } \Omega,
\end{equation}
where $\mathbf{K}(x)\in\mathbb{R}^{d\times d}$ is a symmetric and uniformly positive definite diffusion tensor. The boundary $\partial\Omega$ is decomposed into disjoint parts
\(\partial\Omega=\Gamma_D \cup \Gamma_N\) on which Dirichlet and Neumann boundary conditions are defined, respectively. Then by introducing the flux variable $\mathbf{F}\coloneqq-\mathbf{K}(x)\nabla u$, the problem can be rewritten as the first-order system:
\begin{equation}\label{eq:mixed_strong_general}
\begin{cases}
  \mathbf{K}(x)^{-1}\mathbf{F}+\nabla u&=0,\\
  \nabla\cdot\mathbf{F}&=f,
\end{cases}\qquad \text{in } \Omega.
\end{equation}
In this setting, the natural space for the flux variable is:
\begin{equation}
    H(\mathrm{div}, \Omega)
\coloneqq
\left\{
\mathbf{v} \in [L^2(\Omega)]^d
\;\middle|\;
\nabla \cdot \mathbf{v} \in L^2(\Omega)
\right\}.
\end{equation} 
This space ensures that the divergence operator is well-defined in a weak sense and the normal components $\mathbf{v}\cdot\mathbf{n}$ are well-defined on element interfaces, thus naturally making it the appropriate space for modeling local flux conservation. The corresponding mixed variational formulation is to find $(\mathbf{F},u)\in H(\mathrm{div},\Omega)\times L^2(\Omega)$ such that
\begin{equation}\label{eq:mixed_weak_integral}
\begin{split}
\int_\Omega \mathbf{K}(x)^{-1}\mathbf{F}\cdot \mathbf{v}\,dx
-
\int_\Omega u\,(\nabla\cdot \mathbf{v})\,dx
&=-\int_{\Gamma_D} u_D\,(\mathbf{v}\cdot\mathbf{n})\,ds,
\quad\forall\,\mathbf{v}\in H(\mathrm{div},\Omega),\\    
\int_\Omega (\nabla\cdot\mathbf{F})\,q\,dx
&=\int_\Omega f\,q\,dx,\quad \forall\,q\in L^2(\Omega).
\end{split}
\end{equation}
A conforming discretization of this mixed formulation requires a finite element flux space in \(H(\mathrm{div},\Omega)\) and a compatible scalar space in \(L^2(\Omega)\). In this work, we consider the Raviart--Thomas (RT) elements for the flux variable and piecewise constants for the scalar variable. Let $\mathcal{T}_h$ be a simplicial mesh of $\Omega \subset \mathbb{R}^d$. On each element $K\in \mathcal{T}_h$, the Raviart--Thomas space of order $k \ge 0$  is defined by
\begin{equation}
\mathrm{RT}_k(K)\coloneqq P_k(K)^d + \mathbf{x}\,P_k(K),
\end{equation}
where ${P}_k(K)$ denotes the space of polynomials of degree at most $k$ on $K$. The global Raviart--Thomas space is defined as
\begin{equation}
\mathrm{RT}_k(\mathcal{T}_h)\coloneqq
\left\{
\mathbf{v} \in H(\mathrm{div},\Omega)
\;\middle|\;
\mathbf{v}|_K \in \mathrm{RT}_k(K), \; \forall K \in \mathcal{T}_h
\right\},
\end{equation}
which enforces the continuity of normal components across element interfaces.

For $k=0$, the \emph{lowest--order Raviart--Thomas space} on $K$ is defined as
\begin{equation}
    \text{RT}_{0}(K)
    \;=\;
    P_{0}(K)^{d}\;+\;x\,P_{0}(K).
\end{equation}
Equivalently, any $\mathbf{v} \in \mathrm{RT}_0(K)$ can be written as
$\mathbf{v}(x) = \mathbf{a} + p\,x$ with constant vector $\mathbf{a}\in\mathbb R^{d}$ and
constant scalar $p\in\mathbb R$. The degrees of freedom of this lowest-order RT$_0$ space are defined by the normal fluxes across element faces: 
\begin{equation}
    \int_e (\mathbf{v} \cdot \mathbf{n})\, q \, ds, \qquad \forall e \subset \partial K.
\end{equation}Thus the degree of freedom admits a direct interpretation in terms of cochains.  In particular, the RT space ensures that the discrete divergence operator is
compatible with the underlying differential complex, and the $(\mathrm{RT}_0, \mathrm{P}_0)$ can form a meaningful discretization of the $H(\mathrm{div})$--$L^2$ mixed formulation. Within the FEEC framework, this construction corresponds to the final part of the de Rham complex where the scalar field is associated with $d$-forms and the flux corresponds to $(d-1)$-forms. The gradient and divergence operators are unified through the exterior derivative and the codifferential. The resulting compatible structure provides the algebraic foundation for conservation and motivates the learnable coarse space construction below.

\subsection{Optimal recovery problem for Gaussian processes}\label{subsec:orp_gp}
Gaussian processes provide a probabilistic model for unknown functions. Formally, a GP is a 
collection of random variables such that any finite collection has a joint Gaussian distribution. Equivalently, a GP can be viewed as a distribution over functions: each draw from the process is a possible function, and for any finite set of input locations, the corresponding vector of function values is multivariate Gaussian. This makes GPs particularly useful in settings where the uncertainty quantification is important, since conditioning on observed data yields both a posterior mean prediction and a posterior covariance that quantifies uncertainty at new inputs. 

Given data $\mathcal{D}=(\mathbf{X},\mathbf{Y})$ with $X=[\mathbf{x}_1,\dots, \mathbf{x}_N]^\top$ and corresponding output $\mathbf{Y}=[y_1,\dots,y_N]^\top$ generated by an underlying function $y_i=f(\mathbf{x}_i)$, a GP prior on $f$ is fully specified by its mean function $m(x)$ and covariance kernel $K(x,x')$:
\begin{align}
    m(\mathbf{x})&=\mathbb{E}[f(\mathbf{x})]\\
    K(\mathbf{x},\mathbf{x}')&=\mathbb{E}[(f(\mathbf{x})-m(\mathbf{x}))(f(\mathbf{x}')-m(\mathbf{x}'))]\\
    f(\mathbf{x})&\sim \mathcal{GP}(m(\mathbf{x}),K(\mathbf{x},\mathbf{x}'))
\end{align}
In the noise-free scenario, for test inputs $\mathbf X_*=(\mathbf x_1^*,\ldots,\mathbf x_{N_*}^*)^T$, the joint distribution of the observed training values $\mathbf{Y}=f(\mathbf X)$ and the unobserved test values $\mathbf{f_*}=f(X_*)$ is\footnote{In \eqref{eq:gp_distribution}, we use a zero-mean GP prior for simplicity. This is a standard convention when the data have been centered or when the GP models a residual correction around a deterministic mean model. A nonzero mean can be incorporated by applying the same formulas to the centered quantities obtained after subtracting the mean. Under this zero-mean prior, the joint distribution of the observed training outputs and the unobserved test values is also zero mean.}
\begin{equation}\label{eq:gp_distribution}
    \begin{bmatrix}
\mathbf{Y}\\\mathbf{f}_* 
    \end{bmatrix}\sim\mathcal{N}\left(\mathbf{0},  \begin{bmatrix}
        K(\mathbf X,\mathbf X)&K(\mathbf X,\mathbf X_*)\\
        K(\mathbf X_*,\mathbf X)&K(\mathbf X_*,\mathbf X_*)
    \end{bmatrix}\right).
\end{equation}
\
Conditioning on this joint Gaussian distribution gives
\begin{equation}
  \mathbf{f}_* |\mathbf X_*, \mathbf X,\mathbf{Y}\sim\mathcal{N}(\mu_*,
\Sigma_*),\label{eq:gp_post}
\end{equation}
where 
\begin{align}
    \mu_*
&=K(\mathbf X_*,\mathbf X)K(\mathbf X,\mathbf X)^{-1}\mathbf{Y},\\
  \Sigma_*&=K(\mathbf X_*,\mathbf X_*)-K(\mathbf X_*,\mathbf X)K(\mathbf X,\mathbf X)^{-1}K(\mathbf X,\mathbf X_*).
\end{align}
In most applications, however, the data are noisy. Suppose $\mathbf{Y}=f(\mathbf X)+\varepsilon$, where $\varepsilon\sim\mathcal N(0,\sigma_\varepsilon^2 I)$ is independent identically distributed (i.i.d.) Gaussian noise. Then the joint distribution in \eqref{eq:gp_distribution}-\eqref{eq:gp_post} becomes 
\begin{align}\label{eq:gp_distribution_noise}
    \begin{bmatrix}
\mathbf{Y}\\\mathbf{f}_* 
    \end{bmatrix}&\sim\mathcal{N}\left(\mathbf{0},  \begin{bmatrix}
        K(\mathbf X,\mathbf X)+\sigma_\varepsilon^2I&K(\mathbf X,\mathbf X_*)\\
        K(\mathbf X_*,\mathbf X)&K(\mathbf X_*,\mathbf X_*)
    \end{bmatrix}\right),\\
  \mathbf{f}_* |\mathbf X_*, \mathbf X,\mathbf{Y}&\sim\mathcal{N}\Big(K(\mathbf X_*,\mathbf X)(K(\mathbf X,\mathbf X)+\sigma_\varepsilon^2I)^{-1}\mathbf{Y},\notag\\
  &\qquad \quad K(\mathbf X_*,\mathbf X_*)-K(\mathbf X_*,\mathbf X)(K(\mathbf X,\mathbf X)+\sigma_{\varepsilon}^2I)^{-1}K(\mathbf X,\mathbf X_*)\Big).\label{eq:gp_post_noise}
\end{align}
The posterior covariance provides a meaningful probabilistic measure of uncertainty at the test points, which is useful for uncertainty-aware scientific machine learning tasks. Indeed, prior work has established that incorporating GPs within SciML frameworks provides rigorous uncertainty quantification for learning complex PDE-governed systems~\cite{brunton2024promising, chen2021solving, harkonen2023gaussian, shi2025survey}.

Given limited noisy data and some prior knowledge of the function space, the optimal recovery problem aims to learn an unknown function by minimizing the worst-case error. Mathematically, given a normed space $F$ and an unknown function $f\in F$, the partial knowledge of $f$ is defined through point evaluations:
\begin{align}
    y_i=\ell_i(f), \quad i=1,\dots, m,
\end{align}
for some linear functionals $\ell_1,\dots,\ell_m\in F^*$, where $F^*$ is the dual space of $F$. The objective is to approximate $f$ by some estimate $\widehat{f}\in F$ such that the worst-case error is minimized:
\[
\mathcal{E}(\widehat{f})\coloneqq\inf_{\widehat{f}\in F}\sup_{\substack{f\in F\\ \ell_i(f)=y_i}}\frac{\|f-\widehat{f}\|}{\|f\|}.
\]
Let \( K: \mathcal{X}\times\mathcal{X} \rightarrow \mathbb{R} \) be a symmetric positive definite bivariate kernel, and \(\mathcal{H}_K \) denote its reproducing kernel Hilbert space (RKHS) with accompanying induced RKHS norm \(\|\cdot\|_K\). Given noisy observations $(\mathbf{X}, \mathbf{Y})$, the corresponding regularized optimal recovery problem is to find the function $\hat{f}\in\mathcal{H}_K$ that minimizes the sum of the squared RKHS norm and the scaled data-misfit term:

\[
\mathcal{J}(\mathbf{X};\mathbf{Y})=\min_{g\in\mathcal{H}_K}\|g\|_K^2 + \frac{1}{\sigma_\varepsilon^2}\|g(\mathbf{X})-\mathbf{Y}\|_2^2.
\]
By the Representer Theorem, the minimizer $\hat{f}$ has a finite-dimensional representation in terms of kernel evaluations at the training inputs. In particular, $\hat{f}$ evaluated at a new point can be written as 
\[\hat{f}(\cdot)=K(\cdot, \mathbf{X})(K(\mathbf{X}, \mathbf{X}) + \sigma_\varepsilon^2 I)^{-1}\mathbf{Y}.\]
This estimator coincides with the posterior mean of Gaussian process regression with covariance kernel $K$ and independent Gaussian observation noise of variance $\sigma_\varepsilon^2$. The corresponding minimum value of the objective \(\mathcal{J}(\mathbf{X};\mathbf{Y})\)  is
\[
\mathcal{J}(\mathbf{X};\mathbf{Y}) = \mathbf{Y}^\top\left(K(\mathbf{X},\mathbf{X}) + \sigma_\varepsilon^2 I\right)^{-1}\mathbf{Y},
\]
proved in \cite{propp2026discovery}.

\section{Method part 1: Data-driven $H(\mathrm{div})$-conforming reduced space construction}\label{sec:method1}
\label{sec:ddwf}
In this section, we construct the structure-preserving reduced spaces used by the surrogate. The goal is to learn the data-driven Whitney forms from a trainable partition of unity (PoU) conditioned on $Z$. The PoU functions define coarse control volumes, represented by 0-forms, while the associated 1-forms encode generalized fluxes between them (\textbf{\Cref{subsec:pou_whitney}}). This construction yields reduced scalar and flux spaces compatible with the \(H(\mathrm{div})\)--\(L^2\) structure introduced above, allowing the resulting $H(\mathrm{div})$-conforming finite element spaces to adapt to the conditioning variable $Z$ (\textbf{\Cref{subsec:coarse_operator,subsec:surjective}}). Finally, we connect the induced coarse spaces to a graph interpretation in \textbf{\Cref{subsec:graph_cal}} that will support the optimal recovery formulation.

\subsection{PoU Whitney forms and coarse spaces}\label{subsec:pou_whitney} We construct the learnable coarse spaces from the fine-scale $\mathrm{RT}_0$/$dgP_0$ discretization. Consider the fine-scale piecewise-constant scalar space $Q_h^f$ and the corresponding fine-scale lowest-order $RT_0$ flux space $V_h^f$ defined as
\begin{equation}\label{eq:fine_space}
Q_h^{f}\coloneqq\operatorname{span} \{\phi_a^{P_0}:a\in\mathcal C_h\},\qquad V_h^f
\coloneqq
\operatorname{span}
\{\phi_e^{RT}:e\in \mathcal{E}_h\},
\end{equation} where $\mathcal{C}_h=\{1,\ldots,N_{\mathrm{cell}}\}$ denotes the set of fine cell indices and $\mathcal{E}_h$ is the set of edges for fine $RT_0$ flux degrees of freedom. The basis \(\{\phi_a^{P_0}\}\) is chosen so that $$\sum_{a\in\mathcal C_h}\phi_a^{P_0}(x)=1.$$

In this work, we only use the 0-form and 1-form components needed for the $H(\mathrm{div})-L^2$ tail. Furthermore, instead of directly constructing the Whitney forms using the classic barycentric interpolant, we employ neural networks to construct neural Whitney forms that still possess the desired PoU property. 

Let $N_0$ denote the number of partitions. For a fixed conditioning variable, let $W=[W_{ia}]\in\mathbb{R}^{N_0\times N_{cell}}$ be a learnable PoU weight matrix satisfying:
\begin{equation}\label{eq:W}
  W_{ia}\ge 0,\qquad \sum_{i=1}^{N_0}W_{ia}=1\quad \text{for every } a\in \mathcal{C}_h.
\end{equation} 
Then the coarse 0-forms and 1-forms are constructed as follows.
\begin{proposition}[Coarse 0-forms]\label{prop:coarse_0forms}
For a fixed conditioning variable, let
$W$ be the learned PoU weights as in \Cref{eq:W}. The coarse 0-form is defined as
\begin{equation}
\psi_i^0=\sum_{a\in\mathcal C_h}
W_{ia}\phi_a^{P_0},
\qquad i=1,\ldots,N_0,
\label{eq:coarse_0form}
\end{equation}
which forms a partition of unity:
\begin{equation}
\sum_{i=1}^{N_0}\psi_i^0=\sum_{a\in\mathcal C_h}\left(\sum_{i=1}^{N_0}W_{ia}\right)\phi_a^{P_0}
=1.    
\end{equation}
The corresponding coarse $P_0$ space is defined as:
\begin{equation}
    Q^c_h \coloneqq\mathcal{W}^0= \mathrm{span}\{\psi^0_i\}_i\subset Q_h^f.
\end{equation}
\end{proposition}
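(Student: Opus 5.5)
The argument is a direct computation from the definition \eqref{eq:coarse_0form} together with the column-stochastic property \eqref{eq:W} of $W$ and the partition-of-unity property of the fine basis, $\sum_{a\in\mathcal C_h}\phi_a^{P_0}=1$, stated after \eqref{eq:fine_space}. The proposition makes three assertions, and I would check them in this order.

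\textbf{Partition of unity.} Sum \eqref{eq:coarse_0form} over $i=1,\dots,N_0$. Both index sets are finite, so the double sum can be exchanged:
\[
\sum_{i=1}^{N_0}\psi_i^0=\sum_{i=1}^{N_0}\sum_{a\in\mathcal C_h}W_{ia}\phi_a^{P_0}=\sum_{a\in\mathcal C_h}\Bigl(\sum_{i=1}^{N_0}W_{ia}\Bigr)\phi_a^{P_0}.
\]
The inner sum equals $1$ for every $a$ by \eqref{eq:W}, which leaves $\sum_a\phi_a^{P_0}=1$. The identity holds pointwise (a.e.) on $\Omega$.

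\textbf{Nonnegativity.} Although not strictly part of the displayed identity, I would also note that $\psi_i^0\ge0$. This follows because $W_{ia}\ge0$ and the fine basis functions are nonnegative; the normalization forces the $\phi_a^{P_0}$ to be cell indicators $\chi_a$. So the $\psi_i^0$ form a genuine partition of unity and not merely a signed one.

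\textbf{Inclusion $Q_h^c\subset Q_h^f$.} Each $\psi_i^0$ is a finite linear combination of the fine basis $\{\phi_a^{P_0}\}$, so it lies in $Q_h^f$. Since $Q_h^f$ is a linear space, it contains the span $Q_h^c$.

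Nothing here is a real obstacle. The only subtlety is that $\mathcal W^0=\mathrm{span}\{\psi_i^0\}$ may have dimension less than $N_0$ if the rows of $W$ are linearly dependent. The proposition only defines $Q_h^c$ as a span, so this does not affect the claim. If a basis is needed later, one would assume $W$ has full row rank, since the map $c\mapsto\sum_i c_i\psi_i^0$ has coefficient vector $W^\top c$ in the fine basis.
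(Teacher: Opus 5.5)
Your proof is correct and matches the paper's own argument, which is the displayed computation in the statement itself: exchange the two finite sums, apply the column-stochastic property \eqref{eq:W} and $\sum_a\phi_a^{P_0}=1$, and note that $Q_h^c\subset Q_h^f$ because each $\psi_i^0$ is a linear combination of the fine basis. The one imprecision is in your optional nonnegativity aside: $\sum_a\phi_a^{P_0}=1$ alone does not force $\phi_a^{P_0}=\chi_a$ (on two cells, $\{2\chi_1+\chi_2,\,-\chi_1\}$ is a basis summing to one), so you also need $\phi_a^{P_0}$ to be supported on cell $a$, which is the paper's intended reading when it identifies the fine basis with the cell indicators $\chi_a$.
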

Next, we construct the coarse 1-forms that still follow the same PoU Whitney principle with a special design of separating the interior edges and boundary edges. This separation allows us to maintain the boundary information more efficiently during the coarsening procedure, especially for scenarios where the geometry is complex or where boundary conditions are different for boundary subdomains.
\begin{proposition}[Interior and boundary coarse 1-forms]\label{prop:coarse_1forms} For a fixed conditioning variable, let
$W$ be the learned PoU weights as in \Cref{eq:W}. Let $e\in\mathcal{E}_{\mathrm{int}}$ denote the interior fine edges with two adjacent cells $K_L(e),\ K_R(e)$. For each pair of coarse 0-forms $(i,j)$ with $1\le i< j\le N_0$, the coarse interior 1-form is defined as
\begin{equation}\label{eq:psi1_int}
  \boldsymbol{\psi}_{ij}^{1,\mathrm{int}}(x)
  \;=\; \sum_{e \in \mathcal{E}_{\mathrm{int}}}
    \Bigl(
      W_{i,K_{\mathrm{L}}(e)}\,W_{j,K_{\mathrm{R}}(e)}
    - W_{j,K_{\mathrm{L}}(e)}\,W_{i,K_{\mathrm{R}}(e)}
    \Bigr)\,\boldsymbol{\phi}_e^1(x).
\end{equation}
Let the boundary fine edges be partitioned into disjoint groups
$$\mathcal E_{\rm bc}
=
\mathcal E_1\cup\cdots\cup\mathcal E_r,
\qquad
\mathcal E_\gamma\cap\mathcal E_{\gamma'}=\emptyset
\quad(\gamma\ne\gamma'),$$
where each group may represent a distinct boundary component or boundary type. Let $e\in\mathcal{E}_{\mathrm{\gamma}}$ denote the boundary fine edges with one adjacent cell $K(e)$. For $\gamma=1,\dots,r$ and $i = 1, \dots, N_0$, the coarse boundary 1-form is defined as
    \begin{equation}\label{eq:psi1_bc}
      \boldsymbol{\psi}_{i,\gamma}^{1,\mathrm{bc}}(x)
      \;=\; \sum_{e \in \mathcal{E}_\gamma}
        W_{i,K(e)}\,\boldsymbol{\phi}_e^1(x).
    \end{equation}
Concatenating the interior and boundary 1-forms, we have:
\begin{equation*}
    \boldsymbol{\psi}^1=[\boldsymbol{\psi}^{1,\mathrm{int}}\quad\boldsymbol{\psi}^{1,\mathrm{bc}}],
\end{equation*}
where $N_1=N_1^{int}+N_1^{bc}=\binom{N_0}{2}+rN_0$. The corresponding coarse $RT_0$ space is 
\begin{equation}
    V_h^c\coloneqq\mathcal W^1=\operatorname{span}\{\boldsymbol\psi_\alpha^1\}_{\alpha=1}^{N_1}
\subset V_h^f.
\end{equation}
\end{proposition}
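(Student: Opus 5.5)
The proposition is essentially a definition, so the content to verify is that the objects are well defined and that $V_h^c\subset V_h^f$ with the stated count $N_1=\binom{N_0}{2}+rN_0$.

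\emph{Step 1 (membership in $V_h^f$).} Each $\boldsymbol\psi^{1,\mathrm{int}}_{ij}$ in \eqref{eq:psi1_int} and each $\boldsymbol\psi^{1,\mathrm{bc}}_{i,\gamma}$ in \eqref{eq:psi1_bc} is a finite linear combination of the fine basis functions $\boldsymbol\phi^1_e=\phi^{RT}_e$ from \eqref{eq:fine_space}, with real coefficients built from entries of $W$. Hence it lies in $V_h^f$, and so $V_h^c\subset V_h^f\subset H(\mathrm{div};\Omega)$. This gives conformity at once.

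\emph{Step 2 (orientation consistency).} The interior coefficient $W_{i,K_L}W_{j,K_R}-W_{j,K_L}W_{i,K_R}$ is antisymmetric under $L\leftrightarrow R$. The fine basis function $\boldsymbol\phi^1_e$ also flips sign when the orientation of $e$ is reversed. So each summand, and therefore $\boldsymbol\psi^{1,\mathrm{int}}_{ij}$, does not depend on the arbitrary choice of left and right cells. The coefficient is also antisymmetric in $(i,j)$, so $\boldsymbol\psi_{ji}=-\boldsymbol\psi_{ij}$ and $\boldsymbol\psi_{ii}=0$. This is why only the pairs $i<j$ are kept, which gives $\binom{N_0}{2}$ interior forms.

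The boundary forms need no such check, since each boundary edge has a single cell and an outward orientation. The groups $\mathcal E_\gamma$ are disjoint, so there are exactly $rN_0$ boundary forms indexed by $(i,\gamma)$. Adding the two counts gives $N_1$.

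\emph{Step 3 (Whitney/PoU interpretation, the main point to check).} The construction should mirror the classical Whitney rule $\lambda_i d\lambda_j-\lambda_j d\lambda_i$, with $\lambda_i$ replaced by the discrete weights $W_{i,\cdot}$. To confirm this, I would compute the divergence of the summed forms using the fine incidence relation $\nabla\cdot\boldsymbol\phi^1_e=\pm\phi^{P_0}_{K}/|K|$. Combining this with the partition-of-unity identity $\sum_i W_{ia}=1$ from \Cref{prop:coarse_0forms} should show that $\sum_{j}\boldsymbol\psi^{1,\mathrm{int}}_{ij}$ carries the fine flux out of the support of $\psi_i^0$.

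This computation is where I expect the only real difficulty: tracking signs over the edges and using $\sum_i W_{ia}=1$ to collapse the double sums. Strictly, though, it is not needed for the proposition as stated. It belongs to the later surjectivity results in \Cref{subsec:surjective}, so here I would only state it as motivation.
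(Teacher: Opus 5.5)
Your proposal is correct and matches the paper, which gives no separate proof because the statement is essentially a definition. The only substantive claim, $V_h^c\subset V_h^f$, is exactly your Step 1, since each $\boldsymbol\psi^1_\alpha$ is a finite combination of fine $\mathrm{RT}_0$ basis functions; like you, the paper treats the Whitney interpretation (the antisymmetric double sum $\sum_{a,b}W_{ia}W_{jb}\boldsymbol\phi^{RT_0}_{ab}$ in the following remark) only as motivation and defers the divergence computation to the surjectivity proposition. One minor point: $N_1$ counts generators, so the tally $rN_0$ does not need the $\mathcal E_\gamma$ to be disjoint, and $N_1$ can exceed $\dim V_h^c$ (e.g.\ for a hard partition, $\boldsymbol\psi^{1,\mathrm{int}}_{ij}=0$ whenever regions $i$ and $j$ are not adjacent).
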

\begin{remark}\label{remark:coarse_1forms}
The classic PoU Whitney 1-form between two coarse partitions is
\[
    \psi_{ij}^{1}
    =
    \lambda_i \nabla\lambda_j-\lambda_j \nabla\lambda_i .
\]
The dense representation applies a double sum over all oriented cell pairs:
\[
\psi_{ij}^{1}=\sum_{a,b}
    W_{ia}W_{jb}\boldsymbol\phi_{ab}^{RT_0},
\]
assuming that the oriented flux basis $\boldsymbol\phi_{ab}^{RT_0}$ is defined for each ordered pair $(a,b)$. In the actual finite element implementation, however, $\mathrm{RT}_0$ basis functions are associated only with fine mesh edges. By the definition, we have
\[\boldsymbol\phi_{K_L(e),K_R(e)}^{RT_0}=\boldsymbol\phi_e^{RT_0},\qquad
    \boldsymbol\phi_{K_R(e),K_L(e)}^{RT_0}=-\boldsymbol\phi_e^{RT_0}.\]
Thus \Cref{prop:coarse_1forms} is the edge-restricted construction of the same antisymmetric principle, and the coefficient for the interior coarse 1-forms in \Cref{eq:psi1_int} should be considered the discrete mesh-edge analogue of the classical Whitney representation. For later propositions and proofs, we use the double sum notation.
\end{remark}

\subsection{Coarse operators}\label{subsec:coarse_operator}
For convenience, we hereafter denote by \(\{\boldsymbol\psi_\alpha^1\}_{\alpha=1}^{N_1}\) the full set of interior and boundary coarse 1-form basis functions. With the coarse 0-forms $\{\psi_i^0\}_{i=1}^{N_0}$ and 1-forms $\{\psi_\alpha^1\}_{\alpha=1}^{N_1}$, we can further define the corresponding coarse operators.
\begin{proposition}[Coarse operators]\label{prop:coarse_operators}For $i,j = 1,\dots,N_0,$ and $\alpha,\beta=1,\dots,N_1$,  the coarse mass matrices and divergence matrix are defined by
\begin{align*}
  (\mathbf{M}_0)_{ij}
  = (\psi_i^0,\ \psi_j^0)_\Omega,
  \quad (\mathbf{M}_1)_{\alpha\beta}
  = (\boldsymbol{\psi}_{\alpha}^1,\
     \boldsymbol{\psi}_{\beta}^1)_\Omega,
  \quad \mathbf{D}_{i\alpha}=
  (\nabla\!\cdot\!\boldsymbol{\psi}_\alpha^1,\psi_i^0)_\Omega.
\end{align*}
\end{proposition}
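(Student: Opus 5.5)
This proposition is essentially a definition, so the task is to check that the three matrices are well defined and to express them through fine-scale quantities. I will do this by pulling the coarse bases back to the fine bases.

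\textbf{Coefficient matrices.} Write the coarse bases as linear combinations of fine bases. By \Cref{prop:coarse_0forms}, $\psi_i^0=\sum_a W_{ia}\phi_a^{P_0}$, so $\psi^0 = W\phi^{P_0}$. By \Cref{prop:coarse_1forms}, $\boldsymbol\psi_\alpha^1=\sum_{e}P_{e\alpha}\boldsymbol\phi_e^{RT}$ for a prolongation matrix $P\in\mathbb R^{|\mathcal E_h|\times N_1}$. For interior columns $\alpha=(i,j)$, the entries are $P_{e\alpha}=W_{i,K_L(e)}W_{j,K_R(e)}-W_{j,K_L(e)}W_{i,K_R(e)}$. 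For boundary columns $\alpha=(i,\gamma)$, the entries are $P_{e\alpha}=W_{i,K(e)}\mathbf 1_{e\in\mathcal E_\gamma}$.

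\textbf{Well-definedness.} Since $Q_h^c\subset Q_h^f\subset L^2(\Omega)$ and $V_h^c\subset V_h^f\subset H(\mathrm{div},\Omega)$, every inner product in the statement is a finite $L^2(\Omega)$ pairing. In particular, $\nabla\cdot\boldsymbol\psi^1_\alpha\in dgP_0\subset L^2$.

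\textbf{Galerkin identities.} Bilinearity then gives
\[
\mathbf M_0=W\mathbf M_0^fW^\top,\qquad \mathbf M_1=P^\top\mathbf M_1^fP,\qquad \mathbf D=W\mathbf D^fP,
\]
where $(\mathbf M_0^f)_{ab}=(\phi_a^{P_0},\phi_b^{P_0})$ is diagonal with entries $|K_a|$, $\mathbf M_1^f$ is the fine $\mathrm{RT}_0$ mass matrix, and $\mathbf D^f_{ae}=(\nabla\cdot\boldsymbol\phi_e^{RT},\phi_a^{P_0})$.

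\textbf{Symmetry and semidefiniteness.} Both properties are immediate for $\mathbf M_0$ and $\mathbf M_1$, since each is a Gram matrix. Positive definiteness holds when $W$ has full row rank, respectively when $P$ has full column rank. I would remark that a redundant coarse $1$-form makes $\mathbf M_1$ only semidefinite.

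\textbf{Computing $\mathbf D$ (main step).} The key fact is the fine divergence theorem for $\mathrm{RT}_0$. With basis functions normalized to unit flux, $\mathbf D^f_{ae}=\pm1$ if $e\subset\partial K_a$ and $0$ otherwise, the sign being the orientation relative to $K_a$; that is, $\mathbf D^f$ is the signed cell--facet incidence matrix. Substituting this into $W\mathbf D^fP$ gives
\[
\mathbf D_{i\alpha}=\sum_e P_{e\alpha}\bigl(W_{i,K_L(e)}-W_{i,K_R(e)}\bigr)
\]
for interior fine edges, plus a one-sided term $W_{i,K(e)}$ for boundary edges.

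This edge sum is the step I expect to be the only real obstacle: one has to track orientation conventions and the antisymmetric double-sum representation of \Cref{remark:coarse_1forms}. It sets up the surjectivity and flux-balance results of \Cref{subsec:surjective}. I would verify the signs by summing over $i$: the PoU property $\sum_iW_{ia}=1$ should make the interior contribution to $\sum_i \mathbf D_{i\alpha}$ vanish, which is discrete global conservation.
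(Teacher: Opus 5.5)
Your proposal is correct. The paper gives no argument here, because the proposition is a definition. The only place it carries out the same computation is the proof of \Cref{prop:rank} in \Cref{app:proof_prop_rank}, where the boundary block is expanded bilinearly to obtain $D_\gamma=W_\gamma B_\gamma W_\gamma^\top$. Your factorization $\mathbf D=W\mathbf D^fP$ is the full-matrix version of that calculation: its boundary columns give exactly $D_\gamma$ with $B_\gamma=I$ under your unit-flux normalization. You also build $P$ from the edge-restricted coefficients of \Cref{prop:coarse_1forms} rather than from the dense double sum of \Cref{remark:coarse_1forms}. Because of that, you rightly place $\nabla\cdot\boldsymbol\psi^1_\alpha$ only in $dgP_0$. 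For the edge-restricted forms it need not lie in $\mathcal W^0$; already with three 1D cells and two partitions it generically does not. So the $L^2$ pairing against $\psi_i^0$ is the correct reading of $\mathbf D$.

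Your route adds two things to the bare definition. First, it gives the criterion $\mathbf M_1\succ 0$ iff $P$ has full column rank. The paper tacitly needs this whenever it writes $\mathbf M_1^{-1}$. It fails, for example, when no interior fine edge joins a cell in the support of $\psi_i^0$ to one in the support of $\psi_j^0$, so that $\boldsymbol\psi^1_{ij}=0$. Second, it gives the exact column-sum identity $\mathbf 1^\top\mathbf D_{\mathrm{int}}=\mathbf 1^\top W\mathbf D^fP_{\mathrm{int}}=\mathbf 1^\top\mathbf D^fP_{\mathrm{int}}=0$, where $P_{\mathrm{int}}$ and $\mathbf D_{\mathrm{int}}$ denote the interior columns. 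This is a sharper form of the flux-balance remark after \Cref{prop:surjective}.

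One caveat concerns orientation. Your interior formula with $W_{i,K_L(e)}-W_{i,K_R(e)}$ orients $\boldsymbol\phi_e$ from $K_L(e)$ to $K_R(e)$. The paper's proof of \Cref{prop:surjective} instead uses $\nabla\cdot\boldsymbol\phi_e=\phi_{K_R(e)}-\phi_{K_L(e)}$. This only flips the sign of the interior columns, but you should fix one convention before reusing the paper's formulas.
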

The above construction for coarse bases and operators is purely algebraic and therefore remains valid for any $W$ satisfying the property in \Cref{eq:W}. For a conditioning variable $z$, the neural coarsening map gives $W=W_\theta(z)$ and the resulting conditional operators in reduced space are
$Q_h^c(z;\theta),\ V_h^c(z;\theta),\  D_\theta(z)$.
In practice, for $N$ training instances, $W$ can be a tensor of shape $W\in\mathbb{R}^{N\times N_0\times N_{\rm cell}}$, where $W^{(n)}\in\mathbb{R}^{N_0\times N_{\rm cell}}$ represents the conditional neural Whitney form for sample $n$ given a conditioning variable $z^{(n)}$. The choice of coarse forms may vary depending on different factors, such as the treatment of boundaries or the choices of finite element spaces, but the fundamental principle is that by introducing the trainable $W$, we are able to adapt the PoU flexibly with respect to certain conditioning variables. Moreover, the learned coarse spaces inherit the \(H(\mathrm{div})\)--\(L^2\) structure from the fine space with the coarse divergence operator as in the following diagram:
\begin{equation}
    \begin{array}{ccc}H(\mathrm{div},\Omega) & \xrightarrow{\;\nabla\!\cdot\;} & L^2(\Omega) \\[0.4em]
\cup &  & \cup \\[-0.2em]
V_h^c(z;\theta) & \xrightarrow{\;D_\theta(z)\;} & Q_h^c(z;\theta).
\end{array}
\end{equation}

The following proposition provides a sufficient condition for the full row rank of the coarse divergence matrix. This condition is a nondegeneracy condition of the learned coarse basis and can be checked from the learned PoU weights.
\begin{proposition}[Rank condition]\label{prop:rank} Let the boundary fine edges be partitioned as in \Cref{prop:coarse_1forms}. Assume that the fine boundary basis $\boldsymbol\phi_e^1$ for boundary edge $e\in\mathcal E_\gamma$ is oriented so that
$\beta_e^\gamma \coloneqq (\nabla\!\cdot\boldsymbol\phi_e^1,\phi_{K(e)}^{P_0})_\Omega>0$. Let \(D_\gamma\in\mathbb R^{N_0\times N_0}\) denote the block of the coarse divergence matrix \(D\) whose columns correspond to the boundary basis functions associated with $\mathcal E_\gamma$. Then 
$$(D_\gamma)_{i\alpha}=\sum_{e\in\mathcal E_\gamma}\beta_e^\gamma W_{\alpha,K(e)}W_{i,K(e)}=(W_\gamma B_\gamma W_\gamma^\top)_{i\alpha},$$
where $$(W_\gamma)_{i e}=W_{i,K(e)},\qquad B_\gamma=\operatorname{diag}\{\beta_e^\gamma:e\in\mathcal E_\gamma\}.$$ 
If $\sum_{\gamma=1}^r W_\gamma B_\gamma W_\gamma^\top$ is positive definite, then the coarse divergence matrix has full row rank, $\text{rank}(D)=N_0$.
\end{proposition}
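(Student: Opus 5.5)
The plan has two parts: first compute the boundary block $D_\gamma$ directly from the definitions, then get the rank conclusion by summing the boundary blocks. Throughout, write $\alpha$ both for a coarse-node index $1\le\alpha\le N_0$ and for the associated boundary basis $\boldsymbol\psi^{1,\mathrm{bc}}_{\alpha,\gamma}$.

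\textbf{Step 1: the entries of $D_\gamma$.} By \Cref{prop:coarse_operators} and \eqref{eq:psi1_bc},
\[
(D_\gamma)_{i\alpha}=\Bigl(\nabla\cdot\boldsymbol\psi^{1,\mathrm{bc}}_{\alpha,\gamma},\psi_i^0\Bigr)_\Omega
=\sum_{e\in\mathcal E_\gamma}\sum_{a\in\mathcal C_h}W_{\alpha,K(e)}W_{ia}\,\bigl(\nabla\cdot\boldsymbol\phi_e^1,\phi_a^{P_0}\bigr)_\Omega .
\]
The key fact is locality of the fine $\mathrm{RT}_0$ basis. $\nabla\cdot\boldsymbol\phi_e^1$ is a constant on each cell and vanishes outside the cells adjacent to $e$. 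A boundary edge has exactly one adjacent cell, $K(e)$. Hence $(\nabla\cdot\boldsymbol\phi_e^1,\phi_a^{P_0})_\Omega=0$ for $a\neq K(e)$, and equals $\beta_e^\gamma$ for $a=K(e)$. The double sum therefore collapses to
\[
(D_\gamma)_{i\alpha}=\sum_{e\in\mathcal E_\gamma}\beta_e^\gamma W_{i,K(e)}W_{\alpha,K(e)}=(W_\gamma B_\gamma W_\gamma^\top)_{i\alpha}.
\]
Since $\beta_e^\gamma>0$, each $D_\gamma$ is symmetric positive semidefinite.

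\textbf{Step 2: the rank conclusion.} After reordering columns, $D=[D_{\mathrm{int}}\;D_1\;\cdots\;D_r]$, so
\[
\operatorname{rank}(D)\ge\operatorname{rank}\bigl([D_1\;\cdots\;D_r]\bigr).
\]
Let $\mathbf 1_r$ denote the all-ones vector in $\mathbb R^r$. Then
\[
[D_1\;\cdots\;D_r]\,(\mathbf 1_r\otimes I_{N_0})=\sum_{\gamma=1}^r D_\gamma=\sum_{\gamma=1}^r W_\gamma B_\gamma W_\gamma^\top .
\]
By hypothesis this $N_0\times N_0$ matrix is positive definite, hence invertible, so it has rank $N_0$. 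The rank of a product is at most the rank of each factor, so $[D_1\;\cdots\;D_r]$ has rank at least $N_0$. It has only $N_0$ rows, so $\operatorname{rank}(D)=N_0$.

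An equivalent way to phrase Step 2: if $y^\top D=0$, then $y^\top D_\gamma=0$ for every $\gamma$. Summing gives $y^\top\bigl(\sum_\gamma W_\gamma B_\gamma W_\gamma^\top\bigr)=0$, so $y=0$. The interior columns are never needed.

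\textbf{Expected obstacle.} The only delicate step is Step 1's locality claim, namely that $(\nabla\cdot\boldsymbol\phi_e^1,\phi_a^{P_0})_\Omega$ vanishes for every cell other than $K(e)$. This requires checking that the fine $\mathrm{RT}_0$ boundary basis function is supported only in its single adjacent cell. That holds for the standard $\mathrm{RT}_0$ basis, since the function is supported in the cells sharing the edge. The orientation hypothesis in the statement then fixes the sign of the surviving term.
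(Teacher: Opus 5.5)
Your proposal is correct and follows essentially the same route as the paper. Both arguments collapse the double sum using the single-cell support of $\nabla\cdot\boldsymbol\phi_e^1$ on boundary edges to get $D_\gamma=W_\gamma B_\gamma W_\gamma^\top$, bound $\operatorname{rank}(D)$ below by the boundary column block, and sum the blocks to reach $\sum_\gamma W_\gamma B_\gamma W_\gamma^\top$. The only difference is cosmetic: the paper kills a left null vector via the quadratic form $x^\top\bigl(\sum_\gamma D_\gamma\bigr)x=0$, whereas you use $y^\top\sum_\gamma D_\gamma=0$ (or the product-rank inequality), which needs only nonsingularity of that sum.
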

\begin{proof}
    See Appendix \ref{app:proof_prop_rank}.
\end{proof}
\subsection{Surjectivity property under $H(\mathrm{div})$ setting and flux conservation}\label{subsec:surjective}
The divergence operator $\nabla\cdot$ is surjective from the Raviart--Thomas space $\mathrm{RT}_k(K)$ onto $P_k(K)$ in the standard fine scale~\cite{boffi2013mixed}. In particular, for $\mathrm{RT}_0$/$dgP_0$, the divergence represents the cellwise flux conservation. Here, we extend this surjectivity property and conservation structure to coarse operators induced by the Whitney forms construction.

\begin{proposition}\label{prop:surjective}
Let $\mathcal{W}^1 \coloneqq \mathrm{span}\{\psi^{1}_{a}\}_{a=1}^{N_1},\ \mathcal{W}^0 \coloneqq \mathrm{span}\{\psi^0_i\}_{i=1}^{N_0}$ be the coarse flux and scalar spaces following \Cref{prop:coarse_0forms} and \Cref{prop:coarse_1forms}. Then: 
\begin{equation}
\nabla\!\cdot \mathcal{W}^1 \subseteq \mathcal{W}^0.
\end{equation}
Moreover, if the rank condition in \Cref{prop:rank} holds, then the coarse divergence matrix has full row rank and the coarse divergence is surjective from \(\mathcal{W}^1\) onto \( \mathcal{W}^0\).
\end{proposition}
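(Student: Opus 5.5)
The plan is to work throughout in the double-sum (cochain) representation of \Cref{remark:coarse_1forms}. In that representation the fine $\mathrm{RT}_0$/$dgP_0$ pair acts as a discrete divergence theorem: the divergence of an oriented flux basis function is the signed incidence of its two cells,
\[
\nabla\!\cdot\boldsymbol\phi^{RT_0}_{ab} \;=\; \phi^{P_0}_b-\phi^{P_0}_a,
\]
and this is antisymmetric in $(a,b)$. Here the degrees of freedom are integrated normal fluxes and the $P_0$ basis is scaled accordingly, which is the normalization implicit in reading the div/grad matrices as cell--facet adjacency matrices. I would state this identity first as a lemma about the fine complex. The coarse statement is then a purely algebraic computation with $W$.

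\textbf{Step 1 (interior forms).} Substituting the double sum for $\boldsymbol\psi^{1,\mathrm{int}}_{ij}$ and using linearity of $\nabla\cdot$ gives
\[
\nabla\!\cdot\boldsymbol\psi^{1,\mathrm{int}}_{ij}
=\sum_{a,b}W_{ia}W_{jb}\bigl(\phi^{P_0}_b-\phi^{P_0}_a\bigr)
=\Bigl(\sum_a W_{ia}\Bigr)\psi^0_j-\Bigl(\sum_b W_{jb}\Bigr)\psi^0_i ,
\]
where the last equality uses the definition \eqref{eq:coarse_0form} of $\psi^0_i$. The sums $c_i\coloneqq\sum_a W_{ia}$ are scalars, so the right-hand side is $c_i\psi^0_j-c_j\psi^0_i\in\mathcal W^0$. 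This is the discrete analogue of $\nabla\!\cdot(\lambda_i\nabla\lambda_j-\lambda_j\nabla\lambda_i)$ collapsing onto the 0-forms.

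\textbf{Step 2 (boundary forms).} I treat each boundary group $\mathcal E_\gamma$ as an exterior node. Each boundary edge $e$ then carries only the incidence of its single cell $K(e)$, oriented as in \Cref{prop:rank}. This is the step I expect to be the main obstacle. Taken edge by edge, the divergence $\sum_{e\in\mathcal E_\gamma}W_{i,K(e)}\beta^\gamma_e\phi^{P_0}_{K(e)}$ is a weighted restriction of cell indicators and is not visibly a combination of the $\psi^0_k$. To handle it, I would first test it against each $\psi^0_k$. This yields exactly the entries $(D_\gamma)_{ki}$ computed in \Cref{prop:rank}. I would then argue, in the same dense/incidence convention used for the interior forms, that the boundary flux of the coarse volume $i$ is accounted for by the exterior node with unit weight. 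Under that convention the divergence is a multiple of $\psi^0_i$ and hence lies in $\mathcal W^0$. If that convention cannot be made literal, the fallback is to read $\nabla\cdot$ on $\mathcal W^1$ through its matrix $D$ in the basis $\{\psi^0_i\}$, as in the commuting diagram.

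\textbf{Step 3 (surjectivity).} By Steps 1--2, $\nabla\cdot$ maps $\mathcal W^1$ into $\mathcal W^0$, and $D$ is its matrix representation with $\dim\mathcal W^0\le N_0$. Under the hypothesis of \Cref{prop:rank}, $\operatorname{rank}D=N_0$. Hence the image has dimension at least $N_0\ge\dim\mathcal W^0$ and must equal $\mathcal W^0$, which gives surjectivity (and in fact shows the $\psi^0_i$ are linearly independent).
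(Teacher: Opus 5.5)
\textbf{Genuine gap: Step~2 cannot be closed for the forms the paper defines.} By \eqref{eq:psi1_bc}, $\nabla\!\cdot\boldsymbol\psi^{1,\mathrm{bc}}_{i,\gamma}=\sum_{e\in\mathcal E_\gamma}W_{i,K(e)}\,\nabla\!\cdot\boldsymbol\phi^1_e$ vanishes on every cell not adjacent to $\mathcal E_\gamma$. With the orientation $\beta^\gamma_e>0$ of \Cref{prop:rank}, it is nonzero on $K(e)$ whenever $W_{i,K(e)}>0$, and this holds for some $i$ because the columns of $W$ sum to one. An element $\sum_k\xi_k\psi^0_k$ vanishes on a cell $a$ if and only if $\sum_k\xi_kW_{ka}=0$. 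So suppose the columns $W_{:,a}$ of the cells not adjacent to $\mathcal E_\gamma$ span $\mathbb R^{N_0}$, which is the generic case for a learned $W$. Then the only element of $\mathcal W^0$ with that support is $0$, and $\nabla\!\cdot\boldsymbol\psi^{1,\mathrm{bc}}_{i,\gamma}\notin\mathcal W^0$.

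Your ``exterior node with unit weight'' convention amounts to replacing $\boldsymbol\psi^{1,\mathrm{bc}}_{i,\gamma}$ by a fictitious form that carries flux out of \emph{every} cell. Its divergence would indeed be $-\psi^0_i$, but that form spans a different space. The paper handles this case with ``similarly, by linearity.'' That only shows the divergence lies in $Q_h^f$, so you have found a real weak point in the paper's argument rather than missed a trick.

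The same objection applies to Step~1, which you take over from the paper. The double sum of \Cref{remark:coarse_1forms} assumes a flux basis for every ordered pair of cells. For the edge-supported form \eqref{eq:psi1_int}, the coefficient of $\phi^{P_0}_a$ in $\nabla\!\cdot\boldsymbol\psi^{1,\mathrm{int}}_{ij}$ is instead $W_{ja}\sum_{b\sim a}W_{ib}-W_{ia}\sum_{b\sim a}W_{jb}$, summed over the mesh neighbours $b$ of $a$. This equals your $c_iW_{ja}-c_jW_{ia}$ only if all cells are mutually adjacent. For example, in 1D with $N_0=2$ and $W_{1a}=t_a=1-W_{2a}$, it is the discrete Laplacian $t_{a-1}-2t_a+t_{a+1}$ at interior cells, which is generally not in $\operatorname{span}\{t,1\}$.

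\textbf{What is provable, and how your Step~3 gives it.} The correct statement is your fallback reading: surjectivity of the projected divergence $\Pi\circ\nabla\!\cdot:\mathcal W^1\to\mathcal W^0$, where $\Pi$ is the $L^2(\Omega)$-orthogonal projection onto $\mathcal W^0$. This is also what the second half of the paper's proof actually establishes, through the weak identity \eqref{eq:proj_surjective}. Your Step~3 proves it without using Steps~1--2 at all. Since $(\nabla\!\cdot\boldsymbol v,\psi^0_k)_\Omega=(\Pi\nabla\!\cdot\boldsymbol v,\psi^0_k)_\Omega$, the map $\widehat{\mathbf F}\mapsto D\widehat{\mathbf F}$ factors as $\Pi\circ\nabla\!\cdot$ followed by the moment map $w\mapsto\bigl((w,\psi^0_k)_\Omega\bigr)_k$. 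Hence $N_0=\operatorname{rank}D\le\dim\Pi\nabla\!\cdot\mathcal W^1\le\dim\mathcal W^0\le N_0$. Strictly, $D$ is the matrix of this composite rather than of $\nabla\!\cdot$ in the basis $\{\psi^0_i\}$, but the rank inequality only needs the factorization.

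This dimension count is a genuinely different and shorter route than the paper's explicit solve of $D\widehat{\mathbf F}=M_0q$. It also shows that the $\psi^0_i$ are linearly independent. To repair the proposal, state the first claim in projected form and let Step~3 carry the surjectivity. Alternatively, state the literal inclusion only for the idealized all-pairs model of \Cref{remark:coarse_1forms}, with every cell also linked to each exterior node, where Steps~1 and~2 hold literally.
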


\begin{proof} Recall that we have fine-scale basis functions $\{\boldsymbol\phi^1\}\subset RT_0$ and $\{\phi^0\}\subset P_0$. Given the fine-scale divergence identity for $(\mathrm{RT}_0, \mathrm{dgP}_0)$, we have
\begin{equation}
    \nabla\!\cdot\boldsymbol\phi_e^{RT_0}=
\phi_{K_R(e)}^{P_0}-\phi_{K_L(e)}^{P_0}.
\end{equation}
For interior edge $e=(i,j)$, we can derive the identity in the coarse scale by substituting the definition of coarse basis functions:
\begin{align*}
    \nabla\!\cdot\psi_{ij}^1
    &=
    \nabla\!\cdot
    \left(  \sum_{a,b}W_{ia}W_{jb}\phi_{ab}^{RT_0}
    \right)  \\
    &=
    \sum_{a,b}W_{ia}W_{jb}\phi_b^{P_0}
    -
    \sum_{a,b}W_{ia}W_{jb}\phi_a^{P_0}\\
    &= \left(\sum_a W_{ia}\right)\psi_j^0
    -\left(\sum_b W_{jb}\right)
    \psi_i^0\\
    &=m_i\psi_j^0-m_j\psi_i^0\in\mathcal{W}^0.
\end{align*}
Similarly, by linearity, the boundary coarse 1-form is also a linear combination of fine $P_0$ basis functions and belongs to the span of the coarse scalar space. Therefore, $$\nabla\!\cdot\mathcal W^1\subseteq \mathcal W^0.$$ 
Next we prove the surjectivity of the coarse divergence operator. Let
\begin{equation}
\boldsymbol v=\sum_{a=1}^{N_1}
\widehat{F}_a\psi_a^1\in\mathcal{W}^1,
\qquad q=\sum_{i=1}^{N_0}\widehat{q}_i\psi_i^0\in\mathcal{W}^0.
\end{equation}
By linearity,
\begin{equation}
\nabla\cdot\boldsymbol v=\sum_{a=1}^{N_1}
\widehat{F}_a\nabla\cdot\psi_a^1\in\mathcal{W}^0.
\end{equation}
Then testing against $\psi_i^0$ for $i=1,\dots, N_0$ and expanding for coarse variables gives:
\begin{align}
    (\nabla\cdot v,\psi_i^0)_\Omega &=\left(\nabla\cdot\left(\sum_{a=1}^{N_1}\widehat{F}_a\boldsymbol\psi_a^1\right),\psi_i^0\right)_\Omega\\
    &=\sum_{a=1}^{N_1}\widehat{F}_a
\left(\nabla\cdot\boldsymbol\psi_a^1,\psi_i^0\right)_\Omega=\sum_{a=1}^{N_1}
\widehat{F}_a D_{ia }\\
(q,\psi_k^0)_\Omega&=
\sum_{i=1}^{N_0}
q_i(\psi_i^0,\psi_k^0)_\Omega=\sum_{i=1}^{N_0}
q_i (M_0)_{ik},
\end{align}
where $D$ and $M$ are as defined in \Cref{prop:coarse_operators}. When \(\operatorname{rank}(D)=N_0\), the map $D:\mathbb R^{N_1}\to\mathbb R^{N_0}$
is onto and thus for arbitrary $q$, there exist coefficients $\widehat{F}\in\mathbb{R}^{N_1}$ such that $D\widehat{F}=M_0q$. For this choice of $\widehat{F}$, the corresponding $\boldsymbol v\in\mathcal{W}^1$ satisfies
\begin{equation}\label{eq:proj_surjective}(\nabla\!\cdot\boldsymbol v,\psi_k^0)_\Omega=(q,\psi_k^0)_\Omega.
\end{equation}
Thus we conclude that the coarse divergence operator is surjective.
\end{proof}

Moreover, since the coefficient in \eqref{eq:psi1_int} is the discrete analogue of the classical Whitney exchange form \(\lambda_i\nabla\lambda_j-\lambda_j\nabla\lambda_i\), the resulting antisymmetry property \(\nabla\!\cdot \psi^{1}_{ij} = -\,\nabla\!\cdot \psi^{1}_{ji}\) with respect to the exchange of indices ensures by construction that the 1-forms encode equal and opposite fluxes exchanged between 0-forms. Summing over all indices cancels the internal fluxes, so that only the boundary contributions remain. This recovers a discrete conservation law consistent with the divergence theorem and shows that the learned coarse construction preserves the flux balance. 

\subsection{Graph calculus and circuit interpretation for FEEC}\label{subsec:graph_cal}
The preceding construction in \Cref{subsec:pou_whitney} and \Cref{subsec:coarse_operator} produces an $H(\mathrm{div})$-conforming reduced space where conservation is preserved by the coarse divergence operator. To formulate the GP optimal recovery problem in the next section, we now reinterpret the reduced subcomplexes as a graph-valued circuit model and connect coarse operators with the graph calculus. This interpretation bridges the structure-preserving FEEC with the graph-based GP model used to learn the nonlinearity and perform uncertainty quantification later. The viewpoint is related to the computational graph completion (CGC)~\cite{owhadi2022computational} problem, where the unknown functions and unobserved variables can be recovered on a given graph. In contrast, the graph in this work is induced by the coarse finite element subcomplexes, which can be considered an autonomous identification of a dense graph.

Let $\mathcal{G} = (\mathcal{V},\mathcal{E})$ be a graph with $\lvert \mathcal{V} \rvert=N_0$ vertices and $\lvert \mathcal{E} \rvert=N_1$ edges. The coarse spaces constructed via the partition of unity Whitney forms naturally define a graph where each vertex $i \in \mathcal{V}$ corresponds to a coarse 0-form $\psi^0_i$ and each edge $e \in \mathcal{E}$ corresponds to a coarse $1$-form $\psi^1_{e}$.
We define the cochain spaces as:
\[
C^0(\mathcal{G}) \coloneqq \mathbb{R}^{\lvert V \rvert}, 
\qquad
C^1(\mathcal{G}) \coloneqq \mathbb{R}^{\lvert \mathcal{E} \rvert}.
\]
Then the coarse variables admit the interpretation on vertices and edges, respectively:
\[
\widehat{\mathbf u}\in C^0(\mathcal G),
\qquad
\widehat{\mathbf F}\in C^1(\mathcal G),
\]
where $\widehat{\mathbf u}$ represents node states and $\widehat{\mathbf F}$ represents oriented fluxes or currents.

In order to formulate the mapping between nodal values and edge values, we introduce the edge-vertex signed incidence matrix $B_0\in\mathbb{R}^{E\times V}$ defined by 
\begin{equation}
(B_0)_{i,j}=
\begin{cases}
    -1 & \text{if edge }e_{i} \text{ leaves vertex }v_j,\\
    1 & \text{if edge }e_{i} \text{ enters vertex }v_j,\\
    0&\text{otherwise}.
\end{cases}
\end{equation}
Subsequently for an oriented edge $e=(i,j)$, we can define the discrete graph gradient operator and graph divergence as:
\begin{align*}
\text{(\textbf{Graph gradient})}&\quad(\delta_0\widehat{\mathbf u})(e)=\widehat{ u}_j-\widehat{ u}_i, \\
\text{(\textbf{Graph divergence})}&\quad(\delta_0^\top\widehat{\mathbf{F}})(v)=\sum_{v_i\sim v} \widehat{F}(v_i, v)-\sum_{v\sim v_j}\widehat{F}(v, v_j).
\end{align*}
The discrete graph gradient, mapping the nodal values to edge values, captures the difference across edges. Conversely, the graph divergence, mapping edge values back to values at nodes, measures the net flux entering or leaving the node $v$ over its adjacent edges.
Let $D_{\mathcal{G}} \in \mathbb{R}^{|\mathcal V|\times |\mathcal E|}$
denote the graph divergence matrix, defined by
\[
(D_{\mathcal{G}} \widehat{\mathbf F})_i
=
\sum_{e\in\mathcal{E}} D_{ie}\widehat{F}_{e}.
\]
With the construction of coarse Whitney forms in \Cref{subsec:coarse_operator}, $D$ has the same incidence structure as the graph divergence operator, possibly weighted by the mass or basis normalization depending on the specific construction. Thus, this gives the conservation law in the reduced spaces and supplies the finite-dimensional variables on which the GP acts.

\section{Method part 2: Optimal recovery of physics with quantified uncertainty}\label{sec:method2}
In this section, we develop a probabilistic representation of physical fields on the graph induced by the coarse spaces. This provides a unified framework where the physical law acts on discrete structures while uncertainty is quantified in a functionally consistent manner.

\subsection{Structure-preserving Optimal Recovery Problem}\label{subsec:opt_formulation}
We now introduce a GP optimal recovery formulation for learning the nonlinear flux correction in the coarse $H(\mathrm{div})-L^2$ space, consistent
with the FEEC structure established above. For each coarse edge $e=(i,j) \in \mathcal{E}$, we can define the local input with different choices:
\[
\hat{\mathbf{u}}_e \coloneqq
\begin{cases}
(\hat{u}_i, \hat{u}_j) \in \mathbb{R}^2, & \text{(endpoint representation)}, \\[4pt]
\delta_0 \hat{u}_e = \hat{u}_j - \hat{u}_i \in \mathbb{R}, & \text{(edge gradient representation)}.
\end{cases}
\]
For training instances $n=1,\ldots,N$, the desired GP input features become
\[x_e^{(n)}
=(\widehat{\mathbf u}_e^{(n)},
z^{(n)},
s^{(n)}).\]
Here $\widehat{\mathbf u}_e^{(n)}$ is the coarse state variable associated with edge $e$,  $z^{(n)}$ denotes the conditioning variable, and $s^{(n)}$ denotes the geometric embedding. 

In the independent kernel construction, for a fixed edge $e$, we collect the training inputs as $X_e=\{x_e^{(n)}\}_{n=1}^N$ and the corresponding global kernel is the block-diagonal matrix defined as 
\begin{equation}
   K_e\coloneqq K_e(X_e,X_e)\in\mathbb R^{N\times N}, \qquad K=\text{diag}\left(K_1,\ldots,K_{N_1}
\right)
\in
\mathbb R^{N_1N\times N_1N}.
\end{equation}
The GP model on edge $e$ maps local features to the flux on the edge:
\begin{equation}
    f_e(X_e) \;\mapsto\; \widehat{F}^{\rm gp}_e,\qquad \widehat{\mathbf F}^{\rm gp}=\text{vec}\left(\left[\widehat F_e^{\rm gp,(n)}\right]\right)\in\mathbb R^{N_1N},
\end{equation}
where each $f_e$ belongs to a reproducing kernel Hilbert space $\mathcal{H}_{K_e}$ and $\widehat{\mathbf F}^{\rm gp}$ is the vectorized global output over all edges. 

Instead of assigning independent GPs to each edge, we may also stack all edge samples into a global matrix 
\begin{equation}
    X=\{x_e^{(n)}:e=1,\ldots,N_1,\ n=1,\ldots,N\}
\end{equation}
and construct a single shared GP across all cochains:
\[
K=
\left[
k_\theta\!\left(x_e^{(n)},x_{e'}^{(m)}\right)
\right]_{(e,n),(e',m)}\in \mathbb R^{N_1N\times N_1N}
\]
where $k_\theta$ denotes the covariance kernel. Since both cases use a large global matrix, for simplicity we denote the global matrix as $K$ in the following.

Recall the governing equation for the physical law in \Cref{eq:flux-ansatz}. By restricting the PDE to data-driven Whitney spaces 
$\mathcal{W}^0(\Omega;\theta)$ and $\mathcal{W}^1(\Omega;\theta)$, the goal is to find $(u,\mathbf{F}) \in \mathcal{W}^0(\Omega;\theta)\times \mathcal{W}^1(\Omega;\theta)$ such that
\begin{align}
(\mathbf{F}, \mathbf{v})_\Omega
&= (u, \nabla \cdot\mathbf{v})_\Omega + (\mathcal{N}[u], \mathbf{v})_\Omega- \langle u_D, \mathbf{v}\cdot\mathbf{n} \rangle_{\Gamma_D}, &&\qquad \mathbf{v} \in \mathcal{W}^1(\Omega;\theta)
\\
(\nabla\cdot \mathbf{F}, q)_\Omega 
&= (f,q)_\Omega,&&\qquad q\in  \mathcal{W}^0(\Omega;\theta),
\end{align}
where $f$ is the source term and $\langle u_D, \mathbf{v}\cdot\mathbf{n} \rangle_{\Gamma_D}
\;=\;
\int_{\Gamma_D} u_D \, (\mathbf{v}\cdot\mathbf{n}) \, ds$ is the weak Dirichlet boundary condition. Recall that $\widehat{\mathbf u}$ denotes the coarse scalar coefficient. Let $\widehat{\mathbf F}^{\rm gp}$ denote the nonlinear flux correction on the coarse edge and $\widehat{\mathbf F}$ denote the total coarse flux coefficient. Using the FEEC structure, the problem can be further written as
\begin{align}
\mathbf{M}_1\widehat{\mathbf{F}} 
&= D^\top \widehat{\mathbf u} 
+ \mathbf{M}_1 \widehat{\mathbf F}^{\rm gp}-\widehat{\mathbf g}_D,
\\
D\widehat{\mathbf{F}} 
&= \widehat{\mathbf f},
\end{align}
where $\widehat{\mathbf f}$ is the coefficient vector of the source term and $\widehat{\mathbf g}_D$ represents the Dirichlet boundary contribution with  $\widehat{\mathbf g}_D=\int_{\Gamma_D} u_D \boldsymbol{\psi}^1\cdot \boldsymbol{n} \, ds$. $\mathbf{M}_1$ and $D$ are the discrete mass matrix and divergence operator as defined in~\Cref{prop:coarse_operators}. For simplicity, we let $\theta=(\theta_{GP},\theta_W)$ 
denote the set of trainable parameters for the GP model and neural Whitney forms. Eliminating $\widehat{\mathbf{F}}$ gives the reduced residual:
\begin{equation}R_{\theta}(\widehat{\mathbf u},\widehat{\mathbf F}^{\rm gp};z):=
D\mathbf{M}_1^{-1} D^\top \widehat{\mathbf u}
+
D\widehat{\mathbf F}^{\rm gp}-
\widehat{\mathbf{b}}, 
\qquad \text{with }\widehat{\mathbf{b}}=\widehat{\mathbf f}+D\mathbf{M}_1^{-1}\widehat{\mathbf g}_D.
\end{equation}

Finally, we arrive at the full optimization problem, combining the GP recovery and the data-driven Whitney forms on the graph:
\begin{equation}\label{eq:opt}
\begin{split}
	\min_{\theta, \widehat{\mathbf{u}}} \min_{\widehat{\mathbf F}^{\rm gp}} & \quad (\widehat{\mathbf F}^{\rm gp} )^\top(K(X,X) + \sigma_{\varepsilon}^2 I)^{-1} \widehat{\mathbf F}^{\rm gp} + \log \det (K(X,X) + \sigma_{\varepsilon}^2 I)\\
    &\quad +\sum_{n=1}^N
	\left\|
	\mathbf u_{\rm data}^{(n)}
	-
	\mathbf u_{\rm approx}^{(n)}
	\right\|^2 + \sum_{n=1}^N
	\left|
	\mathbf F_{\Gamma,{\rm data}}^{(n)}
	-
	\mathbf F_{\Gamma, \rm approx}^{(n)}
	\right|^2\\
	\text{s.t.} \qquad&      \mathcal R_{\theta}
\left(\widehat{\mathbf u}, \widehat{\mathbf F}^{\rm gp}; z^{(n)}
\right)
=0,\\
\end{split} \end{equation}
where $\mathbf{u}_{\text{approx}}, \ \mathbf{F}_{\text{approx}}$ are the reconstruction of the solution field and flux back to the original fine-scale space. The first two terms promote smoothness in the RKHS norm and penalize overly complex models to avoid overfitting. Moreover, as the recovery problem is now defined for the coarse coefficients $\hat{u}$ and $\widehat{\mathbf F}^{\rm gp}$, there is no guarantee that the solution and flux projected back to the original space can reconstruct the target solution $\mathbf{u}_{\text{data}}$ and $\mathbf{F}_{\text{data}}$ as desired. Thus, we introduce two additional terms consisting of the mean squared error (MSE) reconstruction loss in the fine scale.

The reconstruction of the solution in \eqref{eq:opt} can be directly obtained by mapping the coarse $0$-cochain coefficients back to the fine scale with the neural Whitney bases: 
\begin{equation}  \mathbf{u}_\text{approx}=\widehat{\mathbf u}^\top\Psi_0.
\end{equation}
The total ground-truth flux and reconstructed flux on the target boundary $\Gamma$ from the PoU Whitney form and coarse coefficient are:

\begin{equation}
  \mathbf{F}_{\Gamma, \text{data}} = \sum_{e\in\Gamma} F_{e},\qquad\mathbf{F}_{\Gamma,\text{approx}}=\sum_{i=1}^{Npou}\sum_{e\in\Gamma}W_{ie}\widehat{F}_i,
\end{equation}
where $W_{ie}$ denotes the corresponding weight for the $i$-th PoU and fine-scale edge $e$ and $\widehat{\mathbf F}=\mathbf{M}_1^{-1}D^\top \hat{u} +\widehat{\mathbf F}^{\rm gp} - \mathbf{M}_1^{-1}\mathbf{g}_D$.
  
\subsection{Bilevel optimization}
\label{subsec:bilevel}
The optimization problem is nested and strongly coupled because the GP inputs depend on the unknown coarse potentials $\widehat{\mathbf u}_e$. Therefore, we apply a bilevel optimization procedure to train the transformer for data-driven Whitney forms and the GP model for flux prediction simultaneously. The bilevel training algorithm is summarized in \Cref{alg:bilevel}.

\begin{algorithm}
\caption{Bilevel training for structure-preserving optimal recovery}
\label{alg:bilevel}
\small
\begin{algorithmic}[1]
\REQUIRE  \(\mathcal D=\{(z^{(n)},u^{(n)}_{\text{data}},F^{(n)}_{\text{data}})\}_{n=1}^N\); fine operators; PoU network \(W_{\theta_W}\); GP kernel \(K_{\theta_{\mathrm{GP}}}\)
\REQUIRE Optimizers: Adam for \(\theta_{\mathrm{GP}}\), Shampoo for \(\theta_W\)
\STATE Initialize \(\widehat{\mathbf u}\), \(\theta_W\), and 
\(\theta_{\mathrm{GP}}\)
\FOR{epoch \(=1,\ldots,T\)}

    \STATE \textbf{PoU forward:} Compute \(W\gets W_{\theta_W}(z)\)
    \STATE Compute coarse bases \(\Psi^0,\Psi^1\), and operators \(M_1,D,R_\Gamma,\widehat{\mathbf b}\)
    \STATE Form coarse GP features $X=\{x_e^{(n)}\}_{e,n},\
        \widetilde K
        =
        K_{\theta_{\mathrm{GP}}}(X,X)+\sigma_\varepsilon^2 I$

    \STATE \textbf{\underline{Inner problem}}: Fix \(\widehat{\mathbf u}\), \(\theta_W\), \(\theta_{\mathrm{GP}}\), solve the constrained recovery problem for \(\widehat{\mathbf F}^{\rm gp,\star}\)
    \STATE \quad \textbf{Update} $        \widehat{\mathbf F}^{\rm gp,\star}
        \leftarrow
    \operatorname{SolveKKT}
    \left(
    \widetilde{\mathbf K},D,R_\Gamma,\widehat{\mathbf b},F_{\Gamma,{\rm data}}
    \right)$
    \STATE \quad \textbf{Update} $\theta_{\mathrm{GP}}
        \leftarrow
        \operatorname{Adam}
        \left(
            \theta_{\mathrm{GP}},
            \nabla_{\theta_{\mathrm{GP}}}\mathcal L_{\mathrm{inner}}\right)$

    \STATE \textbf{\underline{Outer problem}}: Fix \(\widehat{\mathbf F}^{\rm gp, \star}\), \(\theta_{\mathrm{GP}}\), solve for $\widehat{\mathbf u}^\star$ with least squares method
    \STATE \quad \textbf{Update} \(\widehat{\mathbf u}^\star\leftarrow
\operatorname{lstsq}
\left(
DM_1^{-1}D^\top,\,
\widehat{\mathbf b}-D\widehat{\mathbf F}^{\rm gp, \star}
\right)\)
    \STATE \quad \textbf{Update} $\theta_{W}\leftarrow
        \operatorname{Shampoo}
        \left(\theta_W,
            \nabla_{\theta_W}
            \mathcal L_{\mathrm{outer}}
        \right)$
    
\ENDFOR
\STATE \RETURN trained \(\theta_W^\star,\theta_{\mathrm{GP}}^\star,\widehat{\mathbf u}^\star\), and \(\widehat{\mathbf F}^\star\).
\end{algorithmic}
\end{algorithm}

For the inner loop, we first fix $\widehat{\mathbf u}$ and $\theta_W$ and solve for the closed-form solution $\widehat{\mathbf F}^{\rm gp}$ via the Karush--Kuhn--Tucker (KKT) system. Since the flux observations are defined on fine boundary edges, for simplicity, we introduce the coarse-to-fine reconstruction operator $R_\Gamma(W): \mathbb{R}^{N_1} \to \mathbb{R} $ that maps the coarse flux degrees of freedom ($\hat{\mathbf F}$) back to the total flux through the boundary region $\Gamma$. 
Substituting $\widehat{\mathbf{F}}^{\rm gp}$ gives the inner optimization problem as
\begin{align}\label{eq:opt_inner}
\min_{\widehat{\mathbf F}^{\rm gp}}\;
&\left\|
\mathbf{F}_{\Gamma, \text{data}}
-R_\Gamma(W)
\left(
\mathbf{M}_1^{-1}D^\top \widehat{\mathbf u} +\widehat{\mathbf F}^{\rm gp} - \mathbf{M}_1^{-1}\mathbf{g}_D
\right)
\right\|_2^2 \\
&\quad
+
(\widehat{\mathbf F}^{\rm gp})^\top (K+\sigma_{\varepsilon}^2 I)^{-1}\widehat{\mathbf F}^{\rm gp}
+
\log\det(K+\sigma^2 I) \\
\text{s.t.}\quad
&D \widehat{\mathbf F}^{\rm gp}
=
\mathbf{b} - D \mathbf{M}_1^{-1} D^\top \widehat{\mathbf u}.
\end{align}
Since the term $\log\det(K+\sigma_{\varepsilon}^2 I)$ does not depend on $\widehat{\mathbf F}^{\rm gp}$, it can be omitted when solving for $\widehat{\mathbf F}^{\rm gp}$. Introducing the Lagrange multiplier $\lambda$, the optimality system becomes:
\[
\begin{bmatrix}
\tilde{K}^{-1}
+
R_\Gamma^\top R_\Gamma
&
D^\top \\
D & 0
\end{bmatrix}
\begin{bmatrix}
\widehat{\mathbf F}^{\rm gp} \\
\lambda
\end{bmatrix}
=
\begin{bmatrix}
R_\Gamma^\top
\Bigl(
\mathbf{F}_{\Gamma, \text{data}}
- R_\Gamma
\left(
\mathbf{M}_1^{-1}D^\top \widehat{\mathbf u} - \mathbf{M}_1^{-1}\mathbf{g}_D
\right)
\Bigr)
\\
\mathbf{b} - D \mathbf{M}_1^{-1} D^\top \widehat{\mathbf u}
\end{bmatrix}.
\]
where $\tilde{K}= K + \sigma^2 I$. Solving by the Schur complement, we obtain the closed-form solution for optimal $\widehat{\mathbf F}^{\rm gp}$ at the current stage:
\begin{align}
  \lambda
&=
\bigl(D \left(\tilde{K}^{-1} + R_\Gamma^\top R_\Gamma\right)^{-1} D^\top\bigr)^{-1}
\bigl(D \left(\tilde{K}^{-1} + R_\Gamma^\top R_\Gamma\right)^{-1} r_t - r_b\bigr),\\
\widehat{\mathbf F}^{\rm gp,\star}
&=
\left(\tilde{K}^{-1} + R_\Gamma^\top R_\Gamma\right)^{-1}\bigl(r_t - D^\top \lambda\bigr),
\end{align}
where $r_t=
R_\Gamma(W)^\top \Bigl(
\mathbf{F}_{\Gamma, \text{data}}-R_\Gamma\bigl(
\mathbf{M}_1^{-1}D^\top \widehat{\mathbf u} - \mathbf{M}_1^{-1}\mathbf{g}_D\bigr)
\Bigr)$ and $r_b = \mathbf{b} - D \mathbf{M}_1^{-1} D^\top \widehat{\mathbf u}.$ 

Then with the optimal $\widehat{\mathbf F}^{\rm gp}$, we optimize the parameters for GP kernels and noise with the Adam optimizer. The choice of kernels varies for different cases. For example, for the independent GP, we choose the RBF kernel matrix $K_{ij}$ for a given edge $e_{ij}$, defined as
\[
K_{ij}(e_{ij}) = \exp\left(-\frac{\| \mathbf{u}_i - \mathbf{u}_j \|^2}{2l_{e_{ij}}^2}\right),
\]
where the length scale $l$ and $\sigma$ are the trainable parameters, i.e., $\theta_{GP}=\{l, \sigma\}$. Specifically, in numerical experiments, we choose the same initialization on all edges. For the shared GP scenario, we consider the Automatic Relevance Determination (ARD) Matérn kernel \cite{Rasmussen2006} with $\nu=\tfrac{3}{2}$:
\[
K_{\nu=3/2}(x,x')
=
\sigma^2
\left(
1 + \sqrt{3} r
\right)
\exp(-\sqrt{3} r),
\quad
r^2 = \sum_d \frac{(x_d-x'_d)^2}{\ell_d^2}.
\]
Here, \(\ell_d>0\) is a trainable length scale associated with the \(d\)-th input feature, where a small \(\ell_d\) indicates that the predicted flux varies strongly in the \(d\)-th feature direction. This ARD parameterization allows the GP to have anisotropic scaling across different physical or geometric features. 

For the outer loop, with $\widehat{\mathbf F}^{\rm gp}$ and $\theta_{GP}$ fixed, we optimize over $\widehat{\mathbf u}$ and $\theta_W$ with the Shampoo optimizer \cite{gupta2018shampoo} by minimizing:
\begin{equation}\label{eq:opt_outer}
\begin{split}
	\mathcal{L}_{outer}&=\left\|\mathbf{F}_{\Gamma, \text{data}}
-R_\Gamma\left(\mathbf{M}_1^{-1}D^\top \widehat{\mathbf u} +\widehat{\mathbf F}^{\rm gp,\star}-\mathbf{M}_1^{-1}\mathbf{g}_D\right)\right\|^2\\
&\qquad + \left\|\mathbf{u}_{\text{data}}-\widehat{\mathbf u}^\top\Psi_0\right\|^2 \\
	\text{s.t.} &\quad D\mathbf{M}_1^{-1} D^\top \widehat{\mathbf u}
+D\widehat{\mathbf F}^{\rm gp,\star}
=\hat{\mathbf{b}}.\\
\end{split} \end{equation}
Essentially, we cast the optimal recovery problem on the reduced space subject to the equality constraint that enforces the conservation law exactly. Under this setting, we have a constrained optimization problem where the KKT conditions yield a saddle-point structure that we can use to derive a fast solution with the Schur complement. The bilevel training is not strictly necessary, but can be useful for efficient training when variables are coupled and different model components, here the GP and transformer, are trained jointly. 
\subsection{Geometric embedding for $H(\mathrm{div})$ coarse representations}\label{subsec:geom_embedding}
To incorporate geometric information about the coarse degrees of freedom into the GP, we introduce a geometric embedding based on domain integrals of the coarse Whitney basis functions. This embedding is designed to encode both the coarse variables in 0- and 1-forms in a manner consistent with the underlying FEEC structure. Because scalar $P_0$ coefficients represent coarse cellwise quantities while $H(\mathrm{div})$ coefficients represent flux degrees of freedom, we use different geometric summaries for 0-forms and 1-forms.

First, we consider the coarse $P_0$ geometric embedding 

\begin{proposition}
Let $\{\psi^0_i\}_{i=1}^{N_0}$ denote the coarse 0-form basis functions constructed as in \Cref{prop:coarse_0forms}.
For $i=1,\ldots,N_0$,  substituting the definition of $\psi^0_i$ gives the explicit representation for the geometric embedding on the $i$th PoU as
\begin{equation}
s_i
\coloneqq
\int_\Omega \psi_i^0(x)\,dx
=
\sum_{a\in\mathcal C_h}W_{ia}\int_\Omega \phi_a^{P_0}(x)\,dx.
\end{equation}
\end{proposition}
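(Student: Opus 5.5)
The identity follows from the definition of $\psi_i^0$ in \Cref{prop:coarse_0forms} and the linearity of the integral. I would substitute \eqref{eq:coarse_0form} into $s_i=\int_\Omega\psi_i^0\,dx$, which gives
\[
s_i=\int_\Omega \sum_{a\in\mathcal C_h} W_{ia}\,\phi_a^{P_0}(x)\,dx .
\]
For a fixed conditioning variable the weights $W_{ia}$ do not depend on $x$. The sum runs over the finite index set $\mathcal C_h$ with $N_{\mathrm{cell}}$ terms. So the integral can be interchanged with the finite sum and the constants $W_{ia}$ pulled out, which yields $\sum_{a}W_{ia}\int_\Omega\phi_a^{P_0}\,dx$. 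This is the claimed formula.

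The only point needing justification is that each integral is finite. Each $\phi_a^{P_0}$ is a piecewise constant on the bounded mesh, hence lies in $L^1(\Omega)$, and the interchange uses only linearity. I would also remark on a concrete form. When $\phi_a^{P_0}$ is the indicator $\chi_a$ of cell $a$, which is consistent with $\sum_a\phi_a^{P_0}=1$, the integral equals $|K_a|$, so $s_i=\sum_a W_{ia}|K_a|$ is a weighted area or volume of the $i$th partition.

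As a consistency check, summing over $i$ and using $\sum_i W_{ia}=1$ from \eqref{eq:W} gives $\sum_i s_i=|\Omega|$. This mirrors the partition-of-unity identity.

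No step is a real obstacle. The one subtlety is that $W=W_\theta(z)$ depends on the conditioning variable but not on $x$, and it is this that allows the weights to be moved outside the integral.
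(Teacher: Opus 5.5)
Your proof is correct and follows the paper's approach: the paper obtains the formula by substituting \eqref{eq:coarse_0form} and using linearity of the integral over the finite sum, with weights $W_{ia}$ that do not depend on $x$. Your additional remarks, $s_i=\sum_a W_{ia}|K_a|$ for indicator bases and $\sum_i s_i=|\Omega|$, are correct consistency checks but not needed for the statement.
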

This embedding defines a positive coarse area scalar that represents the effective measure of the coarse region associated with the $i$-th PoU component and does not encode any directional information. In terms of the GP input, $s_i$ should be considered the geometric information for coarse node $i$.

We next define the coarse $RT_0$ geometric embedding that accounts for both direction and magnitude.
\begin{proposition}
Let $\{\psi^1_k\}_{k=1}^{N_1}$ denote the coarse $RT_0$ basis functions constructed as in \Cref{prop:coarse_1forms}. We define the vector-valued geometric embedding associated with the $k$-th coarse flux degree of freedom as
\[
s_k
\coloneqq
\int_{\Omega} \psi^1_k(x)\, dx
\;\in\;
\mathbb{R}^2.
\]

Substituting the expansion of $\psi^1_k$, we obtain:
\[
s_k
=
\sum_{e} W_{k K(e)}
\int_{\Omega} \varphi^{RT_0}_e(x)\, dx.
\]

For each fine edge $e$ shared by adjacent cells $K_L(e)$ and $K_R(e)$, the $RT_0$ basis function satisfies:
\[
\int_{K_L(e) \cup K_R(e)} \varphi^{RT_0}_e(x)\, dx
=
c_e\,\hat{n}_e\,|e|,
\]
where $\hat{n}_e$ is the unit normal vector associated with the edge orientation, $|e|$ is the edge length, and $c_e$ depends on the reference element geometry.
\end{proposition}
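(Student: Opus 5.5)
The argument has two parts. First I expand $s_k$ using linearity of the integral and the definitions in \Cref{prop:coarse_1forms}. Then I compute the integral of a single fine $\mathrm{RT}_0$ basis function over its support.

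\textbf{Linearity step.} By \Cref{prop:coarse_1forms}, each coarse 1-form $\boldsymbol\psi_k^1$ is a finite linear combination of fine basis functions $\boldsymbol\phi_e^{RT_0}$. For a boundary form the coefficients are $W_{i,K(e)}$. For an interior form they are the antisymmetric products $W_{i,K_L(e)}W_{j,K_R(e)}-W_{j,K_L(e)}W_{i,K_R(e)}$, which I abbreviate by a generic coefficient $W_{kK(e)}$ as in the statement. Since the integral over $\Omega$ is linear and the sum is finite, the integral passes through the sum. This gives $s_k=\sum_e W_{kK(e)}\int_\Omega \varphi_e^{RT_0}\,dx$ directly.

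\textbf{Local integral.} The fine basis function $\varphi_e^{RT_0}$ is supported on $K_L(e)\cup K_R(e)$, so $\int_\Omega$ reduces to the integral over those two cells. On a simplex $K$, let $p_e^K$ be the vertex opposite $e$. Then $\varphi_e^{RT_0}|_K=\pm\frac{|e|}{d|K|}(x-p_e^K)$, with the sign fixed by the global orientation of $\hat n_e$. Integrating the affine field gives $\int_K (x-p_e^K)\,dx=|K|(x_K-p_e^K)$, where $x_K$ is the centroid. Hence the contribution of one cell is $\pm\frac{|e|}{d}(x_K-p_e^K)$.

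\textbf{Geometric identity.} The vector $x_K-p_e^K$ is $\frac{d}{d+1}$ times the vector from $p_e^K$ to the facet centroid $m_e$. I split it into its normal and tangential components relative to $e$. The normal component has length $\frac{d}{d+1}h_K$, where $h_K$ is the height of $K$ over $e$, and it points along $\hat n_e$ on the $L$ side and against it on the $R$ side. Combined with the orientation signs, the two cells therefore contribute normal parts of the same sign along $\hat n_e$. For $d=2$ each normal part equals $\frac{|e|h_K}{3}\hat n_e=\frac{2|K|}{3}\hat n_e$, so together they give $\frac{2}{3}(|K_L|+|K_R|)\hat n_e$. This is of the form $c_e|e|\hat n_e$ with $c_e=\frac{2(|K_L|+|K_R|)}{3|e|}$, a geometric constant determined by the reference-element shape.

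\textbf{Main obstacle.} The difficulty is the tangential components, which do not cancel in general. They reduce to $\frac{|e|}{d+1}\bigl((m_e-p_e^{K_L})-(m_e-p_e^{K_R})\bigr)_\tau$, which vanishes only when the two opposite vertices are mirror-symmetric about $e$ (for example, on uniform or reflection-symmetric meshes). So I would state the identity exactly under that symmetry assumption. In the general case I would write it as $c_e|e|\hat n_e$ plus a tangential correction of size $O(|e|\,h)$, and interpret $c_e\hat n_e$ as the effective direction vector. If an exact statement is required, I would instead redefine $c_e\hat n_e$ as $\frac{1}{|e|}\int\varphi_e^{RT_0}\,dx$, keeping the normal direction as its leading part. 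Either way, the directional and magnitude information claimed for the embedding is preserved.
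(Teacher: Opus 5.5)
The paper gives no derivation of the third display. It only justifies the linearity step, by ``substituting the expansion,'' and your treatment of that step matches it; you are also right that the coefficient $W_{kK(e)}$ is literal only for the boundary forms. Your local computation is the right one, and it shows that the identity with a scalar $c_e$ is false in general.

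You can finish the computation more cleanly than by splitting into normal and tangential parts. Write $p_L,p_R$ for the vertices opposite $e$ and $h_L,h_R$ for the heights of $K_L,K_R$ over $e$. With $\varphi_e|_{K_L}=\frac{|e|}{d|K_L|}(x-p_L)$ and $\varphi_e|_{K_R}=-\frac{|e|}{d|K_R|}(x-p_R)$, the two $m_e$ terms cancel exactly:
\[
\int_{K_L(e)\cup K_R(e)}\varphi_e^{RT_0}\,dx=\frac{|e|}{d+1}\,(p_R-p_L).
\]
Its normal part is $\frac{|e|}{d+1}(h_L+h_R)\,\hat n_e$. So your $c_e$ equals $(h_L+h_R)/(d+1)$, which depends on the physical element heights, not on the reference element. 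A unit-total-flux normalization only rescales this by $1/|e|$ and leaves the direction unchanged. For a boundary facet, the single-cell integral $\frac{|e|}{d+1}(m_e-p)$ also has a tangential part.

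Two points in your hedging need correcting.

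\begin{enumerate}
\item The tangential part vanishes if and only if $p_R-p_L\parallel\hat n_e$, i.e.\ the two opposite vertices have the same orthogonal projection onto the line (hyperplane) containing $e$. Mirror symmetry about $e$ is sufficient but not necessary, since $h_L\neq h_R$ is allowed.
\item The tangential part is not a lower-order correction. On shape-regular meshes both components are of size $|e|h$, so nothing singles out $c_e|e|\hat n_e$ as the ``leading'' part. Redefining $c_e\hat n_e$ as $\frac{1}{|e|}\int\varphi_e^{RT_0}\,dx$ is a tautology, not a fix.
\end{enumerate}

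Even a uniform mesh fails the identity. Cut every unit square by its lower-left-to-upper-right diagonal and take the horizontal edge from $(0,1)$ to $(1,1)$. Then $p_L=(0,0)$ and $p_R=(1,2)$, so the integral is $\frac13(1,2)$. This is tilted by about $27^\circ$ from $\hat n_e=(0,1)$ at every refinement level, and the paper's bell example uses an unstructured mesh.

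The correct conclusion is the exact formula above, or the paper's identity restricted to meshes where $p_R-p_L\perp e$. Correspondingly, $s_k$ is a weighted sum of the vectors $\frac{|e|}{d+1}(p_R-p_L)$ rather than of edge normals. These vectors still encode orientation and magnitude, so the use of $s_k$ as a GP feature is unaffected.
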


From the above expression, $\mathbf{s}_k$ is a weighted sum of edge-normal vectors $\hat{n}_e$ scaled by edge lengths $|e|$, and it encodes both the net orientation and the effective magnitude of the coarse flux degree of freedom $\psi^1_k$. Specifically, its direction reflects the dominant orientation of the fine-scale flux contributions aggregated into $\psi^1_k$, while its magnitude reflects the total weighted edge measure. This provides a geometric characterization of the coarse flux degree of freedom. Therefore, the above geometric embeddings are consistent with the structure of $H(\mathrm{div})$ spaces and enable the GP to incorporate geometric and directional information without relying on explicit spatial coordinates.

\subsection{Posterior error bound for boundary flux functionals}\label{subsec:error_bound}
The quantified uncertainty below is conditional on the learned representation from PoU Whitney forms and kernel hyperparameters. Under this conditioning, the boundary flux is a linear functional of the GP output at the coarse level and admits a closed-form posterior error bound. Previous work establishes a pointwise posterior error bound under the assumption that the GP input locations are fixed and known exactly \cite{propp2026discovery}. In this work, we extend this result to weighted linear functionals of the GP posterior, which is the relevant setting for boundary flux quantities.
\begin{proposition}\label{prop:F}
    Let $R_\Gamma(W)$ denote the coarse-to-fine reconstruction operator determined by $W$ in the proposed method \eqref{eq:opt_inner}. We define the flux functional on a given boundary $\Gamma$ and its estimator by
\begin{equation}
\begin{split}
      \mathcal{F}_\Gamma\coloneqq L_v(f)&\coloneqq v^\top f(Z)\coloneqq \sum_{m=1}^{M} v_m\, f(z_m),\\
\widehat{\mathcal{F}}_\Gamma&\coloneqq v^\top \hat f(Z),  
\end{split}
\end{equation}
where the weight $v^\top$ is the row vector of $R_\Gamma(W)$, i.e., $v_m=(R_\Gamma(W))_{m,:}$, and the error is defined by
\begin{equation}
    e\coloneqq\mathcal{F}_\Gamma-\widehat{\mathcal{F}}_\Gamma.
\end{equation}
\end{proposition}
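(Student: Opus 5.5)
Since \Cref{prop:F} is essentially a definition, the content to establish is that the objects it introduces are well defined and have the structure needed later. Specifically, $\mathcal F_\Gamma=L_v(f)$ should be a bounded linear functional on $\mathcal H_K$, and $\widehat{\mathcal F}_\Gamma$ should be the same functional applied to the optimal-recovery estimator. Then $e$ is a well-defined real number, and it depends linearly on $f$ once the data are fixed. I would verify these three facts in order, conditioning throughout on the learned $W$ and the kernel hyperparameters, as the paragraph preceding the proposition stipulates.

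\textbf{Step 1: the weight vector $v$ is fixed.} Recall that $R_\Gamma(W)$ in \eqref{eq:opt_inner} is a fixed linear map from coarse flux coefficients to the boundary total flux once $W=W_\theta(z)$ is frozen. Its row therefore gives a deterministic vector $v\in\mathbb R^M$. Here $M$ indexes the coarse edge/sample pairs $(e,n)$ at which the GP is evaluated, and the $z_m$ are the corresponding GP inputs $x_e^{(n)}$. This identifies the abstract notation $f(Z)$ with the vector $\widehat{\mathbf F}^{\rm gp}$ used in \Cref{subsec:bilevel}.

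\textbf{Step 2: $L_v$ is a bounded linear functional on $\mathcal H_K$.} Linearity is immediate from the finite sum. For boundedness I would use the reproducing property $f(z_m)=\langle f,K(z_m,\cdot)\rangle_{\mathcal H_K}$. This gives $L_v(f)=\langle f,k_v\rangle_{\mathcal H_K}$ with representer $k_v\coloneqq\sum_m v_mK(z_m,\cdot)$. Moreover, $\|k_v\|_{\mathcal H_K}^2=v^\top K(Z,Z)v<\infty$. This Riesz representation is the form that the subsequent error bound will need.

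\textbf{Step 3: the estimator and the error.} $\hat f$ is the optimal-recovery minimizer from \Cref{subsec:orp_gp}, i.e.\ the posterior mean $K(\cdot,\mathbf X)(K(\mathbf X,\mathbf X)+\sigma_\varepsilon^2I)^{-1}\mathbf Y$. In the constrained setting it is the KKT solution $\widehat{\mathbf F}^{\rm gp,\star}$. Either way it lies in $\mathcal H_K$ and is linear in the data. Hence $\widehat{\mathcal F}_\Gamma=v^\top\hat f(Z)$ is finite and linear in $\mathbf Y$. It follows that $e=\langle f-\hat f,k_v\rangle_{\mathcal H_K}$ is well defined.

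The only step requiring care is the first: matching the indexing of $R_\Gamma(W)$, which acts on $N_1$ coarse DOFs, with the stacked GP outputs of length $N_1N$. I would resolve this by taking $v$ to be the Kronecker extension of the row of $R_\Gamma$ to the sample of interest, padded with zeros elsewhere.
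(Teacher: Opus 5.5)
Your proposal is correct. The paper gives no proof of \Cref{prop:F}: it is a pure definition. The one analytic fact you add is the Riesz representer $k_v=\sum_m v_mK(z_m,\cdot)$, with $\|k_v\|_{\mathcal H_K}^2=v^\top K(Z,Z)v$. The paper uses exactly this, without isolating it, in the proof of \Cref{theorem:error}. There are two small corrections.

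First, $e$ is not linear in $f$ once the data are fixed. For fixed $Y$ it is affine: $e=L_v(f)-v^\top\hat f(Z)$. The structure that matters downstream is that, with $Y=f(X)+\varepsilon$, the error is jointly linear in $(f,\varepsilon)$:
$e=\langle f,r_v\rangle_{\mathcal H_K}-v^\top K_{ZX}A^{-1}\varepsilon$, where $r_v=k_v-K(\cdot,X)A^{-1}K_{XZ}v$ and $A=K(X,X)+\sigma_\varepsilon^2I$. Your form $\langle f-\hat f,k_v\rangle_{\mathcal H_K}$ is true pathwise. However, it hides the dependence of $\hat f$ on $f$, and that dependence is what produces the posterior covariance $\Sigma_Z$ in the bound.

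Second, the phrase ``either way'' in Step 3 is too quick. The KKT output $\widehat{\mathbf F}^{\rm gp,\star}=(\tilde K^{-1}+R_\Gamma^\top R_\Gamma)^{-1}(r_t-D^\top\lambda)$ is a finite vector of values at the training inputs, not an element of $\mathcal H_K$. To evaluate it at new inputs $Z$ you must specify a kernel extension. It is also not the unconstrained posterior mean of directly observed noisy values $Y=f(X)+\varepsilon$, which is the only estimator \Cref{theorem:error} covers as stated.

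Neither point affects the well-definedness of $\mathcal F_\Gamma$, $\widehat{\mathcal F}_\Gamma$ or $e$. Your zero-padding fix for the $N_1$ versus $N_1N$ indexing is a reasonable reading of the paper's loose notation $v_m=(R_\Gamma(W))_{m,:}$. Use the row of $R_\Gamma(W^{(n)})$ for the relevant sample rather than a single Kronecker factor, since $W$ varies with $z^{(n)}$.
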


\begin{theorem}\label{theorem:error}
Let $\mathcal X \subseteq \mathbb R^p$ be the Gaussian process input space, let $K:\mathcal X\times\mathcal X\to\mathbb R$ be a positive definite kernel
with associated RKHS $\mathcal H_K$, and let $f\in\mathcal{H}_K$. Let $X=(x_1,\ldots,x_N)\in\mathcal X^N,\,
Z=(z_1,\ldots,z_M)\in\mathcal X^M,$
where \(x_i,z_m\in\mathbb R^p\) and assume we have noisy data $\mathcal D=(X,Y)$ with $Y=f(X)+\varepsilon$ where $
\varepsilon\sim \mathcal N(0,\sigma_\varepsilon^2 I)$.
Let $\hat f\in\mathcal{H}_K$ denote the minimizer of the optimal recovery problem:
\[
\hat f(\cdot)=K(\cdot,X)\bigl(K(X,X)+\sigma_\varepsilon^2 I\bigr)^{-1}Y.
\]
Define the posterior covariance on $Z$ by 
$$ \Sigma_{Z}\coloneqq K(Z,Z)-K(Z,X)\Bigl(K(X,X)+\sigma_\varepsilon^2 I\Bigr)^{-1}K(X,Z).$$ 
For $v\in\mathbb R^M$, consider the weighted functionals $\mathcal F_\Gamma,\, \hat{\mathcal F}_\Gamma$ as defined in \Cref{prop:F}. Then we have the bounded mean-squared error satisfying:
\begin{equation}
\mathbb E_\varepsilon[e^2]\le
\|f\|_{\mathcal H_K}^2
v^\top\Sigma_Zv+\sigma_\varepsilon^2
\bigl\|v^\top K_{ZX}A^{-1}\bigr\|_2^2.
\label{eq:post-cov-bound}
\end{equation}
If we consider the diagonal bound, then \eqref{eq:post-cov-bound} can be written as
\begin{equation}
\mathbb E_\varepsilon[e^2]
\le
\|f\|_{\mathcal H_K}^2
\left(\sum_{m=1}^M |v_m|\,\sigma(z_m)\right)^2
+
\sigma_\varepsilon^2
\left\|\sum_{m=1}^M v_m K(z_m,X)(K_{XX} + \sigma_\varepsilon^2 I)^{-1}\right\|_2^2,
\label{eq:mse_error_bound}
\end{equation}
where $\sigma^2(z_m)\coloneqq(\Sigma_Z)_{mm}$.
\end{theorem}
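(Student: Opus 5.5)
We will work with $A\coloneqq K(X,X)+\sigma_\varepsilon^2 I$ and $K_{ZX}\coloneqq K(Z,X)$, and start by splitting the error into a deterministic bias and a zero-mean noise part. Since $\hat f(Z)=K_{ZX}A^{-1}(f(X)+\varepsilon)$, we have
\[
e=v^\top f(Z)-v^\top K_{ZX}A^{-1}f(X)-v^\top K_{ZX}A^{-1}\varepsilon \eqqcolon b-\eta .
\]
Because $\mathbb E_\varepsilon[\eta]=0$, the cross term vanishes and $\mathbb E_\varepsilon[e^2]=b^2+\mathbb E[\eta^2]$. For the noise term, $\mathbb E[\eta^2]=\sigma_\varepsilon^2\|v^\top K_{ZX}A^{-1}\|_2^2$ exactly, since $\operatorname{Cov}(\varepsilon)=\sigma_\varepsilon^2 I$. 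This gives the second summand in \eqref{eq:post-cov-bound}, so everything reduces to bounding the bias $b$.

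For the bias we use the reproducing property. Set $w\coloneqq A^{-1}K_{XZ}v\in\mathbb R^N$ and consider the representer
\[
g\coloneqq\sum_m v_mK(\cdot,z_m)-\sum_i w_iK(\cdot,x_i)\in\mathcal H_K .
\]
Then $b=\langle f,g\rangle_{\mathcal H_K}$, and Cauchy--Schwarz gives $b^2\le\|f\|_{\mathcal H_K}^2\|g\|_{\mathcal H_K}^2$. Expanding the norm,
\[
\|g\|^2=v^\top K_{ZZ}v-2v^\top K_{ZX}A^{-1}K_{XZ}v+v^\top K_{ZX}A^{-1}K_{XX}A^{-1}K_{XZ}v .
\]
Since $K_{XX}=A-\sigma_\varepsilon^2I\preceq A$, the last term is at most $v^\top K_{ZX}A^{-1}K_{XZ}v$. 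Therefore $\|g\|^2\le v^\top K_{ZZ}v-v^\top K_{ZX}A^{-1}K_{XZ}v=v^\top\Sigma_Z v$, which is the first summand.

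This step is the main subtlety: with noise, $\hat f$ is not an interpolant, so $\|g\|^2$ is not exactly $v^\top\Sigma_Zv$. Its exact value is $v^\top\Sigma_Zv-\sigma_\varepsilon^2\|A^{-1}K_{XZ}v\|^2$, and one must check that the discarded term has the favorable sign, which the comparison $K_{XX}\preceq A$ shows.

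For the diagonal form \eqref{eq:mse_error_bound}, note that $\Sigma_Z$ is positive semidefinite, since it is a Schur complement, and equivalently a Gram-type posterior covariance. Hence $|(\Sigma_Z)_{mm'}|\le\sigma(z_m)\sigma(z_{m'})$ by Cauchy--Schwarz for PSD matrices. It follows that
\[
v^\top\Sigma_Zv\le\sum_{m,m'}|v_m||v_{m'}|\sigma(z_m)\sigma(z_{m'})=\Bigl(\sum_m|v_m|\sigma(z_m)\Bigr)^2 .
\]
The noise term is simply rewritten as $\sum_m v_mK(z_m,X)A^{-1}$. Positive semidefiniteness of $\Sigma_Z$ also confirms that the first bound is meaningful.
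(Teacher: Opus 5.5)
Your proof is correct and follows essentially the same route as the paper: the bias/noise split of $e$, the representer $r_v$ with Cauchy--Schwarz, the identity $\|r_v\|_{\mathcal H_K}^2=v^\top\Sigma_Zv-\sigma_\varepsilon^2\|A^{-1}K_{XZ}v\|_2^2$, and the entrywise bound $|(\Sigma_Z)_{mn}|\le\sigma(z_m)\sigma(z_n)$ for the diagonal form. The only differences are minor: you drop the last term via $K_{XX}\preceq A$ where the paper substitutes $K_{XX}=A-\sigma_\varepsilon^2I$ directly, and you add a short justification that $\Sigma_Z\succeq 0$ (a Schur complement of a positive semidefinite block matrix), which the paper only asserts.
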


\begin{proof}
Let $A=K(X,X)+\sigma_\varepsilon^2 I$. For finite ordered sets
\(Z=(z_i)_{i=1}^{M}\) and \(X=(x_j)_{j=1}^{N}\), define:
\[
K_{ZX}\coloneqq\bigl(K(z_i,x_j)\bigr)_{i,j}.
\]
Since $Y=f(X)+\varepsilon$,  we can decompose the error as
\begin{equation}
\begin{split}
  e :&= \mathcal{F}-\widehat{\mathcal{F}}\\
&= v^\top f(Z)-v^\top K_{ZX}A^{-1}Y\\
&=
\underbrace{v^\top f(Z)-v^\top K_{ZX}A^{-1}f(X)}_{=:\,e_{\text{det},v}(f)}
-
\underbrace{v^\top K_{ZX}A^{-1}\varepsilon}_{=:\,e_{\text{noise},v}}.\label{eq:e-def}  
\end{split}
\end{equation}
We then evaluate the MSE:
\begin{align}
\mathrm{MSE}(e)
&=
\mathbb{E}_\varepsilon[e^2]
=
\mathbb{E}_\varepsilon\bigl[(e_{\det}-e_{\text{noise}})^2\bigr]\\
&=
e_{\det}^2+\mathrm{Var}(e_{\text{noise}}).
\label{eq:MSE-split}
\end{align}
For the noise part, since $\varepsilon\sim\mathcal N(0,\sigma_\varepsilon^2I_N)$, we have:
\begin{equation}
    \mathbb E_\varepsilon[e_{\text{noise},v}]=0,
\qquad
\operatorname{Var}[e_{\text{noise},v}]=\sigma_\varepsilon^2\|v^\top K_{ZX}A^{-1}\|_2^2.
\end{equation}
By the reproducing property,
\begin{equation}
f(z_m)=\langle f,K(z_m,\cdot)\rangle_{\mathcal H_K},
\qquad
f(x_i)=\langle f,K(x_i,\cdot)\rangle_{\mathcal H_K}.    
\end{equation}
Therefore, we further write the deterministic error in \eqref{eq:e-def} as
\begin{equation}
    e_{\text{det},v}(f)=\left\langle
f,
\sum_{m=1}^M v_m
\Bigl[
K(z_m,\cdot)-K(z_m,X)A^{-1}K(X,\cdot)
\Bigr]
\right\rangle_{\mathcal H_K}
=
\langle f,r_v\rangle_{\mathcal H_K}.
\end{equation}
Thus 
\begin{equation}\label{eq:det}
    \mathbb{E}_\varepsilon[e^2]=\langle f,r_v\rangle_{\mathcal H_K}^2+\sigma_\varepsilon^2
\bigl\|v^\top K_{ZX}A^{-1}\bigr\|_2^2.
\end{equation}
By applying the Cauchy--Schwarz inequality, we have:
\begin{align}
|\langle f,r_v\rangle_{\mathcal H_K}|^2\le
\|f\|_{\mathcal H_K}^2\|r_v\|_{\mathcal H_K}^2.
\end{align}
We now compute $\|r_v\|_{\mathcal H_K}^2$. Recall the reproducing property that for ordered sets \(Z=(z_i)_{i=1}^{M}\) and \(X=(x_j)_{j=1}^{N}\), and vectors $a\in\mathbb{R}^M,\,b\in\mathbb{R}^N$,
\begin{align}
    \left\langle
a^\top K(Z,\cdot),
b^\top K(X,\cdot)
\right\rangle_{\mathcal H_K}&=\sum_{i=1}^{M}\sum_{j=1}^{N}
a_i b_j
\langle K(z_i,\cdot),K(x_j,\cdot)\rangle_{\mathcal H_K}\\
&=\sum_{i=1}^{M}\sum_{j=1}^{N}
a_i b_j K(z_i,x_j)=a^\top K_{ZX}b.
\label{eq:finite-reproducing-identity}
\end{align}
Then $r_v(\cdot)$ can be equivalently written as 
\begin{equation}
    r_v(\cdot)=v^\top K(Z,\cdot)-\alpha^\top K(X,\cdot),
\end{equation}
where
\begin{align*}
\alpha^\top
&=
v^\top K_{ZX}A^{-1}\\
K(Z,\cdot)
&=\bigl(K(z_1,\cdot),\ldots,K(z_M,\cdot)\bigr)^\top,\\
K(X,\cdot)
&=\bigl(K(x_1,\cdot),\ldots,K(x_N,\cdot)\bigr)^\top
\end{align*}
Applying \eqref{eq:finite-reproducing-identity} yields:
\begin{align}
    \|r_v\|_{\mathcal H_K}^2&=
\left\langle
v^\top K(Z,\cdot)-\alpha^\top K(X,\cdot),
v^\top K(Z,\cdot)-\alpha^\top K(X,\cdot)
\right\rangle_{\mathcal H_K}\\
&=
\left\langle v^\top K(Z,\cdot),v^\top K(Z,\cdot)\right\rangle_{\mathcal H_K}
-
2\left\langle v^\top K(Z,\cdot),\alpha^\top K(X,\cdot)\right\rangle_{\mathcal H_K}\\
&\qquad +
\left\langle \alpha^\top K(X,\cdot),\alpha^\top K(X,\cdot)\right\rangle_{\mathcal H_K}\\
&=v^\top K_{ZZ}v-2v^\top K_{ZX}A^{-1}K_{XZ}v+v^\top K_{ZX}A^{-1}K_{XX}A^{-1}K_{XZ}v\\
    &=v^\top K_{ZZ}v-v^\top K_{ZX}A^{-1}K_{XZ}v-\sigma_\varepsilon^2v^\top K_{ZX}A^{-2}K_{XZ}v\\
    &=v^\top\Sigma_Zv-\sigma_\varepsilon^2
v^\top K_{ZX}A^{-2}K_{XZ}v\\
&=v^\top\Sigma_Zv-\sigma_\varepsilon^2\left\|A^{-1}K_{XZ}v\right\|_2^2\le v^\top\Sigma_Zv\label{eq:det-cauchy}
\end{align}
since $\left\|A^{-1}K_{XZ}v\right\|_2^2\ge 0$ for $K_{XX}\ge 0$ and symmetric positive definite $A$.

Hence with \eqref{eq:det} and \eqref{eq:det-cauchy}, we obtain:
\begin{equation}
\mathbb E_\varepsilon[e^2]\le
\|f\|_{\mathcal H_K}^2
v^\top\Sigma_Zv+\sigma_\varepsilon^2
\bigl\|v^\top K_{ZX}A^{-1}\bigr\|_2^2.
\end{equation}
The decomposition of error into noise and deterministic components is readily interpretable. The deterministic component captures how well the training data at $X$ could reconstruct the functional $v^\top f(Z)$ if the observations were noiseless. This error is largely controlled by the RKHS norm (or complexity) of the true function $f$ and by the posterior covariance at evaluation points $Z$. The noise component, controlled by $\sigma_{\varepsilon}^2\|v^\top K_{ZX}A^{-1}\|_2^2$, captures how much the observation noise is amplified through the kernel reconstruction and through the linear functional $v^\top$.

Finally, we derive the diagonal posterior standard deviation from the full covariance form. Recall that $\Sigma_{Z}$ is a positive semidefinite block matrix. We therefore have \begin{equation}
    |(\Sigma_Z)_{mn}|
\le
\sqrt{(\Sigma_Z)_{mm}}\sqrt{(\Sigma_Z)_{nn}}
=
\sigma(z_m)\sigma(z_n).
\end{equation}
Thus, we have:
\begin{equation}\begin{aligned}\label{eq:post_std}
    v^\top\Sigma_Zv&=\sum_{m,n=1}^M v_mv_n(\Sigma_Z)_{mn}\\
    &\le\sum_{m,n=1}^M|v_m||v_n|\sigma(z_m)\sigma(z_n)\\
    &=\left(\sum_{m=1}^M |v_m|\sigma(z_m)\right)^2
\end{aligned}\end{equation}
Substituting \eqref{eq:post_std} into \eqref{eq:post-cov-bound} gives the desired error bound in the diagonal form \Cref{eq:mse_error_bound}. This bound can be interpreted as yielding large errors if the functional places a large weight on uncertain locations.
\end{proof}

\section{Results}
\label{sec:results}
In this section, we demonstrate the performance of our proposed method on various problems, including the 1D Poisson equation with analytic solution (\Cref{subsec:1d_linear}), the nonlinear diffusion equation with manufactured solution (\Cref{subsec:1d_nonlinear}), and the 2D advection-diffusion equation on a bell-shaped complex geometry (\Cref{subsec:bell}). We also present results for a semiconductor $p-n$ diode problem (\Cref{subsec:diode}) to illustrate the range over which the uncertainty estimates remain reliable for the proposed method. For the training of data-driven Whitney forms, we consider a lightweight cross-attention transformer with dropout.

\subsection{Toy one-dimensional linear example}
\label{subsec:1d_linear}
Consider a toy example of the one-dimensional Poisson equation on $\Omega = (0,1)$:
\begin{equation}\label{eq:1d_poisson}
  F = -\nabla u, \qquad
  \nabla \cdot F = f,
\end{equation}
with Dirichlet boundary conditions $
  u(0) = \alpha$, $u(1) = 0$, and constant source term $f \equiv 1$.  The analytic solution is:
\begin{equation}
  u(x) = -\tfrac{x^2}{2} + \bigl(\tfrac{1}{2} - \alpha\bigr)x + \alpha,
  \qquad
  F(x) = x + \alpha - \tfrac{1}{2}. 
\end{equation}
The goal is to learn the solution $u$ and the associated flux $F$ given the left boundary condition $\alpha$. In particular, we use this toy example because the flux depends linearly on the left boundary condition. 

In the implementation, to construct the fine-scale discretization, we employ the classic Galerkin finite element with $P_0$ elements for solution $u$ and $P_1$ Lagrange elements for the flux. For a mesh with $N$ cells, we have $N_{cell}=N$ cell-averaged values for $u$ and $N_{face}=N+1$ nodal flux values for $F$. Then the sparse basis evaluations in the fine scale $\phi^0\in\mathbb{R}^{N_{\text{cell}}\times N_q}$ and $\phi^1\in\mathbb{R}^{N_{\text{face}}\times N_q}$ are constructed with the quadrature weight exponent $\frac{1}{2}$. In one dimension, this is analogous to the lowest-order Raviart--Thomas space $\mathrm{RT}_0$ where each hat function $\phi_i^1$ is supported on the two cells adjacent to node $i$, and its derivative $d\phi_i^1/dx$ plays the role of $\nabla\!\cdot\!\phi_i^F$. The resulting $P_0$--$P_1$ pair yields the standard mixed $P_0$--$H(\mathrm{div})$ formulation in 1D. In addition, for a prescribed number of PoU, we have $N_0=N_{\text{PoU}}$  for the coarse 0-forms and $N_1^{\text{int}}=\binom{N_0}{2}$ for the interior coarse 1-forms, respectively. For the 1D problem, we explicitly set two boundary 1-forms $\psi_1^{\rm left}$ and $\psi_1^{\rm right}$, which gives the $N_1^{bc}=2$ and total number of coarse 1-forms $N_1=\binom{N_0}{2}+2$.

Comparison of the ground truth with the prediction is shown in Fig.~\ref{fig:1d_toy_post} (left). The reconstructed solution field from the coarse 0-forms can approximate the ground-truth solution with the MSE below $10^{-5}$. Fig.~\ref{fig:1d_toy_post} (right) illustrates the GP inference of the left boundary flux with respect to different values of $\alpha$. For the 1D problem where the target boundary $\Gamma$ is reduced to the left boundary point $x=0$, the coarse-to-fine reconstruction of flux in~\Cref{prop:F} is simplified to use the coarse boundary 1-form for the left boundary. Furthermore, as this problem is linear, the error bound is still tight near the training range even in the small extrapolation area, with only a slight increase in the most extreme cases, $\alpha=0.5$ and $\alpha=3.5$.
\begin{figure}[t]
    \centering
    \begin{subfigure}{0.48\textwidth}
        \centering
        \includegraphics[width=\linewidth]{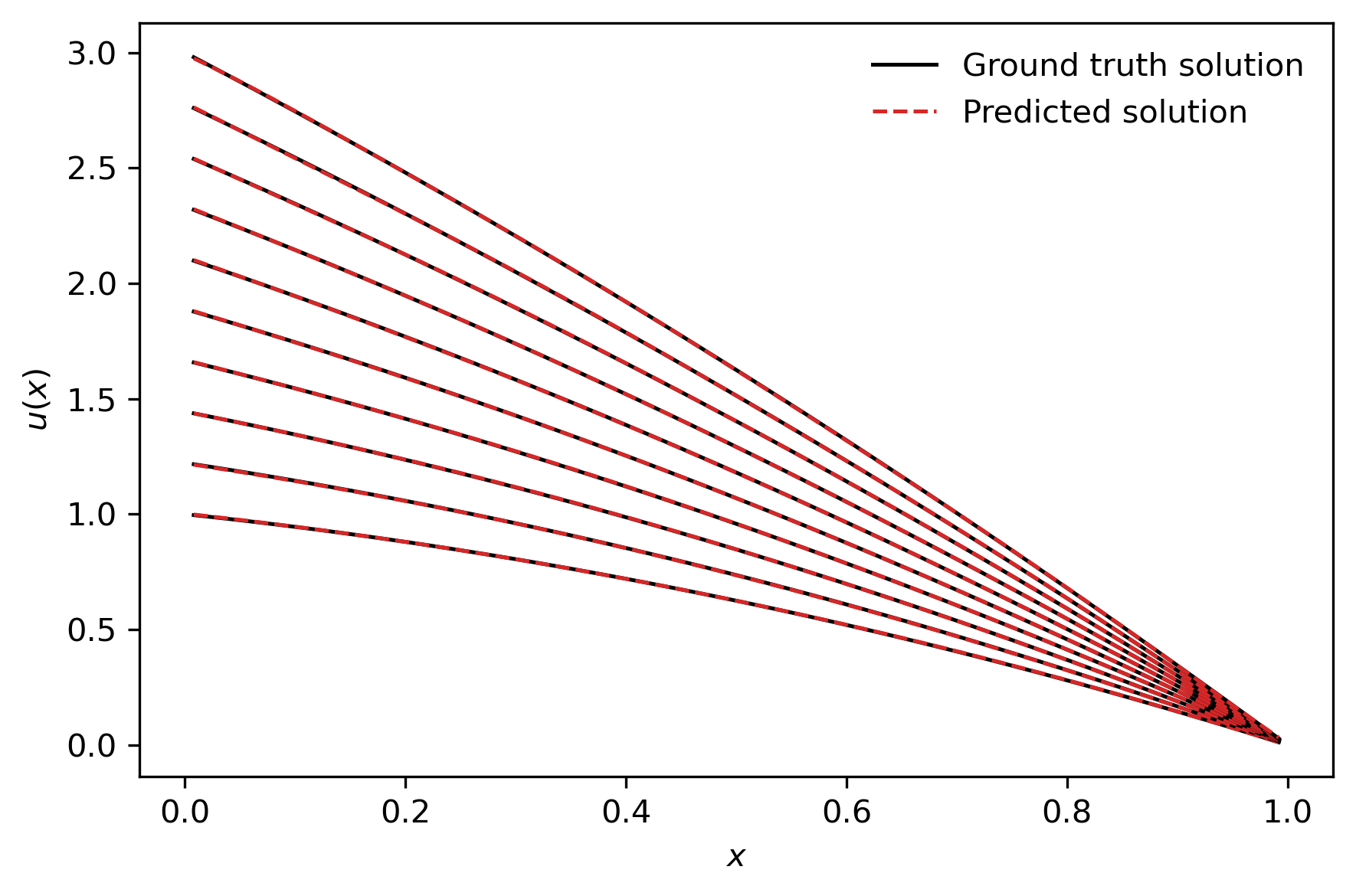}
    \end{subfigure}
    \hfill
    \begin{subfigure}{0.48\textwidth}
        \centering
        \includegraphics[width=\linewidth]{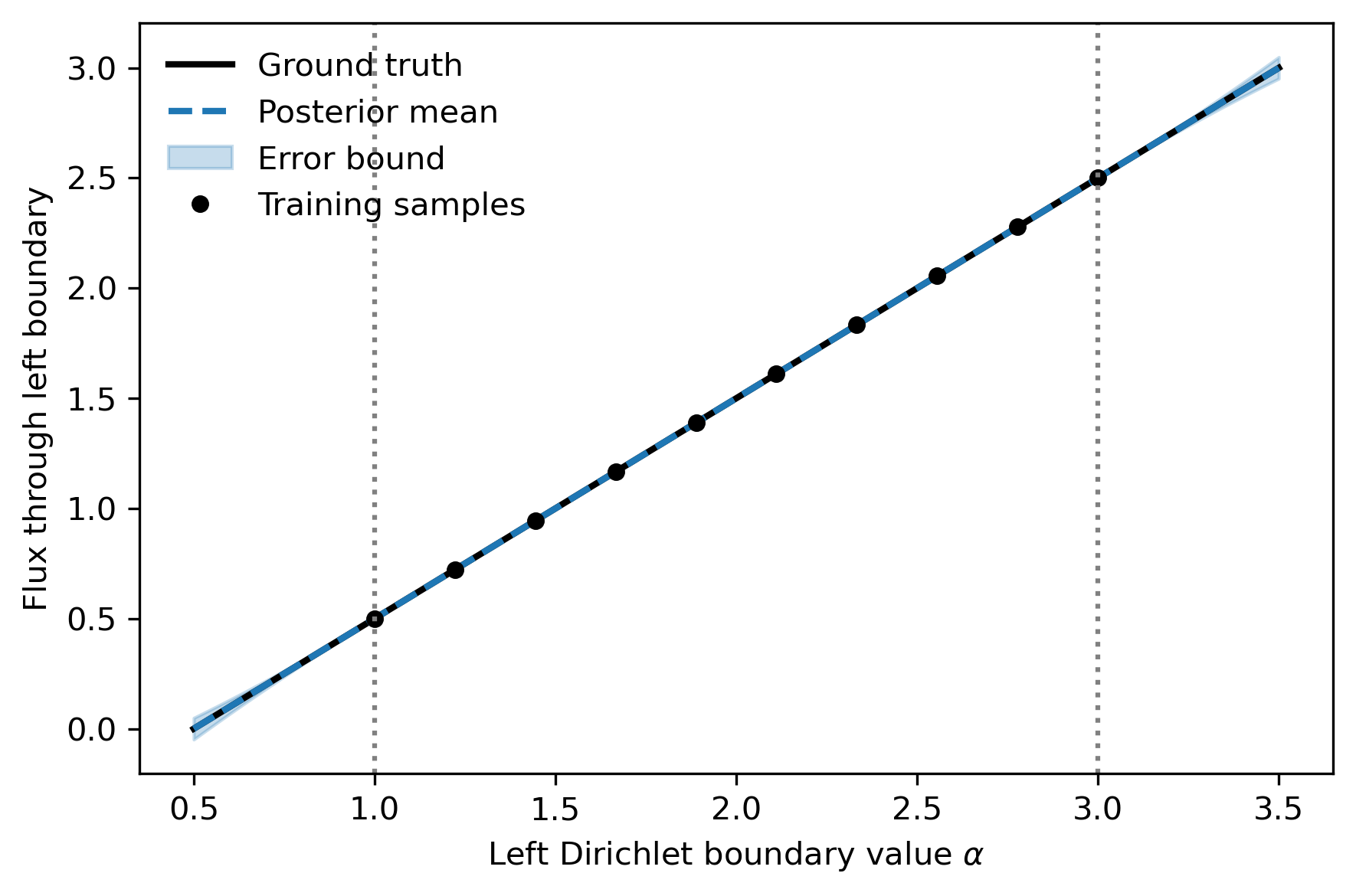}
    \end{subfigure}
    \caption{\textbf{Solution reconstruction and quantified uncertainty for the toy example.}  \textbf{Left}: Solution field $u$ with respect to different left boundary conditions $\alpha$. The black solid line is the analytic solution, and the red dashed line is the reconstructed solution. \textbf{Right}: Inference result for the left boundary flux with respect to left boundary conditions $\alpha$ with shaded blue area as the error bound from Theorem~\ref{theorem:error}.}
    \label{fig:1d_toy_post}
\end{figure}

\subsection{1D nonlinear example with manufactured solution}
\label{subsec:1d_nonlinear}
We next consider a nonlinear diffusion equation with piecewise nonlinear diffusivity and the same Dirichlet boundary conditions as in \Cref{subsec:1d_linear}:
\begin{equation}\label{eq:1d_nonlinear_mixed}
  F = -k(u)\,\nabla u, \qquad
  \nabla \cdot F = f,
\end{equation}
where
\begin{equation}\label{eq:k_of_u}
  k(u) =
  \begin{cases}
    k_0 & \text{if } u \le u_0, \\
    \beta\,(u - u_0)^q + k_0 & \text{if } u > u_0.
  \end{cases}
\end{equation}
The analytic solution is $u(x) = \alpha(1 - x)$, which gives $F(x) = \alpha\, k\bigl(\alpha(1-x)\bigr)$ and 
\[
  f(x) = \nabla \cdot F(x) =
  \begin{cases}
    0 & \text{if } \alpha(1-x) \le u_0, \\
    -\beta q\alpha^2 \bigl(\alpha(1-x)-u_0\bigr)^{q-1}& \text{if } \alpha(1-x) > u_0.
  \end{cases}
\]
This problem features a nonlinear constitutive law $k(u)$, a
spatially varying source $f(x)$ that depends on the parameter $\alpha$,
and a kink in $k$ at the threshold $u = u_0$. Therefore, we have a manufactured problem where the flux is a
nonlinear function of the left boundary condition $\alpha$ with a transition point at $x=1-\frac{u_0}{\alpha}$.
\begin{figure}[htbp]
    \centering   \includegraphics[width=\linewidth]{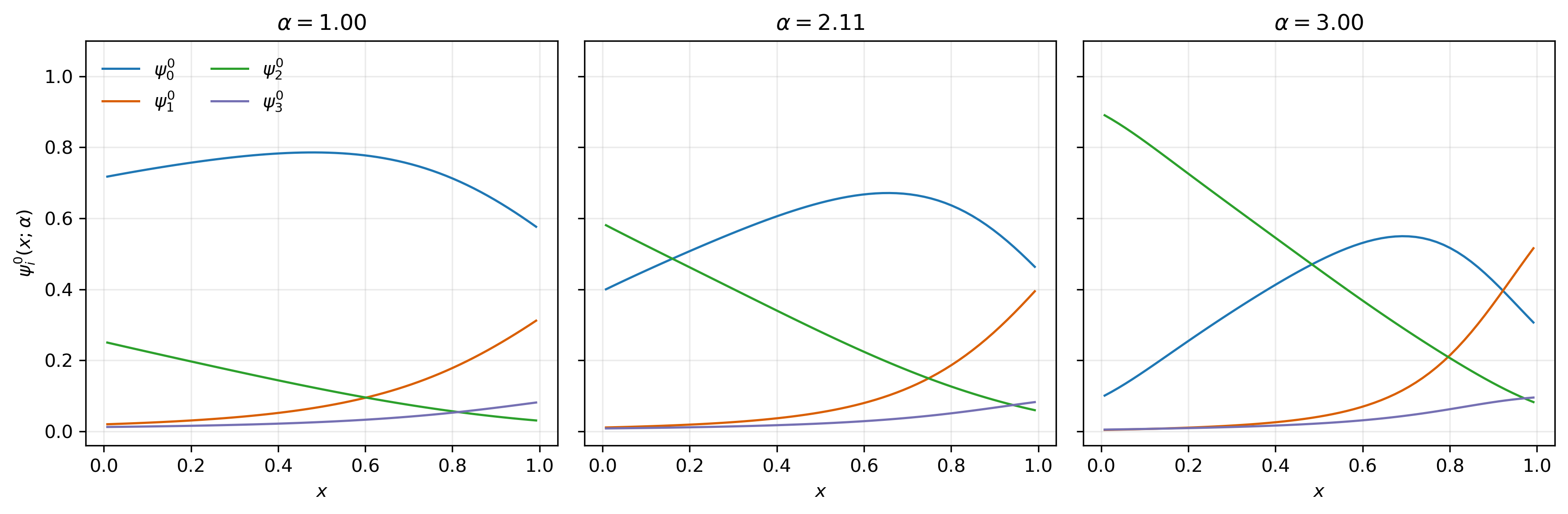}
    \caption{\textbf{Conditional coarse bases for \Cref{subsec:1d_nonlinear}}. Coarse shape functions $\{\psi^0_i\}$ evaluated on the fine-scale nodes can adapt to different values $\alpha=[1,2.1, 3]$.}
    \label{fig:1d_nonlinear_pou}
\end{figure}
\begin{figure}[htbp]
    \centering
\includegraphics[width=0.6\linewidth]{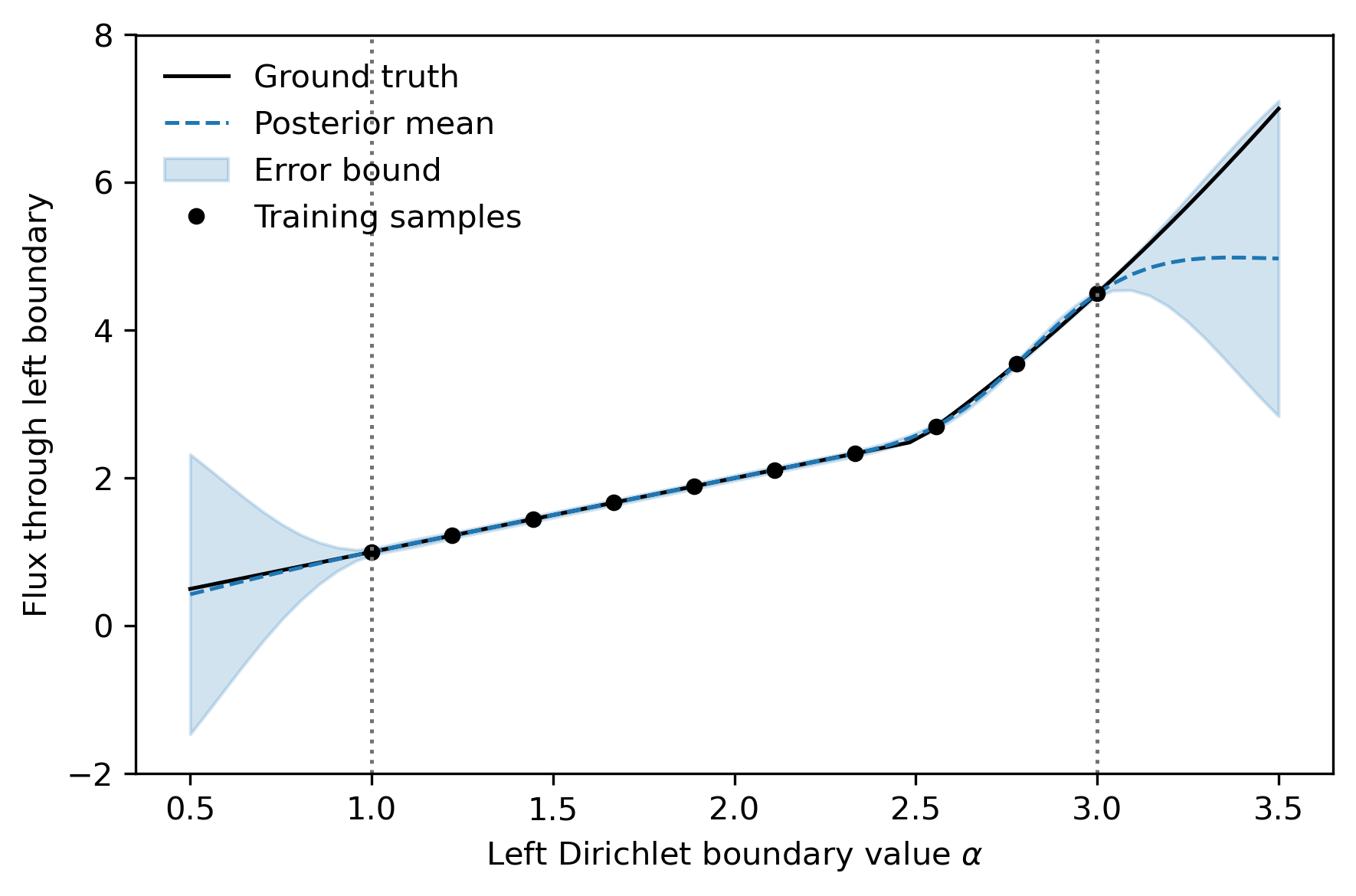}
    \caption{\textbf{Quantified uncertainty for the left-boundary flux in the 1D nonlinear problem.} The learned surrogate predicts the boundary flux \(F_\Gamma\) as a function of the left Dirichlet boundary value \(\alpha\). The growth of the uncertainty band outside the training range in $\alpha$ reflects reduced confidence in extrapolation.} 
    \label{fig:1d_nonlinear_post}
\end{figure}
The visualization of coarse shape functions $\{\psi_j^0\}$ in Fig.~\ref{fig:1d_nonlinear_pou} demonstrates that the conditional neural Whitney form allows an adaptive adjustment of the PoU. As $\alpha$ changes, the coarse basis functions reallocate their support to better accommodate the condition. One coarse basis function becomes much more dominant near the left side at larger values of $\alpha$, suggesting that the solution structure there becomes more parameter-sensitive. For this example with both linear and nonlinear regions, the error bound grows more noticeably in the extrapolation range than in the previous linear example (Fig.~\ref{fig:1d_nonlinear_post}). Furthermore, in the left extrapolation region where the flux is still linear in the conditioning variable, the GP-inferred flux deviates less from the ground-truth than in the nonlinear extrapolation region on the right. The error bound grows in both extrapolation directions because the distance in the GP feature space from the training samples increases regardless of whether the extrapolation is in the linear regime or the nonlinear regime. 

\subsection{Conditioned 2D advection-diffusion equation in complex geometry}
\label{subsec:bell}
We extend the problem to the steady advection-diffusion equation in a complex geometry and an unstructured mesh:
\begin{equation}\label{eq:2d_advdiff}
  -\nabla\cdot(k\,\nabla u) + \boldsymbol{\beta}\cdot\nabla u = 0
  \quad \text{in } \Omega,
\end{equation}
where $k$ is the diffusion coefficient and $\boldsymbol{\beta} = (\cos\theta, \sin\theta)$ is the advection direction with $\theta \in [0, 2\pi)$. The domain is a two-dimensional ``bell''-shaped domain $\Omega$ with three distinct Dirichlet
boundary conditions as shown in Fig.~\ref{fig:2d_mesh}. We use an unstructured triangular mesh to discretize the complex bell geometry and parameterize the problem with both $\theta$ and $u_{crack}$. Specifically, we have the top handle of the bell ($\Gamma_1$) with $u = 1$, the crack boundary ($\Gamma_2$) with $u = u_{crack}$, and the remaining bell-body boundary ($\Gamma_3$) with $u = 0$. The conditioning variable $\theta$ controls the advection direction and thus changes the transport pattern inside the domain.
\begin{figure}[htbp] \centering\includegraphics[width=\linewidth]{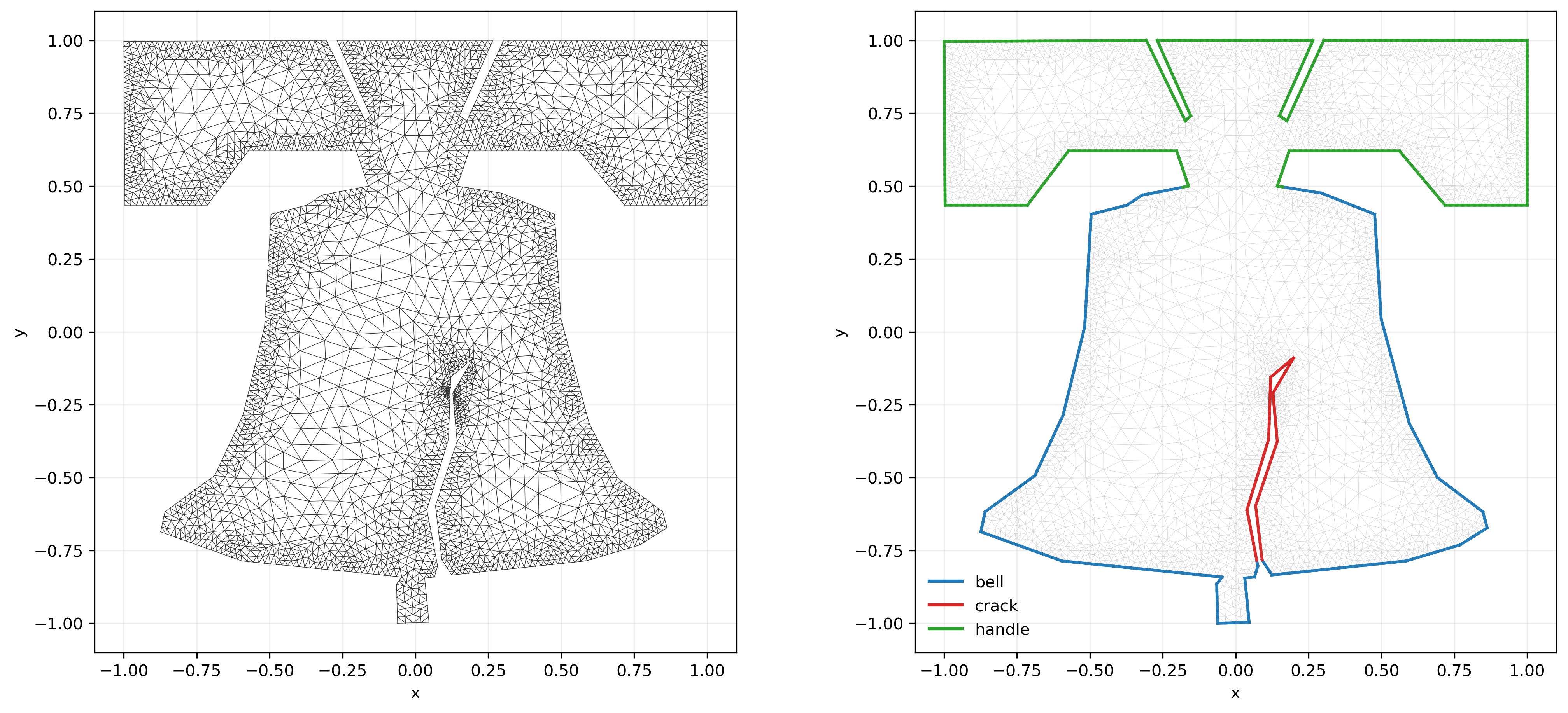}
    \caption{\textbf{Complex geometry demonstration.} Left figure shows the irregular triangular mesh and right figure shows the nonhomogeneous Dirichlet boundary conditions.}
    \label{fig:2d_mesh}
\end{figure}

\begin{figure}[t]
    \centering
    \begin{subfigure}{0.9\textwidth}
    \centering
    \includegraphics[width=\linewidth]{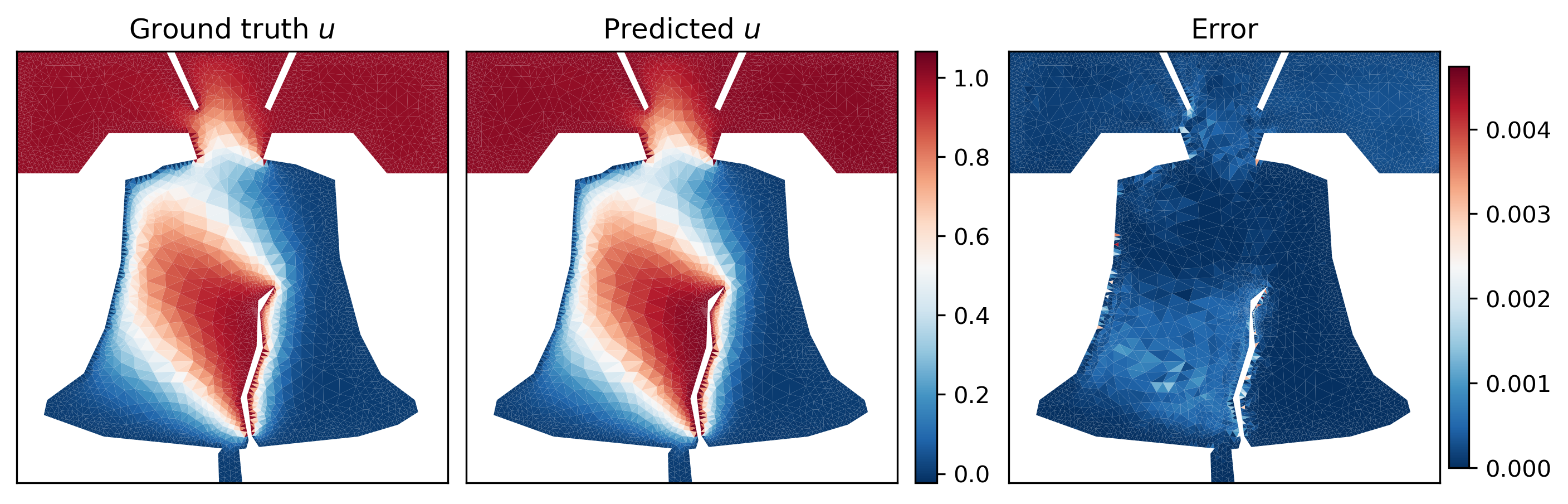}
    \label{fig:2d_solution}  
    \end{subfigure}
    \begin{subfigure}{0.48\textwidth}
        \centering
        \includegraphics[width=\linewidth]{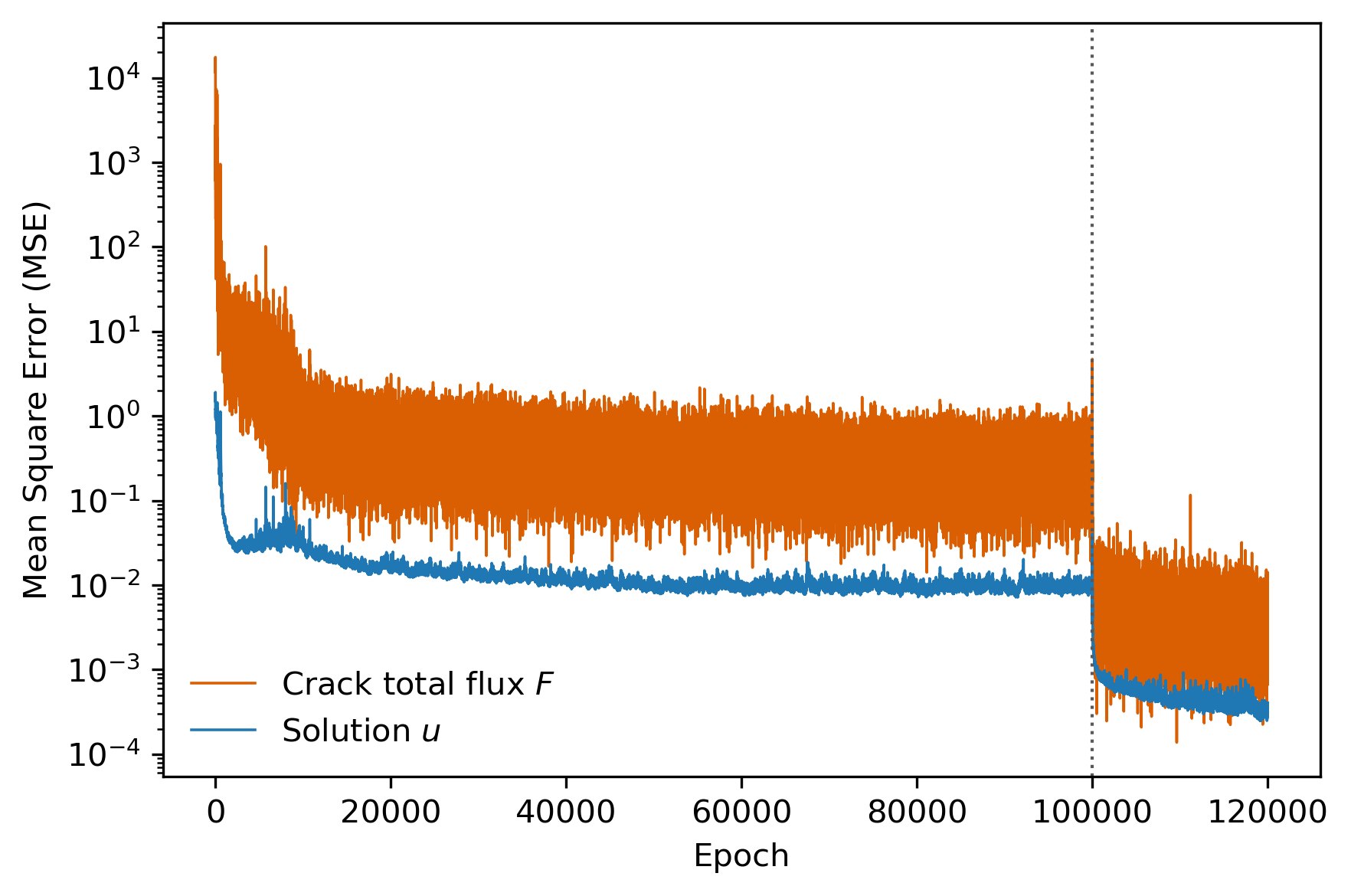}
    \label{fig:2d_loss}
    \end{subfigure}
    \hfill
    \begin{subfigure}{0.48\textwidth}
        \centering
        \includegraphics[width=\linewidth]{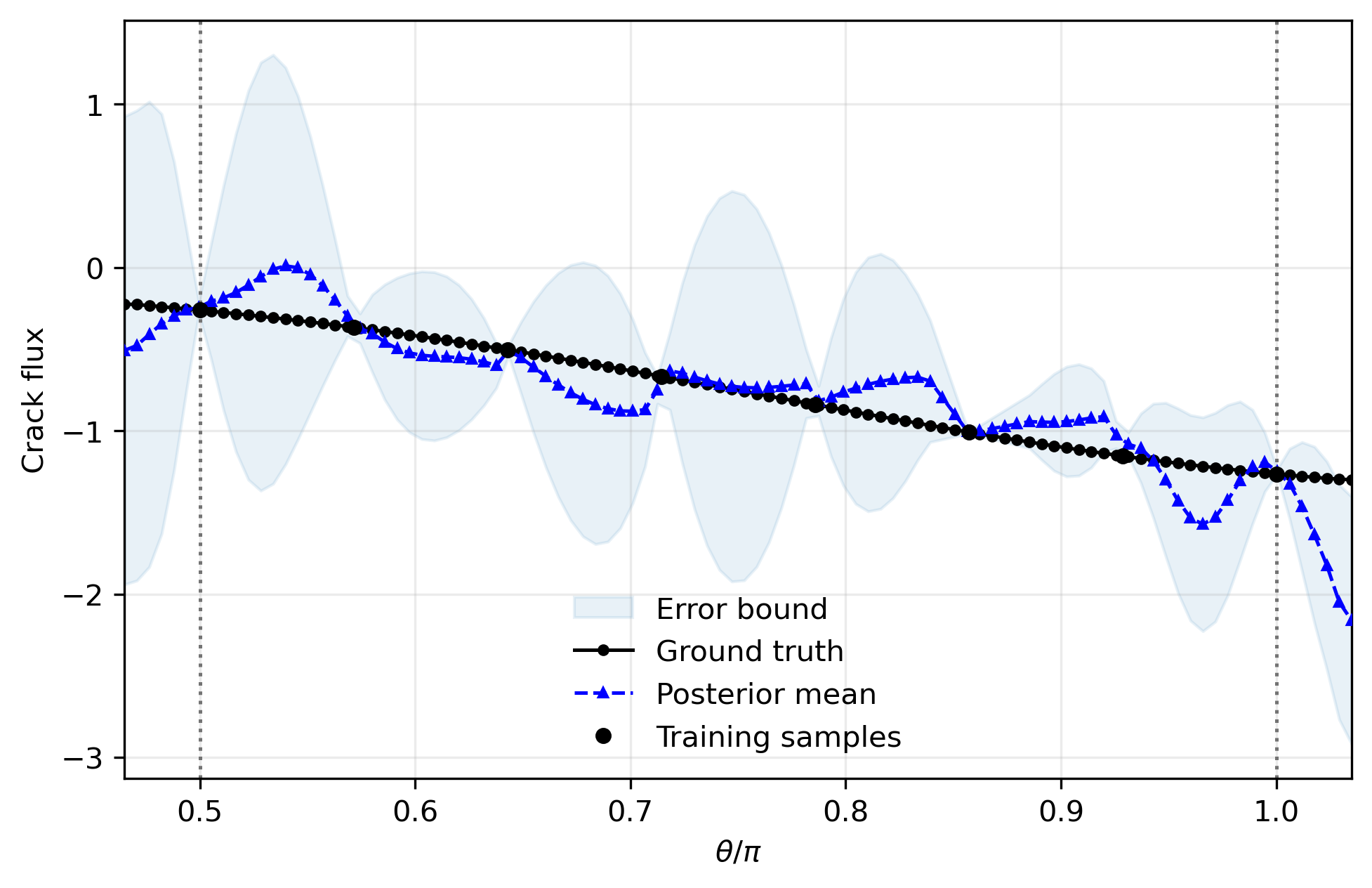}
\label{fig:2d_post}
    \end{subfigure}
    \caption{\textbf{Quantified uncertainty on complex geometry.} \textbf{Top}: The solution to the advection-diffusion equation, conditioned on the advection direction, is recovered. \textbf{Bottom:} Dropout during training provides a high-quality basis; turning off at 100k epochs allows refinement of the final basis (\textit{bottom left}). The error in the flux through the crack at the bottom of the Liberty Bell is bounded by the estimator in \Cref{eq:mse_error_bound} (\textit{bottom right}). }\label{fig:2d_result}
\end{figure}
Results are shown in Fig.~\ref{fig:2d_result}. In the implementation, we first train the transformer for the Whitney forms with a dropout rate of 0.1 for 100,000 epochs and then turn off the dropout for another 20,000 epochs. Empirically, this two-stage training strategy alleviates overfitting and prevents the transformer from getting stuck in local minima during training. Fig.~\ref{fig:2d_pou} visualizes the Whitney 0-forms learned by the neural PoU framework for three different values of the conditioning variables, showing that the learned partitions adapt naturally to the geometric structure. As $\theta$ and $u_{crack}$ increase, some 0-forms tend to localize near the crack interface to capture changes near the crack boundary efficiently, while others tend to align with the flow near other boundaries or within the domain. 
\begin{figure}[htbp]
    \centering
    \begin{subfigure}{0.9\textwidth}
    \centering
    \includegraphics[width=\linewidth]{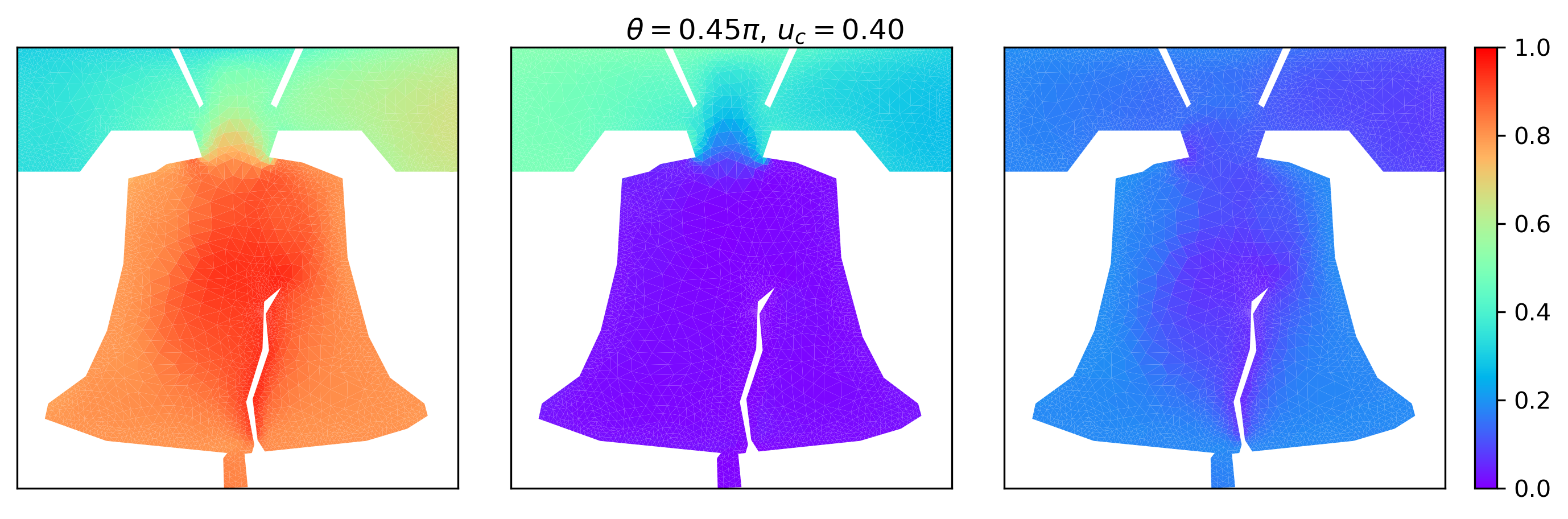}
    \label{fig:2d_ex1}  
    \end{subfigure}
    \begin{subfigure}{0.9\textwidth}
        \centering
        \includegraphics[width=\linewidth]{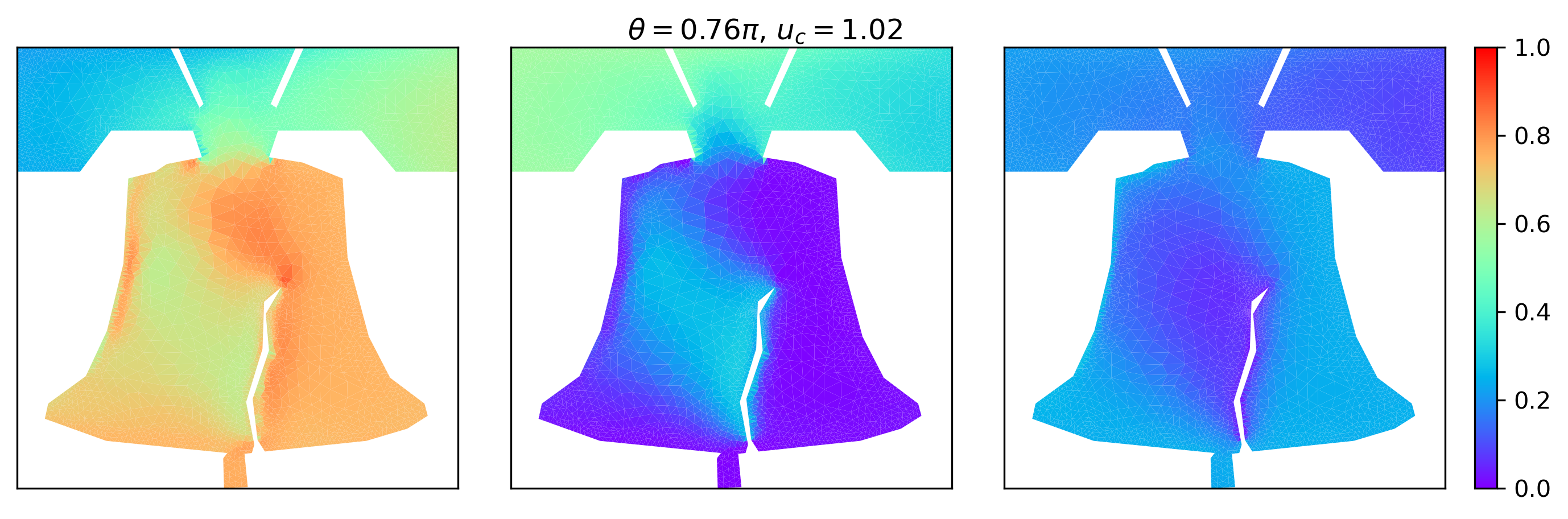}
    \label{fig:2d_ex2}
    \end{subfigure}
    \hfill
    \begin{subfigure}{0.9\textwidth}
        \centering
        \includegraphics[width=\linewidth]{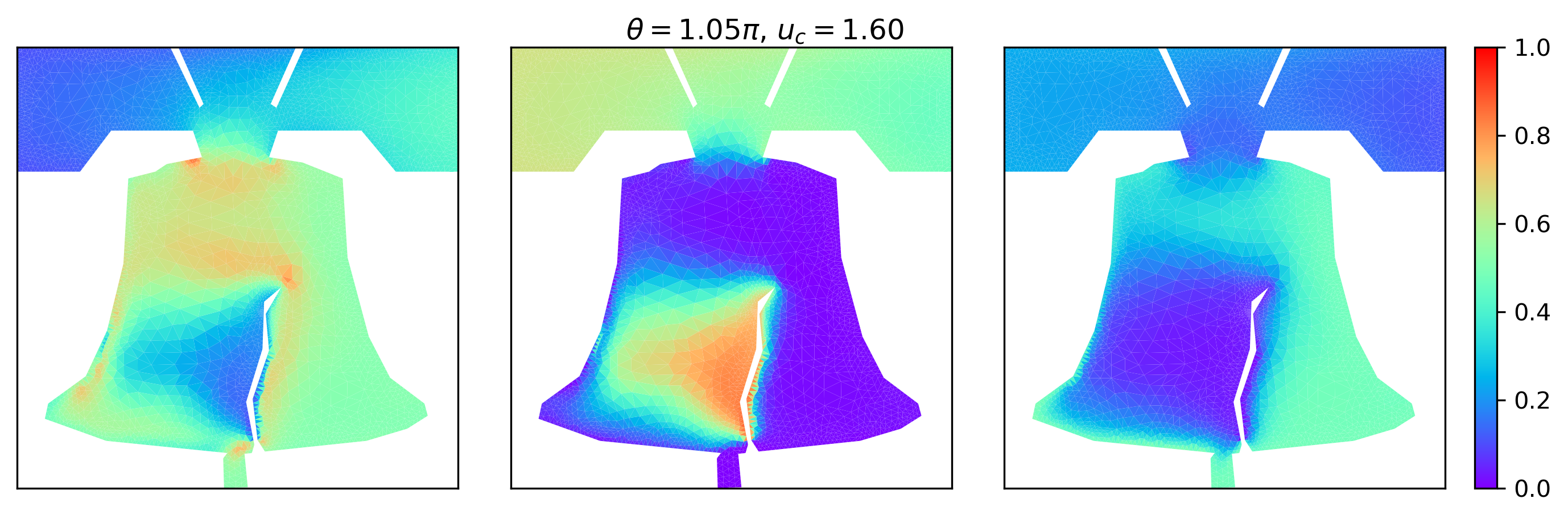}
    \label{fig:2d_ex3}
    \end{subfigure}
    \caption{\textbf{Conditioned basis adapts to advection.} The three basis functions are shown conditioned on angles $\theta \in [0.45\pi, 0.75\pi, 1.05\pi]$, respectively, highlighting the ability of the basis to adapt itself to the relevant conditioning. }
    \label{fig:2d_pou}
\end{figure}

\subsection{$p-n$ diodes}
\label{subsec:diode}
Finally, we consider the $p-n$ diode problem obtained from Charon TCAD simulations~\cite{osti_1575982} on the rectangular device domain $\Omega=(0,L_x)\times (0,L_y)$ with $L_x=1$ and $L_y=0.5$. For each applied anode bias \(V_a\), the data consist of the electrostatic potential, electron and hole current densities, doping profiles, and the terminal currents at the cathode and anode.  The cathode bias is fixed and the anode bias is swept over \([-1,1]\) in thermal-voltage units; multiplication by \(V_0=2.585\times 10^{-2}\,\mathrm{V}\) gives the corresponding physical voltage scale. 

The two-dimensional TCAD fields are reduced to one-dimensional profiles in a way that preserves the terminal-current constraint.  Let \(\varphi(x,y; V_a)\) denote the electrostatic potential and let \(j_x(x,y;V_a)\) denote the net conventional current density in the axial direction.  On the structured \(100\times 10\) cell grid, we define:
\begin{equation}
  u(x;V_a)=\frac{1}{L_y}\int_0^{L_y}\varphi(x,y;V_a)\,dy,
  \qquad
  J(x;V_a)=\int_0^{L_y}j_x(x,y;V_a)\,dy .
  \label{eq:diode_1d_reduction}
\end{equation}
Since the steady drift--diffusion solution satisfies charge conservation (\(\nabla\cdot j=0\)) and the top and bottom boundaries are insulating, integration over the cross section gives:
\begin{equation}
  \frac{d}{dx}J(x;V_a)=0,
  \qquad
  J(x;V_a)\simeq I_a(V_a).
  \label{eq:diode_current_conservation}
\end{equation}

\begin{figure}[htbp]
    \centering
    \begin{subfigure}[t]{0.4\linewidth}
\centering\includegraphics[width=\linewidth]{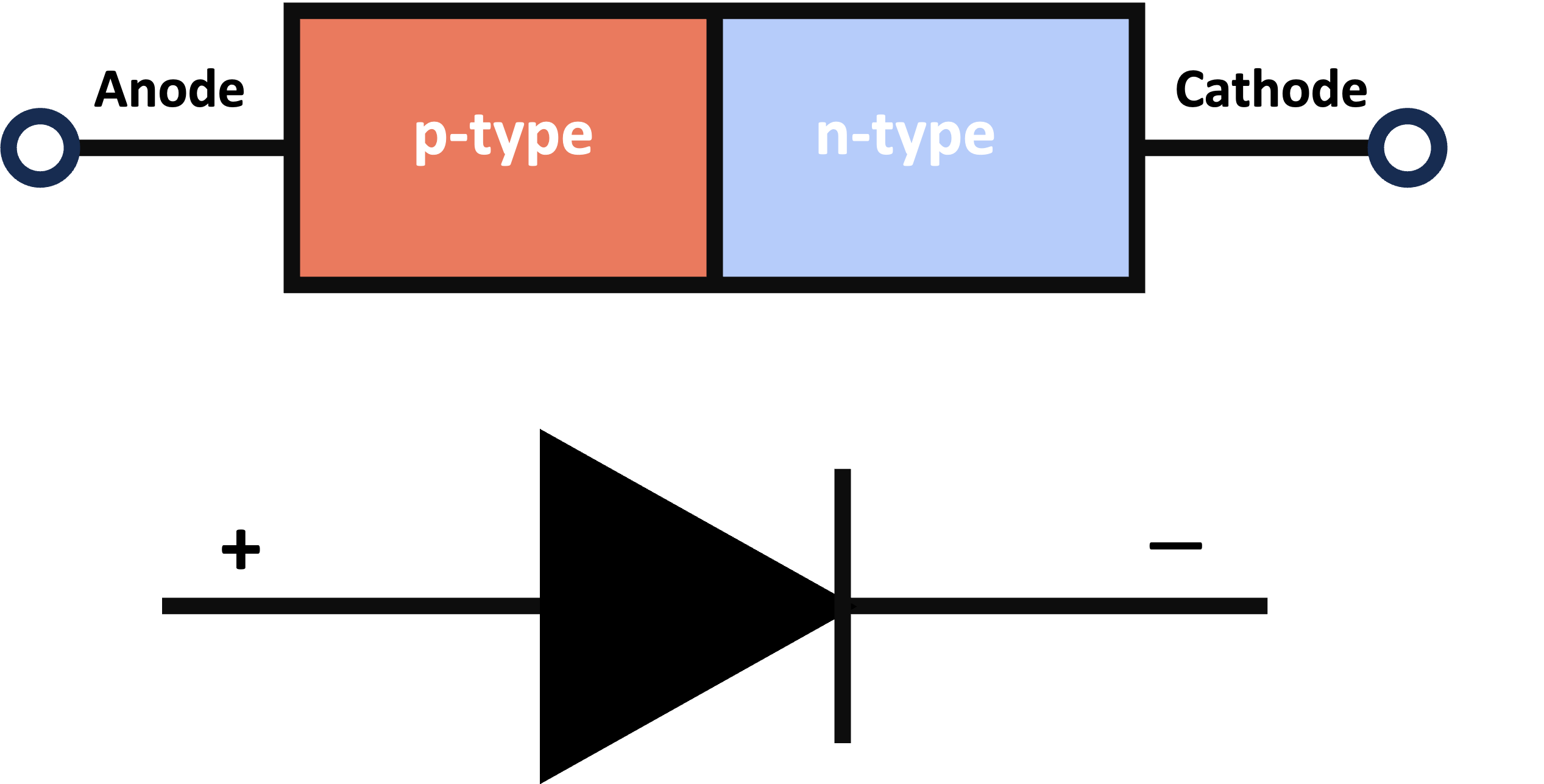}
    \end{subfigure}
    \vspace{0.5em}     
    \begin{subfigure}[c]{\linewidth}
    \includegraphics[width=\linewidth]{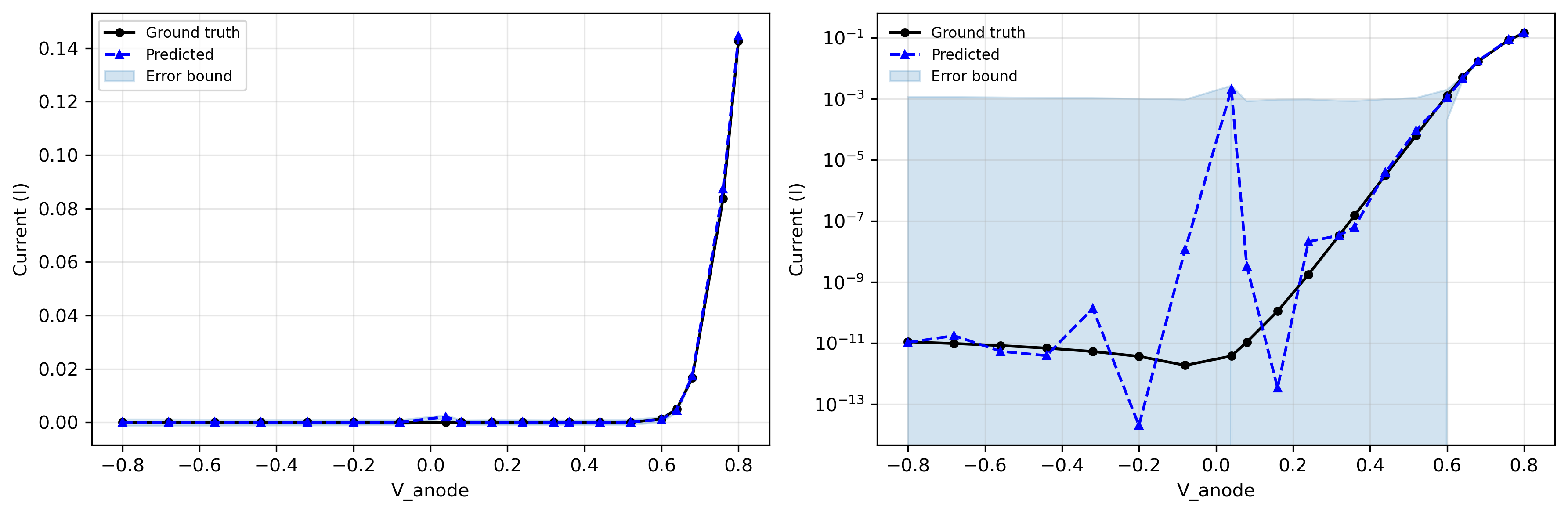}
    \end{subfigure}
    \caption{\textbf{Quantified uncertainty for the I-V response of a semiconductor component.} \textbf{Top:} schematic of the one-dimensional p-n diode and the prescribed anode--cathode orientation. \textbf{Bottom:} Comparison of the ground-truth current and predicted current in original scale (\textit{bottom left}) and log scale (\textit{bottom right}). Despite the appearance of a relatively good current reconstruction on a linear scale (\textit{bottom left}), on a log scale the uncertainty estimator reveals the range of voltages over which the model may be reliably applied.}
    \label{fig:1d_diode}
\end{figure}

We use the same conditional Whitney--GP construction as in previous 1D examples, where the neural Whitney forms are generated by the transformer conditioned on $V_a$. Given that the current values vary over several orders of magnitude, we train the GP on the logarithm of the current magnitude and map the posterior mean back to the original scale by exponentiation, i.e., the posterior mean $\mu(V_a)$ is mapped back to the original current magnitude by $\exp(\mu(V_a))$. The numerical results are shown in Fig.~\ref{fig:1d_diode} with the inference in original scale on the left and in log scale on the right. Despite the fairly good current reconstruction on a linear scale, the uncertainty estimator reveals the range of voltages over which the model may be reliably applied, and flags where the uncertainty becomes large relative to the scale of the quantity of interest. This example should be considered an illustration and guideline for where the proposed model starts to become less trustworthy, i.e., the posterior uncertainty grows rapidly in the exponential regime.
\section{Conclusion}\label{sec:conclusion}
In this work, we developed a structure-preserving neural surrogate framework for PDE-governed systems that incorporates a reduced finite element space and tractable uncertainty quantification with a closed-form posterior error bound. Our method separates the problem into two coupled parts: in \textbf{P1}, the trainable PoU Whitney forms generate the reduced-order $H(\mathrm{div})$--$L^2$ spaces that preserve the conservation law with the induced coarse divergence operator and support the graph interpretation; in \textbf{P2}, the GP learns the nonlinear state-to-flux map on the coarse graph by solving the optimal recovery problem with an equality constraint that enforces the conservation exactly. In particular, the conditional Whitney forms provide an adaptive mechanism that enables generalization across different geometries, and the optimal recovery formulation yields a saddle-point KKT condition with a fast Schur-complement solve. Numerical results demonstrate that the proposed method can reconstruct accurate solution fields and fluxes and provide meaningful uncertainty quantification for boundary flux functionals. In future work, we will generalize the framework to more complex problems and real datasets for practical applications, such as large-scale deployments for climate modeling. In addition, the current structure-preserving construction is based on the de Rham complex and is primarily designed for scalar- and vector-valued problems;  extending the method to handle tensor-valued variables is an important direction for future work.
\section*{Acknowledgments}
H. Zhang and Dr. Kinch acknowledge support from the United States Department of Energy under the Advanced Scientific Computing Research (ASCR) program (award number DE-SC0024563). Drs. Kinch, Owhadi, Propp, and Trask acknowledge funding under the ASCR-sponsored Mathematical Multifaceted Integrated Capability Centers program (award number DE-SC0023163).

\bibliographystyle{siamplain}
\bibliography{references}

@Book{boffi2013mixed,
  author =    {Daniele Boffi and Franco Brezzi and Michel Fortin},
  title =     {Mixed Finite Element Methods and Applications},
  series =    {Springer Series in Computational Mathematics},
  volume =    {44},
  publisher = {Springer},
  address =   {Berlin, Heidelberg},
  year =      {2013},
  doi =       {10.1007/978-3-642-36519-5},
  isbn =      {978-3-642-36518-8}
}

@article{propp2026discovery,
  title={Discovery of Probabilistic Dirichlet-to-Neumann Maps on Graphs},
  author={Propp, Adrienne M and Actor, Jonas A and Walker, Elise and Owhadi, Houman and Trask, Nathaniel and Tartakovsky, Daniel M},
  journal={SIAM Journal on Scientific Computing},
  volume={48},
  number={2},
  pages={C191--C215},
  year={2026},
  publisher={SIAM}
}

@book{Rasmussen2006,
  title     = {Gaussian Processes for Machine Learning},
  author    = {Rasmussen, Carl Edward and Williams, Christopher K. I.},
  year      = {2006},
  publisher = {The MIT Press},
  address   = {Cambridge, Mass.},
  isbn      = {026218253X},
  url       = {http://www.gaussianprocess.org/gpml/}
}

@inproceedings{gupta2018shampoo,
  title={Shampoo: Preconditioned stochastic tensor optimization},
  author={Gupta, Vineet and Koren, Tomer and Singer, Yoram},
  booktitle={International Conference on Machine Learning},
  pages={1842--1850},
  year={2018},
  organization={PMLR}
}

@article{actor2024data,
  title={Data-driven Whitney forms for structure-preserving control volume analysis},
  author={Actor, Jonas A and Hu, Xiaozhe and Huang, Andy and Roberts, Scott A and Trask, Nathaniel},
  journal={Journal of Computational Physics},
  volume={496},
  pages={112520},
  year={2024},
  publisher={Elsevier}
}

@article{owhadi2022computational,
  title={Computational graph completion},
  author={Owhadi, Houman},
  journal={Research in the Mathematical Sciences},
  volume={9},
  number={2},
  pages={27},
  year={2022},
  publisher={Springer}
}

@article{kinch2025structure,
  title={Structure-preserving digital twins via conditional neural whitney forms},
  author={Kinch, Brooks and Shaffer, Benjamin and Armstrong, Elizabeth and Meehan, Michael and Hewson, John and Trask, Nathaniel},
  journal={arXiv preprint arXiv:2508.06981},
  year={2025}
}

@article{trask2022enforcing,
  title={Enforcing exact physics in scientific machine learning: a data-driven exterior calculus on graphs},
  author={Trask, Nathaniel and Huang, Andy and Hu, Xiaozhe},
  journal={Journal of Computational Physics},
  volume={456},
  pages={110969},
  year={2022},
  publisher={Elsevier}
}

@article{chen2021solving,
  title={Solving and learning nonlinear PDEs with Gaussian processes},
  author={Chen, Yifan and Hosseini, Bamdad and Owhadi, Houman and Stuart, Andrew M},
  journal={Journal of Computational Physics},
  volume={447},
  pages={110668},
  year={2021},
  publisher={Elsevier}
}

@inproceedings{harkonen2023gaussian,
  title={Gaussian process priors for systems of linear partial differential equations with constant coefficients},
  author={Harkonen, Marc and Lange-Hegermann, Markus and Raita, Bogdan},
  booktitle={International conference on machine learning},
  pages={12587--12615},
  year={2023},
  organization={PMLR}
}

@article{brunton2024promising,
  title={Promising directions of machine learning for partial differential equations},
  author={Brunton, Steven L and Kutz, J Nathan},
  journal={Nature Computational Science},
  volume={4},
  number={7},
  pages={483--494},
  year={2024},
  publisher={Nature Publishing Group US New York}
}

@article{shi2025survey,
  title={A survey on machine learning approaches for uncertainty quantification of engineering systems},
  author={Shi, Yan and Wei, Pengfei and Feng, Ke and Feng, De-Cheng and Beer, Michael},
  journal={Machine learning for computational science and engineering},
  volume={1},
  number={1},
  pages={11},
  year={2025},
  publisher={Springer}
}

@article{jiao2024solving,
  title={Solving forward and inverse PDE problems on unknown manifolds via physics-informed neural operators},
  author={Jiao, Anran and Yan, Qile and Harlim, Jhn and Lu, Lu},
  journal={arXiv preprint arXiv:2407.05477},
  year={2024}
}

@article{chen2024physics,
  title={Physics-informed neural networks with hard linear equality constraints},
  author={Chen, Hao and Flores, Gonzalo E Constante and Li, Can},
  journal={Computers \& Chemical Engineering},
  volume={189},
  pages={108764},
  year={2024},
  publisher={Elsevier}
}

@article{krishnapriyan2021characterizing,
  title={Characterizing possible failure modes in physics-informed neural networks},
  author={Krishnapriyan, Aditi and Gholami, Amir and Zhe, Shandian and Kirby, Robert and Mahoney, Michael},
  journal={Advances in neural information processing systems},
  volume={34},
  pages={26548--26560},
  year={2021}
}

@article{bonfanti2024challenges,
  title={The challenges of the nonlinear regime for physics-informed neural networks},
  author={Bonfanti, Andrea and Bruno, Giuseppe and Cipriani, Cristina},
  journal={Advances in neural information processing systems},
  volume={37},
  pages={41852--41881},
  year={2024}
}

@article{arnold2010finite,
  title={Finite element exterior calculus: from Hodge theory to numerical stability},
  author={Arnold, Douglas and Falk, Richard and Winther, Ragnar},
  journal={Bulletin of the American mathematical society},
  volume={47},
  number={2},
  pages={281--354},
  year={2010}
}

@article{Arnold_Falk_Winther_2006, 
    title={Finite element exterior calculus, homological techniques, and applications}, 
    volume={15}, 
    DOI={10.1017/S0962492906210018}, 
    journal={Acta Numerica}, 
    author={Arnold, Douglas N. and Falk, Richard S. and Winther, Ragnar}, 
    year={2006}, 
    pages={1–155}
}

@article{deeponetNatureML,
author = {Lu, Lu and Jin, Pengzhan and Pang, Guofei and Zang, Handy and Karniadakis, George},
year = {2021},
month = {03},
pages = {218-229},
title = {Learning nonlinear operators via {DeepONet} based on the universal approximation theorem of operators},
volume = {3},
journal = {Nature Machine Intelligence},
doi = {10.1038/s42256-021-00302-5},
}

@article{karniadakis2021physics,
  title={Physics-informed machine learning},
  author={Karniadakis, George Em and Kevrekidis, Ioannis G and Lu, Lu and Perdikaris, Paris and Wang, Sifan and Yang, Liu},
  journal={Nature Reviews Physics},
  volume={3},
  number={6},
  pages={422--440},
  year={2021},
  publisher={Nature Publishing Group}
}

@article{yu2022gradient,
  title={Gradient-enhanced physics-informed neural networks for forward and inverse {PDE} problems},
  author={Yu, Jeremy and Lu, Lu and Meng, Xuhui and Karniadakis, George Em},
  journal={Computer Methods in Applied Mechanics and Engineering},
  volume={393},
  pages={114823},
  year={2022},
  publisher={Elsevier}
}

@article{
pideeponet,
author = {Sifan Wang  and Hanwen Wang  and Paris Perdikaris },
title = {Learning the solution operator of parametric partial differential equations with physics-informed {DeepONets}},
journal = {Science Advances},
volume = {7},
number = {40},
pages = {eabi8605},
year = {2021},
doi = {10.1126/sciadv.abi8605},
}

@article{li2020fourier,
  title={Fourier neural operator for parametric partial differential equations},
  author={Li, Zongyi and Kovachki, Nikola and Azizzadenesheli, Kamyar and Liu, Burigede and Bhattacharya, Kaushik and Stuart, Andrew and Anandkumar, Anima},
  journal={arXiv preprint arXiv:2010.08895},
  year={2020}
}

@article{xu2021accurate,
  title={Accurate remaining useful life prediction with uncertainty quantification: a deep learning and nonstationary gaussian process approach},
  author={Xu, Zhaoyi and Guo, Yanjie and Saleh, Joseph Homer},
  journal={IEEE Transactions on Reliability},
  volume={71},
  number={1},
  pages={443--456},
  year={2021},
  publisher={IEEE}
}

@article{psaros2023uncertainty,
  title={Uncertainty quantification in scientific machine learning: Methods, metrics, and comparisons},
  author={Psaros, Apostolos F and Meng, Xuhui and Zou, Zongren and Guo, Ling and Karniadakis, George Em},
  journal={Journal of Computational Physics},
  volume={477},
  pages={111902},
  year={2023},
  publisher={Elsevier}
}

@article{abdar2021review,
  title={A review of uncertainty quantification in deep learning: Techniques, applications and challenges},
  author={Abdar, Moloud and Pourpanah, Farhad and Hussain, Sadiq and Rezazadegan, Dana and Liu, Li and Ghavamzadeh, Mohammad and Fieguth, Paul and Cao, Xiaochun and Khosravi, Abbas and Acharya, U Rajendra and others},
  journal={Information fusion},
  volume={76},
  pages={243--297},
  year={2021},
  publisher={Elsevier}
}

@article{fresca2022pod,
  title={POD-DL-ROM: Enhancing deep learning-based reduced order models for nonlinear parametrized PDEs by proper orthogonal decomposition},
  author={Fresca, Stefania and Manzoni, Andrea},
  journal={Computer Methods in Applied Mechanics and Engineering},
  volume={388},
  pages={114181},
  year={2022},
  publisher={Elsevier}
}

@article{kapteyn2022data,
  title={Data-driven physics-based digital twins via a library of component-based reduced-order models},
  author={Kapteyn, Michael G and Knezevic, David J and Huynh, DBP and Tran, Minh and Willcox, Karen E},
  journal={International Journal for Numerical Methods in Engineering},
  volume={123},
  number={13},
  pages={2986--3003},
  year={2022},
  publisher={Wiley Online Library}
}

@article{jung2025accelerating,
  title={Accelerating high-fidelity simulations of chemically reacting flows using reduced-order modeling with time-dependent bases},
  author={Jung, Ki Sung and Lacey, Cristian E and Babaee, Hessam and Chen, Jacqueline H},
  journal={Computer Methods in Applied Mechanics and Engineering},
  volume={437},
  pages={117758},
  year={2025},
  publisher={Elsevier}
}

@BOOK{NAP29212,
  author    = {{National Academies of Sciences, Engineering, and Medicine} and
               {Division on Engineering and Physical Sciences} and
               {Board on Mathematical Sciences and Analytics} and
               {Committee on Foundation Models for Scientific Discovery and Innovation}},
  title     = {Foundation Models for Scientific Discovery and Innovation: Opportunities Across the Department of Energy and the Scientific Enterprise},
  isbn      = {978-0-309-99500-9},
  doi       = {10.17226/29212},
  url       = {https://nap.nationalacademies.org/catalog/29212/foundation-models-for-scientific-discovery-and-innovation-opportunities-across-the},
  year      = {2025},
  publisher = {The National Academies Press},
  address   = {Washington, DC}
}

@article{fan2023two,
  title={Two-dimensional semiconductor integrated circuits operating at gigahertz frequencies},
  author={Fan, Dongxu and Li, Weisheng and Qiu, Hao and Xu, Yifei and Gao, Si and Liu, Lei and Li, Taotao and Huang, Futao and Mao, Yun and Zhou, Wenbin and others},
  journal={Nature Electronics},
  volume={6},
  number={11},
  pages={879--887},
  year={2023},
  publisher={Nature Publishing Group UK London}
}

@InProceedings{pmlr-v107-aadithya20a,
  title = 	 {Data-driven Compact Models for Circuit  Design and Analysis},
  author =       {Aadithya, K. and Kuberry, P. and Paskaleva, B. and Bochev, P. and Leeson, K. and Mar, A. and Mei, T. and Keiter, E.},
  booktitle = 	 {Proceedings of The First Mathematical and Scientific Machine Learning Conference},
  pages = 	 {555--569},
  year = 	 {2020},
  editor = 	 {Lu, Jianfeng and Ward, Rachel},
  volume = 	 {107},
  series = 	 {Proceedings of Machine Learning Research},
  month = 	 {20--24 Jul},
  publisher =    {PMLR},
  url = 	 {https://proceedings.mlr.press/v107/aadithya20a.html},
}

@INPROCEEDINGS{9385620,
  author={Ruderman, Michael and Kaltenbacher, Stefan and Horn, Martin},
  booktitle={2021 IEEE International Conference on Mechatronics (ICM)}, 
  title={Pressure-flow dynamics with semi-stable limit cycles in hydraulic cylinder circuits}, 
  year={2021},
  volume={},
  number={},
  pages={1-6},
  doi={10.1109/ICM46511.2021.9385620}}

@book{vacca2021hydraulic,
  title={Hydraulic fluid power: Fundamentals, applications, and circuit design},
  author={Vacca, Andrea and Franzoni, Germano},
  year={2021},
  publisher={John Wiley \& Sons}
}

@article{wang2017microscale,
  title={Microscale solid-state thermal diodes enabling ambient temperature thermal circuits for energy applications},
  author={Wang, Song and Cottrill, Anton L and Kunai, Yuichiro and Toland, Aubrey R and Liu, Pingwei and Wang, Wen-Jun and Strano, Michael S},
  journal={Physical Chemistry Chemical Physics},
  volume={19},
  number={20},
  pages={13172--13181},
  year={2017},
  publisher={Royal Society of Chemistry}
}

@article{micchelli1977survey,
  title={A survey of optimal recovery},
  author={Micchelli, Charles A and Rivlin, Theodore J},
  journal={Optimal estimation in approximation theory},
  pages={1--54},
  year={1977},
  publisher={Springer}
}

@techreport{musson2022charon,
  title={Charon user manual: v. 2.2 (revision1)},
  author={Musson, Lawrence and Hennigan, Gary and Gao, Xujiao and Humphreys, Richard and Negoita, Mihai and Huang, Andy},
  year={2022},
  institution={Sandia National Laboratories (SNL-NM), Albuquerque, NM (United States)}
}

@article{song2026structure,
  title={Structure-Aware Epistemic Uncertainty Quantification for Neural Operator PDE Surrogates},
  author={Song, Haoze and Li, Zhihao and Deng, Mengyi and Li, Xin and Pan, Duyi and Lai, Zhilu and Wang, Wei},
  journal={arXiv preprint arXiv:2603.11052},
  year={2026}
}

@article{yang2019adversarial,
  title={Adversarial uncertainty quantification in physics-informed neural networks},
  author={Yang, Yibo and Perdikaris, Paris},
  journal={Journal of Computational Physics},
  volume={394},
  pages={136--152},
  year={2019},
  publisher={Elsevier}
}

@article{carlberg2018conservative,
  title={Conservative Model Reduction for Finite-Volume Models},
  author={Carlberg, Kevin and Choi, Youngsoo and Sargsyan, Syuzanna},
  journal={Journal of Computational Physics},
  volume={371},
  pages={280--314},
  year={2018},
  doi={10.1016/j.jcp.2018.05.019},
  publisher={Elsevier}
}

@article{benner2015survey,
  title={A Survey of Projection-Based Model Reduction Methods for Parametric Dynamical Systems},
  author={Benner, Peter and Gugercin, Serkan and Willcox, Karen},
  journal={SIAM Review},
  volume={57},
  number={4},
  pages={483--531},
  year={2015},
  doi={10.1137/130932715},
  publisher={SIAM}
}

@book{quarteroni2015reduced,
  title={Reduced Basis Methods for Partial Differential Equations: An Introduction},
  author={Quarteroni, Alfio and Manzoni, Andrea and Negri, Federico},
  series={Unitext},
  volume={92},
  publisher={Springer},
  address={Cham},
  year={2016},
  doi={10.1007/978-3-319-15431-2}
}

@conference{osti_1575982,
  author       = {Musson, Lawrence and Gao, Xujiao and Negoita, Mihai and Huang, Andy and Hennigan, Gary L.},
  title        = {Charon: A Radiation Aware Massively Parallel TCAD Modeling Code.},
  url          = {https://www.osti.gov/biblio/1575982},
  place        = {United States},
  organization = {Sandia National Laboratories (SNL-NM), Albuquerque, NM (United States)},
  year         = {2018},
  month        = {11}}

\newpage

\appendix
\section{Proof for \Cref{prop:rank}}\label{app:proof_prop_rank}
\begin{proof}For a boundary edge $e\in\mathcal E_\gamma$, the divergence $\nabla\!\cdot\boldsymbol\phi_e^1$ is supported entirely on its unique adjacent cell $K(e)$ and is zero everywhere else. Let $\beta_e^\gamma=(\nabla\!\cdot\boldsymbol\phi_e^1,\phi_{K(e)}^{P_0})_\Omega$. Since the coarse 0-form and boundary 1-form are defined as 
\begin{equation*}
    \psi_i^0=\sum_{a\in\mathcal C_h} W_{ia}\phi_a^{P_0},\qquad \boldsymbol\psi_{\alpha,\gamma}^{1,\mathrm{bc}}
=
\sum_{e\in\mathcal E_\gamma}W_{\alpha,K(e)}\boldsymbol\phi_e^1,
\end{equation*}

we have
\begin{align*}
    (D_\gamma)_{i\alpha}&=
(\nabla\!\cdot\boldsymbol\psi_{\alpha,\gamma}^{1,\mathrm{bc}},\psi_i^0)_\Omega \\
&=
\sum_{e\in\mathcal E_\gamma}
W_{\alpha,K(e)}
(\nabla\!\cdot\boldsymbol\phi_e^1,\psi_i^0)_\Omega \\
&=\sum_{e\in\mathcal E_\gamma}
W_{\alpha,K(e)}W_{i,K(e)}
(\nabla\!\cdot\boldsymbol\phi_e^1,\phi_{K(e)}^{P_0})_\Omega\\
&=\sum_{e\in\mathcal E_\gamma}
\beta_e^\gamma W_{\alpha,K(e)}W_{i,K(e)}
\end{align*}
This shows $D_\gamma=W_\gamma B_\gamma W_\gamma^\top$, where $(W_\gamma)_{i e}=W_{i,K(e)},\ B_\gamma=\operatorname{diag}\{\beta_e^\gamma:e\in\mathcal E_\gamma\}$.

Then it remains to check the sufficient condition. Let $$D_\partial=[D_1\;D_2\;\cdots\;D_r]$$ denote the boundary block of the divergence matrix $D$, collecting the boundary groups $\gamma=1,\dots,r$. Given $D\in\mathbb{R}^{N_0\times N_1}$, we have $$\operatorname{rank}(D)\le N_0.$$
If $D_\partial$ has full row rank $N_0$, since $D_\partial$ is the column submatrix of $D$ corresponding to the boundary basis functions, we have
$$\operatorname{rank}(D)
\ge\operatorname{rank}(D_\partial)=N_0,$$ and therefore $\operatorname{rank}(D)=N_0$. 

Let $G_\partial=\sum_{\gamma=1}^r W_\gamma B_\gamma W_\gamma^\top$ be positive definite. For any $x\in\mathbb R^{N_0}$ such that $x^\top D_\partial=0$, we have 
$$0=x^\top\left(\sum_{\gamma=1}^rD_\gamma\right)x=x^\top G_\partial x \Longrightarrow x=0$$
since $G_\partial\succ0$. Hence \(\operatorname{Null}(D_\partial^\top)=\{0\}\), and by the rank--nullity theorem, \(D_\partial\) has full row rank \(N_0\).
\end{proof}

\end{document}